\documentclass[11pt]{article}
\usepackage[utf8]{inputenc}
\usepackage{amsmath, amssymb, amsthm}
\usepackage{geometry}
\usepackage{xcolor}
\usepackage{booktabs}
\usepackage{hyperref}
\usepackage{natbib}
\usepackage{graphicx}
\usepackage{tikz}
\usetikzlibrary{positioning, arrows.meta}
\usepackage{float}

\newtheorem{theorem}{Theorem}
\theoremstyle{remark}
\newtheorem{remark}{Remark}

\hypersetup{
    colorlinks=true,
    linkcolor=blue,
    filecolor=magenta,      
    urlcolor=cyan,
    citecolor=purple,
}

\usepackage{fancyhdr}
\usepackage{enumitem}
\newcounter{boxlblcounter}  
\newenvironment{boxlabel}
  {\begin{list}
    {\arabic{boxlblcounter}}
    {\usecounter{boxlblcounter}
     \setlength{\labelwidth}{3em}
     \setlength{\labelsep}{0em}
     \setlength{\itemsep}{2pt}
     \setlength{\leftmargin}{1.5cm}
     \setlength{\rightmargin}{2cm}
     \setlength{\itemindent}{0em} 
     
    }
  }
{\end{list}}

\title{\textbf{Gaussian Mixture Attention: Linear-Time Sequence Mixing via Probabilistic Latent Routing}}
\author{
    Yongchao Huang, Hassan Raza
}
\author{Yongchao Huang\footnote{yongchao.huang@abdn.ac.uk} \and Hassan Raza\footnote{h.raza.24@abdn.ac.uk}}
\date{16/05/2026}

\begin{document}

\maketitle

\begin{abstract}
The dense token-to-token interaction pattern of standard dot-product attention remains a central bottleneck in scaling Transformer architectures to long contexts. We introduce \textbf{Gaussian Mixture Attention (GMA)}, a probabilistic attention-style sequence mixer that replaces explicit pairwise query--key comparison with routing through $K$ learned Gaussian mixture components. Queries and keys are mapped to posterior \textit{responsibility} vectors over a shared latent routing space; their overlap defines an implicit responsibility-space affinity, while values are written into and read from a $K$-slot latent memory. By exploiting the associativity of matrix multiplication, GMA avoids materializing the induced $N\times N$ affinity matrix and instead uses two responsibility matrices whose dominant activation storage scales as $\mathcal{O}(NK)$ rather than $\mathcal{O}(N^2)$ for fixed $K$. We formulate bidirectional and causal variants of GMA, provide an end-to-end differentiable parameterization of the Gaussian mixture components, and analyze its responsibility-modulated gradient structure, constrained non-negative low-rank affinity interpretation, and local routing stability. Empirically, GMA exhibits the intended fixed-$K$ linear memory scaling and is competitive with attention-style baselines on long-context classification, while causal GMA improves over tested linear/random-feature attention variants on WikiText-103 but remains behind optimized causal SDPA and Mamba in the current implementation. Analysis of learned responsibilities further shows broad component usage and moderate alignment with surface-form token categories, supporting GMA as a probabilistic, interpretable, fixed-$K$ linear-time attention-style alternative rather than a universal replacement for optimized softmax attention or state-space models.
\end{abstract}

\section{Introduction}
\label{sec:intro}

The \textit{Transformer} architecture has achieved strong performance across language, vision, and multimodal learning, driven largely by the representational power and parallelism of dot-product Multi-Head Attention (MHA) \citep{vaswani2017attention,dosovitskiy2020image,radford2021learning}. However, standard self-attention computes pairwise interactions between all $N$ tokens in a sequence. For one attention head, let $Q\in\mathbb{R}^{N\times d_k}$ denote the query matrix, $K_{\mathrm{att}}\in\mathbb{R}^{N\times d_k}$ denote the key matrix, and $V\in\mathbb{R}^{N\times d_v}$ denote the value matrix. Here $d_k$ is the query/key channel dimension used to compute dot-product scores, while $d_v$ is the value dimension of the vectors being aggregated. Scaled dot-product attention computes
\begin{equation}
\label{eq:dot_product_attention}
    O =
    \operatorname{softmax}
    \left(
    \frac{QK_{\mathrm{att}}^\top}{\sqrt{d_k}}
    \right)V ,
    \qquad
    O\in\mathbb{R}^{N\times d_v}.
\end{equation}
The intermediate score matrix $QK_{\mathrm{att}}^\top\in\mathbb{R}^{N\times N}$ contains token-to-token scores between all query and key positions. Although the final output has dimension $O\in\mathbb{R}^{N\times d_v}$, computing dense attention requires forming or implicitly representing interactions between all pairs of positions. In the standard explicit formulation, this leads to $\mathcal{O}(N^2)$ attention-score storage and $\mathcal{O}(N^2d_k+N^2d_v)$ arithmetic for the score and value multiplications. When $d_k$ and $d_v$ are treated as fixed, this gives the familiar \textit{quadratic scaling in sequence length}. This quadratic dependence makes long-context modelling expensive and motivates efficient alternatives for settings such as long-document processing, byte-level classification, high-resolution vision, genomics, and autoregressive language modelling \citep{beltagy2020longformer,zaheer2020bigbird,tay2020LRA,wang2020linformer, choromanski2021rethinking,katharopoulos2020transformers,avsec2021effective}.

A large body of work has attempted to reduce the quadratic attention bottleneck through sparse attention patterns, low-rank projections, kernel approximations, optimized exact-attention kernels, and recurrent or state-space sequence models \citep{tay2022efficient,beltagy2020longformer,zaheer2020bigbird, wang2020linformer,katharopoulos2020transformers,choromanski2021rethinking, dao2022flashattention,gu2021efficiently,gu2023mamba}. Sparse attention methods, such as \textit{Longformer} and \textit{BigBird}, reduce computation by restricting each token to attend to a subset of local, global, or structured positions \citep{beltagy2020longformer,zaheer2020bigbird}. Low-rank methods such as \textit{Linformer} compress the attention matrix through learned projections along the sequence dimension \citep{wang2020linformer}. Kernel-based methods replace the softmax attention kernel with feature-map constructions, including deterministic positive feature maps in \textit{Linear Transformer} and random-feature approximations in \textit{Performer} \citep{katharopoulos2020transformers,choromanski2021rethinking}. IO-aware implementations such as \textit{FlashAttention} retain exact softmax attention while reducing memory traffic through hardware-aware tiling \citep{dao2022flashattention}. More recently, structured state-space and selective state-space models, including \textit{S4} and \textit{Mamba}, have pursued linear or near-linear sequence modelling by replacing explicit attention with recurrent or state-space dynamics \citep{gu2021efficiently,gu2023mamba}. These approaches expose different trade-offs: they can be highly efficient, but their internal routing structure is often less directly probabilistic or less immediately interpretable as a token-to-token distribution than the original softmax attention weights.

In this work, we propose \textbf{Gaussian Mixture Attention (GMA)}, a probabilistic alternative to dot-product attention that reconceptualizes sequence mixing as \textit{latent responsibility-based routing}. Instead of explicitly computing all pairwise token-to-token similarities, GMA introduces $K$ learned Gaussian mixture components in a projected \textit{routing representation space}. Query and key representations are mapped to posterior responsibility vectors over these components. Key responsibilities write values into a latent memory $\tilde V\in\mathbb{R}^{K\times d_v}$ together with a component-wise normalizer $Z\in\mathbb{R}^K$, while query responsibilities read from this normalized latent memory to produce token-level outputs. Thus, GMA replaces the explicit $N\times N$ token-to-token attention matrix with two non- negative \textit{responsibility matrices} $\Gamma^Q,\Gamma^K\in\mathbb{R}^{N\times K}$. Although the induced affinity $\Gamma^Q(\Gamma^K)^\top$ is still an $N\times N$ matrix algebraically, GMA does not materialize it. Instead, it exploits the associativity of matrix multiplication by computing the key--value latent memory $(\Gamma^K)^\top V_X$ first and then multiplying by $\Gamma^Q$. For fixed $K$, this gives linear-in-$N$ dominant activation scaling while retaining a normalized attention-style routing interpretation.

A key motivation for GMA is that the responsibility matrices are not merely computational intermediates: they are analyzable probabilistic objects. The marginal usage of mixture components can diagnose whether the latent routing space is broadly used or collapsed to a small subset of components, while hard assignments derived from $z_i=\arg\max_k\gamma_{i,k}$ can be compared with token categories or other annotations. This provides an interpretability handle that is less direct in random-feature attention approximations or implicit recurrent/state-space hidden dynamics. Our later analysis shows that learned GMA responsibilities use most available components and exhibit moderate alignment with surface-form token categories, although the components should not be interpreted as clean semantic classes (Section~\ref{sec:gma_interpretability}).

Our contributions and findings are as follows:
\begin{enumerate}
    \item We introduce \textit{Gaussian Mixture Attention} (GMA), a normalized responsibility-based sequence mixer that replaces explicit token-to-token attention with routing through learned Gaussian mixture components.

    \item We derive both bidirectional and causal GMA. The causal variant uses prefix latent memories and prefix normalizers so that autoregressive predictions at position $i$ depend only on positions $j\leq i$, while preserving fixed-$K$ linear-in-sequence-length scaling.

    \item We analyze GMA's optimization and representational structure, including responsibility-modulated gradients, a constrained non-negative low-rank affinity interpretation, and local Lipschitz continuity under bounded inputs and variance lower bounds.

    \item We evaluate GMA in 4 empirical settings: controlled systems profiling (Table~\ref{tab:efficiency}), Long Range Arena (LRA) long-context classification \citep{tay2020LRA} (Table~\ref{tab:lra_accuracy}), WikiText-103 autoregressive language modelling \citep{merity2016pointer} (Table~\ref{tab:wikitext_results}), and latent-responsibility interpretability analysis (Table~\ref{tab:gma_interpretability}; Figures~\ref{fig:gma_gammaq_heatmap}--\ref{fig:gma_category_heatmap}). The results show that GMA exhibits the intended linear memory scaling and achieves competitive performance among attention-style baselines evaluated in our pipeline. On LRA, it gives the strongest average performance among those attention-style baselines (Table~\ref{tab:lra_accuracy}). On WikiText-103, causal GMA improves over Linear Transformer \citep{katharopoulos2020transformers} and Performer \citep{choromanski2021rethinking}, although optimized causal SDPA \citep{vaswani2017attention} and Mamba \citep{gu2023mamba} remain stronger (Table~\ref{tab:wikitext_results}). Finally, learned GMA responsibilities use most available components and show moderate alignment with surface-form token categories (Table~\ref{tab:gma_interpretability}; Figures~\ref{fig:gma_gammaq_heatmap}--\ref{fig:gma_category_heatmap}).

    \item We discuss future extensions of the GMA framework, including optimized and hybrid GMA implementations, cross-attention and multimodal routing, Bayesian and Dirichlet-process variants for adaptive component weighting, and probabilistic Mixture-of-Experts routing.
\end{enumerate}

\section{Related Work}
\label{sec:related_work}

\paragraph{Attention as learned compatibility.}
Attention mechanisms can be viewed broadly as methods for computing \textit{compatibility} between \textit{query representations} and \textit{context representations}, and then using the resulting weights to aggregate values. Early neural encoder--decoder models used learned alignment mechanisms to focus decoding on relevant source positions \citep{bahdanau2015neural,luong2015effective}. The \textit{Transformer} made \textit{scaled dot-product attention} the dominant formulation by computing token-to-token scores through $QK_{\mathrm{att}}^\top$ and applying row-wise softmax normalization \citep{vaswani2017attention}, as in Eq.~\eqref{eq:dot_product_attention}. This design is highly expressive and parallelizable: every token can form query--key scores against all other tokens using batched matrix multiplication, and the value aggregation can likewise be computed by dense linear algebra on modern accelerators. This parallel structure is a major reason for the success of \textit{Transformer} architectures, but it also requires forming or implicitly representing an $N\times N$ interaction pattern, giving the familiar quadratic scaling in sequence length.

From a broader design perspective, the dot product is only one possible compatibility function. Attention scores may also be based on learned additive scores, kernels, distances, divergences, sparsity patterns, or latent routing structures. \textit{GMA} follows this broader view by replacing direct pairwise query--key comparison with compatibility in a probabilistic responsibility space. A more explicit taxonomy of similarity, distance, divergence, and latent-space attention designs is given in Appendix~\ref{app:attention_similarity_distance}.

\paragraph{Sparse and structured attention.}
One major line of efficient-Transformer research reduces the number of query--key pairs that are compared. \textit{Sparse Transformer} uses structured sparse patterns to generate long sequences more efficiently than dense attention \citep{child2019generating}. \textit{Reformer} replaces dense dot-product attention with locality-sensitive hashing attention and also uses reversible residual layers to reduce activation storage \citep{kitaev2020reformer}. \textit{Longformer} combines a local sliding-window pattern with task-motivated global attention, giving linear scaling in sequence length and strong performance on long-document tasks \citep{beltagy2020longformer}. \textit{BigBird} combines local, random, and global attention patterns, obtaining linear sparse attention while preserving important theoretical properties of full attention, including universal approximation and Turing-completeness results under its sparse pattern \citep{zaheer2020bigbird}. These methods retain token-to-token attention but restrict the attention graph. Their efficiency therefore depends on a designed or sampled sparsity pattern. \textit{GMA} differs by keeping dense global information available through latent mixture components rather than choosing a sparse set of token pairs.

\paragraph{Low-rank and landmark approximations.}
A second line approximates the attention matrix by a lower-dimensional representation. \textit{Linformer} argues that self-attention can be approximated by a low-rank matrix and projects the sequence dimension to reduce the complexity of
attention to linear time and space \citep{wang2020linformer}. \textit{Nystr\"omformer} uses the Nystr\"om method to approximate self-attention through a set of landmark points, also targeting linear complexity with favorable long-sequence performance \citep{xiong2021nystromformer}. These approaches are related to \textit{GMA} in that they replace the full $N\times N$ interaction structure by a smaller intermediate representation. However, their intermediate factors are projection- or landmark-based approximations of attention. \textit{GMA} instead uses posterior responsibilities under learned Gaussian mixture components, producing a constrained non-negative low-rank affinity $\Gamma^Q(\Gamma^K)^\top$ with a probabilistic routing interpretation.

\paragraph{Kernelized and associative linear attention.}
\textit{Kernelized attention} methods exploit the associativity of matrix multiplication. \textit{Linear Transformer} rewrites attention using feature maps so that the key--value summary can be computed before multiplying by queries, reducing the cost from quadratic to linear in sequence length \citep{katharopoulos2020transformers}. \textit{Performer} approximates softmax attention using positive orthogonal random features in the FAVOR+ mechanism, giving linear space and time with theoretical guarantees for the random-feature approximation \citep{choromanski2021rethinking}. These methods are especially close to GMA at the computational level: all use an ordering of the form ``summarize keys and values first, then read with queries''. The difference lies in the feature map. Kernelized attention uses deterministic or random feature maps designed to approximate a kernel such as softmax. \textit{GMA} uses learned Gaussian-mixture posterior responsibilities as the feature map, so the intermediate representation is a probability vector over latent components. This gives \textit{GMA} an additional diagnostic object, i.e. the responsibility matrix, but also introduces extra constant factors from Gaussian log-density and responsibility computations.

\paragraph{IO-aware exact attention.}
Another important direction does not change the mathematical attention definition, but optimizes its implementation. \textit{FlashAttention} computes exact softmax attention using IO-aware tiling, reducing reads and writes between GPU high-bandwidth memory and on-chip SRAM \citep{dao2022flashattention}. Later \textit{FlashAttention} variants further improve hardware utilization through more aggressive scheduling and low-precision support \citep{shah2024flashattention3}. These methods are essential baselines because they show that quadratic attention can be much faster and more memory efficient in practice without changing the underlying attention weights. \textit{GMA} addresses a different question: it changes the attention-style operator itself to obtain fixed-$K$ linear activation scaling and interpretable latent routing. Consequently, the relevant comparison is not only asymptotic complexity but also wall-clock efficiency, kernel maturity, memory traffic, and model quality.

\paragraph{Latent bottleneck attention.}
Latent-array architectures reduce the cost of processing large inputs by routing information through a smaller set of latent variables. The \textit{Perceiver} uses asymmetric cross-attention to distill high-dimensional inputs into a latent bottleneck and then processes the latent array with Transformer blocks \citep{jaegle2021perceiver}. \textit{Perceiver} IO extends this idea to flexible structured outputs by querying the latent representation with output-specific queries \citep{jaegle2021perceiverio}. \textit{GMA} shares the high-level idea that a smaller latent substrate can mediate interactions among many input positions. However, the latent objects are different. \textit{Perceiver} uses learned latent vectors that attend to inputs through cross-attention. GMA uses learned Gaussian mixture components in routing space and maps each query/key token to a posterior responsibility vector over these components. The resulting latent memory is not a separate learned sequence, but a key-weighted aggregation of values through mixture responsibilities.

\paragraph{State-space and recurrent alternatives to attention.}
Structured state-space models provide a separate route to long-sequence modelling by replacing explicit attention with recurrent or convolutional sequence dynamics. \textit{S4} introduced structured state-space layers for efficient long-range sequence modelling \citep{gu2021efficiently}. \textit{Mamba} builds on this line by making state-space parameters input-dependent, allowing selective propagation or forgetting of information, and by using hardware-aware parallel algorithms for efficient training and inference \citep{gu2023mamba}. These models often provide strong throughput and strong performance, especially in long-sequence or discrete-token settings. They are not attention mechanisms in the usual token-to-token sense, and their internal routing structure is less directly exposed as an attention matrix. \textit{GMA} occupies a different position: it remains an attention-style sequence mixer with an induced token-to-token affinity, while introducing a probabilistic latent routing space that can be inspected through responsibility diagnostics.

\paragraph{Mixture models and probabilistic attention.}
Mixture models have a long history as probabilistic tools for soft assignment, density modelling, and responsibility-based learning \citep{dempster1977maximum,bishop1994mixture,mclachlan2000finite,bishop2006pattern}. Several recent works have explored connections between mixture models and attention. \textit{Transformer-MGK} (Transformer with a Mixture of Gaussian Keys) replaces redundant attention heads with a mixture of Gaussian keys at each head, allowing each head to focus on different parts of the input while reducing parameters and computation \citep{nguyen2021improving}. However, the mixture is local to the key representation inside an attention head, and the model still relies on query--key attention computations unless combined with separate linear-attention approximations. In machine translation, \textit{GMM-based cross-attention} has been used to model concentrated attention around central source positions, improving alignment quality and long-sentence translation behaviour \citep{zhang2021modeling}. These works show that Gaussian mixtures can be useful inside attention, but they do not use a shared global mixture responsibility space to replace the explicit $N\times N$ token-to-token attention matrix.

\textit{GMA} differs from these mixture-based approaches in three ways. First, it uses a shared global Gaussian mixture over routing vectors, rather than a local mixture attached to each position or attention head. Second, both queries and keys are mapped into responsibility vectors over the same mixture components, inducing the non-negative affinity $\widetilde A^{\mathrm{GMA}}=\Gamma^Q(\Gamma^K)^\top$. Third, the efficient implementation does not materialize this affinity; it uses the associative ordering $(\Gamma^K)^\top V_X$ followed by multiplication with $\Gamma^Q$. Thus, GMA is not merely a Gaussian modification of attention scores. It combines probabilistic responsibility-space similarity with latent-memory routing to
obtain fixed-$K$ linear activation scaling.

\paragraph{Mixture-of-Experts and probabilistic routing.}
\textit{GMA} is also related to the broader literature on conditional computation and Mixture-of-Experts (MoE). Classical MoE models combine multiple expert predictors through input-dependent gates \citep{jacobs1991adaptive}. Modern sparsely gated MoE models scale Transformer capacity by routing each token to a small number of feed-forward experts \citep{shazeer2017outrageously, lepikhin2020gshard,fedus2022switch}. Recent work also explores MoE-style routing inside the attention mechanism itself, for example by switching among attention heads or computing fewer attention matrices \citep{csordas2023switchhead}. These methods typically use learned softmax or noisy top-$k$ gates and are motivated by conditional computation. \textit{GMA}'s routing is different: the gating weights are posterior responsibilities under a learned Gaussian mixture model. This makes the routing probabilities density-based and interpretable as soft assignments to latent components. Although our present work uses GMA for sequence mixing rather than sparse expert selection, the same responsibility view suggests future probabilistic MoE extensions, as discussed in Section~\ref{sec:future_extensions}.

\paragraph{Positioning of GMA.}
Overall, prior work has reduced the quadratic attention bottleneck by sparsifying the attention graph, approximating the attention matrix, kernelizing the softmax, optimizing exact attention kernels, introducing latent bottlenecks, or replacing attention with state-space dynamics. \textit{GMA} contributes a complementary design point. It treats attention as routing through a learned probabilistic latent space: queries and keys are first converted into Gaussian-mixture responsibilities, values are written into a latent memory by key responsibilities, and queries read from that memory through normalized responsibility overlap. This gives \textit{GMA} a fixed-$K$ linear-time attention-style operator with an analyzable latent responsibility structure. Its purpose is not to replace all optimized attention or state-space models, but to expose a probabilistic and interpretable route to efficient sequence mixing.

\section{Methodology: Gaussian Mixture Attention}
\label{sec:GMA_method}

We present GMA primarily in the \textit{self-attention} setting, where queries, keys, and values are all projected from the same sequence $X$. For clarity, the main equations are written for a single attention head; a multi-head GMA layer applies the same computation independently across heads and then combines the resulting head outputs in the usual way. This formulation covers the bidirectional sequence-mixing experiments and the causal language-modelling experiments considered later in this paper. \textit{Cross-attention} follows analogously by forming queries from a query-side sequence $X$ and keys/values from a separate context sequence $Y$; the same responsibility-space routing mechanism can then be applied with $\Gamma_X^Q$ and $\Gamma_Y^K$. A detailed distinction between standard self-attention, standard cross-attention, and their GMA counterparts is given in Appendix~\ref{app:self_cross_gma}. We leave a systematic empirical study of GMA cross-attention to future work.

\subsection{Method Overview}
Standard self-attention relies on direct pairwise token-to-token comparisons, which gives the familiar quadratic interaction pattern in sequence length. GMA changes this computation by introducing a latent memory architecture based on $K$ learned Gaussian routing components. Instead of comparing each query with each key directly, GMA maps projected queries and keys to posterior responsibility vectors over a shared Gaussian mixture.

Intuitively, the projected value matrix $V_X$ stores token-level information. The key responsibility matrix $\Gamma^K$ acts as a soft writing mechanism, routing value vectors into $K$ latent memory slots. This produces a latent memory $\tilde V$ together with a component-wise normalizer $Z$. Query responsibilities $\Gamma^Q$ then act as a soft reading mechanism, determining how each output token reads from the normalized latent memory. Thus, queries do not explicitly compare against all keys in the efficient implementation; instead, query--key compatibility is mediated through shared responsibility vectors over the latent Gaussian components. A consolidated notation table for the methodology and analysis is provided in Appendix~\ref{app:gma_notation}.

\subsection{Probabilistic Latent Routing}
\label{sec:prob_latent_routing}

Instead of computing direct token-to-token similarities, we introduce $K$ learned latent centers, representing Gaussian routing components. We model these centers as the components of a Gaussian Mixture Model (GMM) \cite{huang2025GMA_sampling,huang2026GJE}. The $k$-th component is parameterized by a mean $\mu_k \in \mathbb{R}^{d_r}$ and a covariance matrix $\Sigma_k \in \mathbb{S}_{++}^{d_r}$, where $d_r$ is the routing dimension. For computational tractability, we use diagonal covariances, $\Sigma_k=\operatorname{diag}(\sigma_{k,1}^2,\ldots,\sigma_{k,d_r}^2)$. The  mixture prior of component $k$ is $\pi_k$, with $\sum_{k=1}^K \pi_k=1$.

For a routing vector $x_i\in\mathbb{R}^{d_r}$, which may be either a query projection or a key projection, the density under component $k$ is
\begin{equation}
\label{eq:gma_gaussian_density}
    \mathcal{N}(x_i \mid \mu_k,\Sigma_k)
    =
    \frac{1}{(2\pi)^{d_r/2}|\Sigma_k|^{1/2}}
    \exp\left(
    -\frac{1}{2}
    (x_i-\mu_k)^\top\Sigma_k^{-1}(x_i-\mu_k)
    \right).
\end{equation}
Let $z_i\in\{1,\ldots,K\}$ denote the latent component assignment for $x_i$. The responsibility of component $k$ for $x_i$ is the posterior probability
$p(z_i=k\mid x_i)$:
\begin{equation}
\label{eq:gma_responsibility}
    \gamma_{i,k}
    \equiv
    p(z_i=k\mid x_i)
    =
    \frac{
    p(z_i=k)p(x_i\mid z_i=k)
    }{
    p(x_i)
    }
    =
    \frac{
    \pi_k \mathcal{N}(x_i\mid\mu_k,\Sigma_k)
    }{
    \sum_{\ell=1}^K
    \pi_\ell \mathcal{N}(x_i\mid\mu_\ell,\Sigma_\ell)
    }.
\end{equation}
This operation maps a sequence of routing vectors into a responsibility matrix $\Gamma\in\mathbb{R}^{N\times K}$. Since $\sum_{k=1}^K\gamma_{i,k}=1$ for every token $i$, each row of $\Gamma$ is a probability distribution over the latent Gaussian components.

\subsection{Linear-Time Normalized Attention via GMA}
\label{sec:linear_gma}

Let $X\in\mathbb{R}^{N\times d_{\mathrm{model}}}$ denote the input sequence representations to a GMA layer, where $N$ is the sequence length and $d_{\mathrm{model}}$ is the model hidden dimension, i.e., the dimensionality of each token representation before it is projected into queries, keys, and values. As in standard attention, we first form query, key, and value projections:
\begin{equation}
\label{eq:gma_qkv_projection}
    Q_X = XW_Q \in \mathbb{R}^{N\times d_r},
    \qquad
    K_X = XW_K \in \mathbb{R}^{N\times d_r},
    \qquad
    V_X = XW_V \in \mathbb{R}^{N\times d_v},
\end{equation}
where
$W_Q,W_K\in\mathbb{R}^{d_{\mathrm{model}}\times d_r}$ and $W_V\in\mathbb{R}^{d_{\mathrm{model}}\times d_v}$. Here $d_r$ is the routing dimension used to compute Gaussian responsibilities, and $d_v$ is the value dimension. We reserve scalar $K$ for the number of Gaussian mixture components.

Given $Q_X$ and $K_X$, GMA computes two responsibility matrices:
\[
    \Gamma^Q \in \mathbb{R}^{N\times K},
    \qquad
    \Gamma^K \in \mathbb{R}^{N\times K}.
\]
The entry $\gamma^Q_{i,k}$ is the posterior responsibility of component $k$ for the query vector $q_i=(Q_X)_{i,:} \in \mathbb{R}^{d_r}$, while $\gamma^K_{j,k}$ is the posterior responsibility of component $k$ for the key vector $k_j=(K_X)_{j,:} \in \mathbb{R}^{d_r}$. These responsibilities are calculated using the Gaussian-mixture posterior in
Eq.~\eqref{eq:gma_responsibility}, applied separately to the query and key projections. The superscript $K$ in $\Gamma^K$ denotes key responsibilities, not the number of mixture components.

To construct an end-to-end $\mathcal{O}(NK)$ attention mechanism, GMA avoids forming the full $N\times N$ token-to-token attention matrix. Instead, it uses the key responsibilities to write the projected values $V_X$ into $K$ latent memory slots, and then uses the query responsibilities to read from those slots. To make the induced attention weights stochastic over the sequence dimension, we normalize the latent routing operation. The computation proceeds in two associative steps.

\begin{enumerate}
    \item \textbf{Latent aggregation.} The key responsibilities aggregate the value vectors into $K$ latent memory slots and compute the total key mass assigned to each component:
    \begin{equation}
    \label{eq:gma_latent_aggregation}
        \tilde V
        =
        (\Gamma^K)^\top V_X
        \in \mathbb{R}^{K\times d_v},
        \qquad
        Z
        =
        (\Gamma^K)^\top \mathbf{1}_N
        \in \mathbb{R}^{K},
    \end{equation}
    where $\mathbf{1}_N$ denotes the all-ones vector. Component-wise\footnote{From a linear algebra perspective, the $k$-th row of $\tilde V$, i.e. $\tilde V_k$, is a linear combination of all rows of $V_{X}$, with weights being the $k$-th column of $\Gamma^K$.},
    \begin{equation}
    \label{eq:gma_latent_aggregation_componentwise}
        \tilde V_k
        =
        \sum_{j=1}^N \gamma^K_{j,k} V_{X,j}
        \in \mathbb{R}^{d_v},
        \qquad
        Z_k
        =
        \sum_{j=1}^N \gamma^K_{j,k}.
    \end{equation}
    Thus, $\tilde V_k$ is the responsibility-weighted sum of all value vectors assigned to component $k$, and $Z_k$ is the total key-responsibility mass assigned to that component.

    \item \textbf{Normalized broadcasting.} The query responsibilities read from
    the latent memory and normalize by the corresponding routed key mass:
    \begin{equation}
    \label{eq:gma_normalized_broadcasting}
        O
        =
        \frac{\Gamma^Q \tilde V}{\Gamma^Q Z+\epsilon}
        =
        \boldsymbol{
        \frac{\Gamma^Q (\Gamma^K)^\top V_X}{\Gamma^Q Z+\epsilon}
        }
        =
        A^{\mathrm{GMA}} V_X
        \in \mathbb{R}^{N\times d_v},
    \end{equation}
    where we have defined the implicit normalized affinity $A^{\mathrm{GMA}}=\frac{\Gamma^Q (\Gamma^K)^\top}{\Gamma^Q Z+\epsilon}$, which is useful for interpretation and analysis, but not materialized in the efficient implementation. The denominator is broadcast across the value dimension, the division is applied \textit{row-wise}, and $\epsilon>0$ is used for numerical stability\footnote{Equivalently, one may write the row-wise normalization in
    matrix form. Define
    \[
        d
        =
        \Gamma^Q Z+\epsilon\mathbf{1}_N
        \in\mathbb{R}^{N}.
    \]
    Then
    \[
        O
        =
        \operatorname{diag}(d)^{-1}\Gamma^Q\tilde V
        =
        \operatorname{diag}(d)^{-1}
        \Gamma^Q(\Gamma^K)^\top V_X .
    \]
    The corresponding implicit normalized affinity matrix is
    \[
        A^{\mathrm{GMA}}
        =
        \operatorname{diag}(d)^{-1}
        \Gamma^Q(\Gamma^K)^\top .
    \]
    This matrix is useful for interpretation and analysis, but it is not
    materialized in the efficient implementation.}.
\end{enumerate}

Specifically, for token $i$, the output can be written as
\begin{equation}
\label{eq:gma_output_componentwise}
    O_i
    =
    \frac{
    \sum_{k=1}^K \gamma^Q_{i,k}\tilde V_k
    }{
    \sum_{k=1}^K \gamma^Q_{i,k}Z_k
    +\epsilon}
    =
    \frac{
    \sum_{k=1}^K \gamma^Q_{i,k}
    \sum_{j=1}^N \gamma^K_{j,k} V_{X,j}
    }{
    \sum_{k=1}^K \gamma^Q_{i,k}
    \sum_{j=1}^N \gamma^K_{j,k}
    +\epsilon
    }.
\end{equation}
Thus, GMA induces the normalized attention weights
\begin{equation}
\label{eq:gma_induced_attention}
    A^{\mathrm{GMA}}_{ij}
    =
    \frac{
    \sum_{k=1}^K \gamma^Q_{i,k}\gamma^K_{j,k}
    }{
    \sum_{\ell=1}^N
    \sum_{k=1}^K
    \gamma^Q_{i,k}\gamma^K_{\ell,k}
    +\epsilon
    },
\end{equation}
so that 
\begin{equation} \label{eq:gma_attention}
    O_i=\sum_{j=1}^N A^{\mathrm{GMA}}_{ij}V_{X,j}
\end{equation}
up to the numerical stabilizer. Thus, GMA replaces the explicit $N\times N$ attention matrix by two non-negative responsibility matrices $\Gamma^Q,\Gamma^K\in\mathbb{R}^{N\times K}$, while still admitting the implicit normalized affinity in Eq.~\eqref{eq:gma_induced_attention}.

\begin{remark}[\textit{Implicit affinity versus explicit attention}]
Equation~\eqref{eq:gma_normalized_broadcasting} can be read in two equivalent ways. Algebraically, substituting $\tilde V=(\Gamma^K)^\top V_X$ gives the implicit affinity $\Gamma^Q(\Gamma^K)^\top$, which is an $N\times N$ token-to-token matrix, analogous in shape to the dot-product score matrix $QK^\top$ in Eq.~\eqref{eq:dot_product_attention}. However, GMA \textit{does not} need to materialize this $N\times N$ matrix. Its efficient implementation uses the associative ordering
\[
    \tilde V
    =
    (\Gamma^K)^\top V_X
    \in \mathbb{R}^{K\times d_v} \quad \text{(Eq.\ref{eq:gma_latent_aggregation})}
    \quad\text{followed by}\quad
    \Gamma^Q \tilde V,
\]
together with the normalizer $\Gamma^Q Z+\epsilon$. Thus, GMA first \textit{compresses} the values into $K$ latent memory slots and then reads from those slots, rather than first forming all pairwise query--key interactions. This is the source of the fixed-$K$ linear scaling: the computation is organized around $N\times K$ responsibility matrices and a $K\times d_v$ latent memory, with $K$ chosen independently of the sequence length and typically much smaller than $N$.
\end{remark}

The whole GMA workflow is visually represented in Fig.\ref{fig:gma_flow}. This ordering reflects a different routing philosophy from standard dot-product attention in Eq.~\eqref{eq:dot_product_attention}. Standard attention first evaluates query--key similarity by forming the token-to-token interaction matrix $QK^\top$, normalizes this matrix into attention weights, and then uses those weights to collect information from the value matrix $V$. Informally, each query first ``knocks on'' all keys to decide where relevant content is located, and then retrieves the corresponding values.  
GMA reverses this order. It first uses the key projection $K_X$ to compute key responsibilities $\Gamma^K$, which write the value projection $V_X$ into a compact latent memory $\tilde V=(\Gamma^K)^\top V_X$. The query responsibilities $\Gamma^Q$ then read directly from this $K$-slot memory through normalized broadcasting. Thus, standard attention follows a ``query--key matching then value collection'' principle, whereas GMA follows a ``key-induced memory construction then query-based memory reading'' principle.

\begin{figure}[htbp]
\centering
\begin{tikzpicture}[
    x=1cm, y=1cm,
    every node/.style={font=\fontsize{7}{8}\selectfont},
    flowbox/.style={
        draw, rounded corners,
        align=center,
        fill=gray!10,
        text width=3.9cm,
        minimum height=1.35cm,
        inner sep=3pt
    },
    parambox/.style={
        draw, rounded corners,
        align=center,
        fill=green!12,
        text width=3.9cm,
        minimum height=1.35cm,
        inner sep=3pt
    },
    memorybox/.style={
        draw, rounded corners,
        align=center,
        fill=blue!12,
        text width=3.9cm,
        minimum height=1.35cm,
        inner sep=3pt,
        thick
    },
    outbox/.style={
        draw, rounded corners,
        align=center,
        fill=violet!12,
        text width=3.9cm,
        minimum height=1.35cm,
        inner sep=3pt
    },
    arrow/.style={->, >=Latex, thick}
]

\def\xL{-5.7}
\def\xC{0}
\def\xR{5.7}

\def\yTop{0}
\def\yProj{-2.1}
\def\yResp{-4.7}
\def\yMem{-7.2}
\def\yNorm{-8.6}
\def\yOut{-7.9}


\node[flowbox] (X) at (\xC,\yTop)
{\textbf{Input Sequence} $(X)$\\
$\mathbb{R}^{N\times d_{\mathrm{model}}}$};

\node[flowbox] (QX) at (\xL,\yProj)
{\textbf{Query Projection} $(Q_X)$\\
$Q_X = XW_Q$\\
$\mathbb{R}^{N\times d_r}$};

\node[flowbox] (GammaQ) at (\xL,\yResp)
{\textbf{Read Responsibilities} $(\Gamma^Q)$\\
via Eq.~\eqref{eq:gma_responsibility}\\
$\mathbb{R}^{N\times K}$};

\node[outbox] (O) at (\xL,\yOut)
{\textbf{Output} $(O)$\\
$\displaystyle O=\frac{\Gamma^Q \tilde V}{\Gamma^Q Z+\epsilon}$\\
$\mathbb{R}^{N\times d_v}$};

\node[flowbox] (KX) at (\xC,\yProj)
{\textbf{Key Projection} $(K_X)$\\
$K_X = XW_K$\\
$\mathbb{R}^{N\times d_r}$};

\node[parambox] (GMM) at (\xC,\yResp)
{\textbf{Shared GMM Parameters}\\
$\{\pi_k,\mu_k,\Sigma_k\}_{k=1}^K$};

\node[flowbox] (VX) at (\xR,\yProj)
{\textbf{Value Projection} $(V_X)$\\
$V_X = XW_V$\\
$\mathbb{R}^{N\times d_v}$};

\node[flowbox] (GammaK) at (\xR,\yResp)
{\textbf{Write Responsibilities} $(\Gamma^K)$\\
via Eq.~\eqref{eq:gma_responsibility}\\
$\mathbb{R}^{N\times K}$};

\node[memorybox] (Vtilde) at (\xR,\yMem)
{\textbf{Latent Memory} $(\tilde V)$\\
$\tilde V = (\Gamma^K)^\top V_X$\\
$\mathbb{R}^{K\times d_v}$};

\node[flowbox] (Z) at (\xR,\yNorm)
{\textbf{Normalizer} $(Z)$\\
$Z = (\Gamma^K)^\top \mathbf{1}_N$\\
$\mathbb{R}^{K}$};

\coordinate (outerR1) at ([xshift=1.45cm]VX.east);
\coordinate (outerR2) at ([xshift=1.45cm]GammaK.east);


\draw[arrow] (X.south west) -- (QX.north);
\draw[arrow] (X.south) -- (KX.north);
\draw[arrow] (X.south east) -- (VX.north);

\draw[arrow] (QX.south) -- (GammaQ.north);
\draw[arrow] (KX.south east) -- (GammaK.north west);

\draw[arrow] (GMM.west) -- (GammaQ.east);
\draw[arrow] (GMM.east) -- (GammaK.west);

\draw[arrow]
    (VX.east) -- (outerR1)
    |- ([yshift=0.0cm]Vtilde.east);

\draw[arrow]
    (GammaK.south) -- (Vtilde.north)
    node[midway,left] {write};

\draw[arrow]
    (GammaK.east) -- ([xshift=-0.5cm]outerR2)
    |- ([yshift=0.0cm]Z.east);

\draw[arrow]
    (GammaQ.south) -- (O.north)
    node[midway,left] {read};

\draw[arrow]
    (Vtilde.west) -| ([xshift=1.05cm,yshift=0.0cm]O.east)
    -- ([yshift=0cm]O.east);

\draw[arrow]
    (Z.west) -| ([xshift=1.05cm,yshift=0cm]O.east)
    -- ([yshift=0cm]O.east);

\end{tikzpicture}
\caption{End-to-end workflow of Gaussian Mixture Attention (GMA). The input sequence $X$ is first projected into query, key, and value representations $Q_X$, $K_X$, and $V_X$. A shared Gaussian Mixture Model with parameters $\{\pi_k,\mu_k,\Sigma_k\}_{k=1}^K$ maps $Q_X$ and $K_X$ to query and key responsibility matrices, $\Gamma^Q$ and $\Gamma^K$, using the posterior responsibility formula in Eq.~\eqref{eq:gma_responsibility}. The key responsibilities write the projected values $V_X$ into a latent memory $\tilde V$ and a component-wise normalizer $Z$. The query responsibilities then read from this normalized latent memory to produce the output sequence $O$. This replaces explicit $N\times N$ token-to-token attention with routing through $K$ latent Gaussian components.}
\label{fig:gma_flow}
\end{figure}

\subsection*{\textit{Understanding the GMA Workflow}}
\label{sec:understanding_gma_workflow}

The GMA computation can be summarized as \emph{responsibility-space affinity plus latent-memory routing}. Unlike standard dot-product attention, GMA does not directly score a query and key by the dot product $q_i^\top k_j$. Instead, it first maps both query and key representations into posterior responsibility vectors over the same $K$ learned Gaussian mixture components.

For a routing vector $x$, the responsibility of component $k$ is (Eq.\ref{eq:gma_responsibility})
\begin{equation*}
\label{eq:gma_workflow_responsibility}
    \gamma_k(x)
    =
    p(z=k\mid x)
    =
    \frac{
    \pi_k\mathcal{N}(x\mid \mu_k,\Sigma_k)
    }{
    \sum_{\ell=1}^K
    \pi_\ell\mathcal{N}(x\mid \mu_\ell,\Sigma_\ell)
    }.
\end{equation*}
For queries and keys, this gives
\[
    \gamma^Q_{i,k}=p(z=k\mid q_i),
    \qquad
    \gamma^K_{j,k}=p(z=k\mid k_j).
\]
Thus, the query vector $q_i$ is represented by the responsibility vector
\[
    \gamma^Q_i
    =
    (\gamma^Q_{i,1},\ldots,\gamma^Q_{i,K}),
\]
and the key vector $k_j$ is represented by
\[
    \gamma^K_j
    =
    (\gamma^K_{j,1},\ldots,\gamma^K_{j,K}).
\]
GMA compares query and key positions through the \textit{overlap} of these responsibility vectors:
\[
    \widetilde A^{\mathrm{GMA}}_{ij}
    =
    \sum_{k=1}^K
    \gamma^Q_{i,k}\gamma^K_{j,k}
    =
    \langle \gamma^Q_i,\gamma^K_j\rangle,
\]
which defines the un-normalized attention weight matrix (as compared to its normalised version Eq.\ref{eq:gma_induced_attention})
\begin{equation}
\label{eq:gma_induced_attention_unnormalised}
    \widetilde A^{\mathrm{GMA}}
    =
    \Gamma^Q(\Gamma^K)^\top.
\end{equation}
Therefore, two tokens have high GMA affinity if their query and key representations assign high posterior probability mass to similar Gaussian components.

The second part of the workflow is latent-memory routing, which avoids materializing the full pairwise affinity matrix in the efficient implementation. Rather than explicitly materializing the full $N\times N$ affinity matrix $\widetilde A^{\mathrm{GMA}}$, the key responsibilities first write values into a $K$-slot latent memory (Eq.\ref{eq:gma_latent_aggregation}):
\[
    \tilde V
    =
    (\Gamma^K)^\top V_X
    \in \mathbb{R}^{K\times d_v},
    \qquad
    Z
    =
    (\Gamma^K)^\top \mathbf{1}_N
    \in \mathbb{R}^{K}.
\]
Component-wise (Eq.\ref{eq:gma_latent_aggregation_componentwise}),
\[
    \tilde V_k
    =
    \sum_{j=1}^N
    \gamma^K_{j,k}V_{X,j},
    \qquad
    Z_k
    =
    \sum_{j=1}^N
    \gamma^K_{j,k}.
\]
Thus, $\tilde V_k$ is the \textit{memory slot} associated with Gaussian component $k$: it stores a responsibility-weighted sum of the value vectors whose keys are assigned to that component.

Finally, the query responsibilities read from this key-induced latent memory (Eq.\ref{eq:gma_normalized_broadcasting}):
\[
    O
    =
    \frac{
    \Gamma^Q\tilde V
    }{
    \Gamma^QZ+\epsilon
    }.
\]
For a single output token $i$, this becomes (Eq.\ref{eq:gma_output_componentwise})
\[
    O_i
    =
    \frac{
    \sum_{k=1}^K
    \gamma^Q_{i,k}\tilde V_k
    }{
    \sum_{k=1}^K
    \gamma^Q_{i,k}Z_k+\epsilon
    }
    =
    \frac{
    \sum_{k=1}^K
    \gamma^Q_{i,k}
    \sum_{j=1}^N
    \gamma^K_{j,k}V_{X,j}
    }{
    \sum_{k=1}^K
    \gamma^Q_{i,k}
    \sum_{j=1}^N
    \gamma^K_{j,k}
    +\epsilon
    }.
\]
Equivalently, GMA induces \textit{normalized} attention-style weights (Eq.\ref{eq:gma_induced_attention})
\[
    A^{\mathrm{GMA}}_{ij}
    =
    \frac{
    \sum_{k=1}^K
    \gamma^Q_{i,k}\gamma^K_{j,k}
    }{
    \sum_{\ell=1}^N
    \sum_{k=1}^K
    \gamma^Q_{i,k}\gamma^K_{\ell,k}
    +\epsilon
    },
\]
so that the output for query position $i$ is (Eq.\ref{eq:gma_attention})
\[
    O_i
    =
    \sum_{j=1}^N
    A^{\mathrm{GMA}}_{ij}V_{X,j}.
\]

In short, GMA first compares query and key positions in \textit{responsibility space}, through
$
    \widetilde A^{\mathrm{GMA}}_{ij}
    =
    \langle \gamma^Q_i,\gamma^K_j\rangle,
$
but it avoids forming all such pairwise affinities explicitly. Instead, it computes
$
    \tilde V=(\Gamma^K)^\top V_X
$
first, and then reads this key-induced memory using
$
    O
    =
    \frac{\Gamma^Q\tilde V}{\Gamma^QZ+\epsilon}.
$
So GMA is best understood as \textit{responsibility-space affinity followed by latent-memory routing}, rather than as dot-product matching against memory components. Appendix~\ref{app:understand_GMA} gives a more dimension-explicit walkthrough of the same computation.

\subsection{Memory and computational costs} \label{subsec:memory_and_compute_costs}
For one sequence and one attention head, assume the \textit{diagonal} covariance parameterization above and an associative implementation that does not materialize $A^{\mathrm{GMA}}$. Excluding the standard linear projections used to form $Q_X,K_X,V_X$, the main GMA routing operations have the following costs. Computing Gaussian-mixture responsibilities for both query and key projections costs approximately
\[
    2NKd_r + 2NK,
\]
where the first term accounts for evaluating diagonal Gaussian log-densities over $K$ components for both $Q_X$ and $K_X$, and the second term accounts for the row-wise normalization over components. The write step $\tilde V=(\Gamma^K)^\top V_X$ costs $NKd_v$, and computing $Z=(\Gamma^K)^\top\mathbf{1}_N$ costs $NK$. The read step $\Gamma^Q\tilde V$ costs $NKd_v$, computing the denominator $\Gamma^Q Z+\epsilon$ costs $NK$, and the final row-wise normalization of the output costs $Nd_v$. Thus, the total routing cost is
\begin{equation}
\label{eq:gma_total_cost}
    \mathcal{C}_{\mathrm{GMA}}
    \approx
    2NKd_r
    +
    2NKd_v
    +
    4NK
    +
    Nd_v .
\end{equation}
Equivalently,
\[
    \mathcal{C}_{\mathrm{GMA}}
    =
    \mathcal{O}(NKd_r+NKd_v).
\]
Including the projection matrices adds the usual cost
\[
    \mathcal{C}_{\mathrm{proj}}
    =
    \mathcal{O}\!\left(
    Nd_{\mathrm{model}}(2d_r+d_v)
    \right),
\]
for forming $Q_X=XW_Q$, $K_X=XW_K$, and $V_X=XW_V$. 
Hence, for fixed $d_r,d_v$, number of heads, and fixed number of mixture components $K$, GMA scales linearly with sequence length $N$ while avoiding explicit $\mathcal{O}(N^2)$ attention storage\footnote{The matrix-multiplication associativity underlying this cost saving is discussed in Appendix~\ref{app:matrix_association_rule}.}.

The attention-specific activation \textit{storage} is similarly linear in $N$. The two responsibility matrices require $2NK$ entries, the latent memory and normalizer require $Kd_v+K$ entries, and the output requires $Nd_v$ entries. Thus, aside from standard projection activations, the routing-specific storage is
\[
    2NK + Kd_v + K + Nd_v
    =
    \mathcal{O}(NK+Kd_v+Nd_v),
\]
which reduces to $\mathcal{O}(NK)$ when focusing on the dominant responsibility storage for fixed $d_v$ and $K$.

\subsection{Causal GMA for Autoregressive Modelling}
\label{sec:causal_gma}

The formulation above is bidirectional: each output position may aggregate values from all key positions. For autoregressive language modelling, however, the output at position $i$ must depend only on prefix positions $j\leq i$. Therefore, the non-causal aggregation in Eq.~\eqref{eq:gma_latent_aggregation} is replaced by prefix versions of the latent memory and normalizer. For each position $i$ and component $k$, define
\begin{equation}
\label{eq:causal_gma_prefix_stats} \tag{\ref{eq:gma_latent_aggregation_componentwise}b}
    \tilde V^{(i)}_k
    =
    \sum_{j\leq i}
    \gamma^K_{j,k}V_{X,j}
    \in\mathbb{R}^{d_v},
    \qquad
    Z^{(i)}_k
    =
    \sum_{j\leq i}
    \gamma^K_{j,k}
    \in\mathbb{R}.
\end{equation}
Here $\tilde V^{(i)}_k$ and $Z^{(i)}_k$ are the causal prefix counterparts of $\tilde V_k$ and $Z_k$ in Eq.~\eqref{eq:gma_latent_aggregation_componentwise}. In other words, $\tilde V^{(i)}_k$ stores the value information assigned to component $k$ using only tokens up to position $i$, while $Z^{(i)}_k$ stores the corresponding prefix key-responsibility mass.

The causal GMA output for position $i$ is then\footnote{Unlike the bidirectional case in Eq.~\eqref{eq:gma_normalized_broadcasting}, there is no single full-sequence matrix expression of the form $O=\Gamma^Q\tilde V/(\Gamma^Q Z+\epsilon)$ unless one introduces an additional prefix-memory tensor. The reason is that the causal memory $\tilde V^{(i)}_k$ and normalizer $Z^{(i)}_k$ depend on the output position $i$: each query position reads from a different prefix-restricted memory built only from tokens $j\leq i$. Therefore, we wrote the causal formulation row-wise as in Eq.~\eqref{eq:causal_gma_output_componentwise}.}
\begin{equation}
\label{eq:causal_gma_output_componentwise} \tag{\ref{eq:gma_output_componentwise}b}
    O_i
    =
    \frac{
    \sum_{k=1}^K \gamma^Q_{i,k}\tilde V^{(i)}_k
    }{
    \sum_{k=1}^K \gamma^Q_{i,k}Z^{(i)}_k
    +\epsilon
    }.
\end{equation}
Thus, query position $i$ reads only from a prefix-restricted latent memory, rather than from the full-sequence memory used in the bidirectional case.

Equivalently, causal GMA induces attention weights supported only on the prefix:
\begin{equation}
\label{eq:causal_gma_induced_attention} \tag{\ref{eq:gma_induced_attention}b}
    A^{\mathrm{cGMA}}_{ij}
    =
    \mathbf{1}\{j\leq i\}
    \frac{
    \sum_{k=1}^K \gamma^Q_{i,k}\gamma^K_{j,k}
    }{
    \sum_{\ell\leq i}
    \sum_{k=1}^K
    \gamma^Q_{i,k}\gamma^K_{\ell,k}
    +\epsilon
    }.
\end{equation}
Therefore,
\begin{equation}
\label{eq:causal_gma_attention} \tag{\ref{eq:gma_attention}b}
    O_i
    =
    \sum_{j\leq i}
    A^{\mathrm{cGMA}}_{ij}V_{X,j}
\end{equation}
up to the numerical stabilizer.
In implementation, the prefix memories $\tilde V^{(i)}_k$ and prefix normalizers $Z^{(i)}_k$ are computed by cumulative sums along the sequence dimension, so the causal variant preserves the same fixed-$K$ linear scaling in sequence length while enforcing autoregressive causality.

\subsection{End-to-End Parameter Learning}
\label{sec:end_to_end_parameter_learning}

Traditional Gaussian mixture models (GMMs) are typically fit to static datasets by maximum-likelihood estimation using the Expectation-Maximization (EM) algorithm \cite{dempster1977maximum,mclachlan2000finite,bishop2006pattern,huang2025GMA_sampling}. However, deploying an iterative EM loop inside each attention layer during every forward pass would be computationally prohibitive and would disrupt the continuous gradient flow required for end-to-end neural network training. Instead, we treat the GMM parameters in each GMA layer, i.e. the means $\mu_k$, diagonal covariances $\Sigma_k$, and mixture priors $\pi_k$, as fully differentiable learnable parameters optimized jointly with the rest of the model by standard backpropagation.

This treatment is related in spirit to \textit{Mixture Density Networks} (MDNs) \cite{bishop1994mixture,huang2026GJE}, which combine neural networks with mixture density models and train mixture parameters by gradient-based optimization. The role of the mixture model, however, is different: MDNs parameterize an output conditional density, whereas GMA uses shared mixture components to define latent responsibility-based routing inside an attention layer. Thus, the GMA components are not fitted by standalone maximum-likelihood density estimation on a fixed dataset; rather, they are task-adapted latent routing components shaped by the predictive objective.

For one GMA layer, the trainable routing parameters are
\begin{equation} \label{eq:GMA_trainable_params}
    \Theta_{\mathrm{GMA}}
    =
    \{W_Q,W_K,W_V,\mu,\omega,\alpha\},
\end{equation}
where $W_Q,W_K,W_V$ are the query, key, and value projection matrices, $\mu$ contains the Gaussian component means, $\omega$ parameterizes the diagonal covariance entries, and $\alpha$ parameterizes the mixture-prior logits. These parameters are updated by minimizing the downstream task loss $\mathcal{L}_{\mathrm{task}}$, such as cross-entropy for language modelling or classification.

To ensure that the GMM parameters remain valid during gradient-based training, we use the following reparameterizations.

\begin{itemize}[label=-]
    \item \textbf{Means ($\mu_k$).} The latent routing centers are represented by unconstrained learnable parameters $\mu_k\in\mathbb{R}^{d_r}$, updated directly by the optimizer.

    \item \textbf{Covariances ($\Sigma_k$).} To guarantee that each diagonal covariance matrix remains strictly positive-definite, we maintain unconstrained learnable parameters $\omega\in\mathbb{R}^{K\times d_r}$ and set
    \begin{equation}
    \label{eq:gma_covariance_reparameterization}
        \sigma_{k,j}^2
        =
        \log(1+\exp(\omega_{k,j}))+\epsilon_\sigma,
        \qquad
        j=1,\ldots,d_r,
    \end{equation}
    where $\epsilon_\sigma>0$ is a small numerical constant. The covariance of component $k$ is then $\Sigma_k=\operatorname{diag}(\sigma_{k,1}^2,\ldots,\sigma_{k,d_r}^2)$.

    \item \textbf{Mixture priors ($\pi_k$).} To ensure that the mixture priors lie on the probability simplex, we maintain unconstrained logits $\alpha\in\mathbb{R}^{K}$ and apply a softmax transformation:
    \begin{equation}
    \label{eq:gma_prior_reparameterization}
        \pi_k
        =
        \frac{\exp(\alpha_k)}
        {\sum_{\ell=1}^K \exp(\alpha_\ell)},
        \qquad
        k=1,\ldots,K.
    \end{equation}
\end{itemize}

The training process is therefore fully differentiable\footnote{For details, Appendix~\ref{app:gradient_derivations} provides the corresponding gradient derivations for the reparameterized means, diagonal covariances, and mixture-prior logits, showing explicitly how gradients pass through the Gaussian responsibility computation during backpropagation.}. During the forward pass, the input sequence $X$ is projected into $Q_X$, $K_X$, and $V_X$ via Eq.~\eqref{eq:gma_qkv_projection}; the reparameterized GMM parameters define the query and key responsibility matrices $\Gamma^Q$ and $\Gamma^K$ through the Gaussian responsibility computation in Eq.~\eqref{eq:gma_responsibility}; and the output $O$ is computed by the normalized latent-memory operation in Eq.~\eqref{eq:gma_normalized_broadcasting}. During the backward pass, gradients of $\mathcal{L}_{\mathrm{task}}$ propagate through the Gaussian responsibility computation in Eq.~\eqref{eq:gma_responsibility}, the softmax mixture-prior parameterization in Eq.~\eqref{eq:gma_prior_reparameterization}, the softplus covariance parameterization in Eq.~\eqref{eq:gma_covariance_reparameterization}, and the latent aggregation and broadcasting operations in Eqs.~\eqref{eq:gma_latent_aggregation} and \eqref{eq:gma_normalized_broadcasting}. Consequently, no EM loop or auxiliary clustering loss is required inside the attention layer.
The resulting components are interpreted as task-adapted latent routing components rather than as classical GMM clusters obtained by standalone maximum-likelihood estimation.

\section{Theoretical Analysis}
\label{sec:theory}

To understand the learning dynamics and representational capacity of GMA, we analyze its gradient flow, its induced non-negative low-rank affinity structure, and the local stability of the Gaussian responsibility map. Throughout this section, $x_i\in\mathbb{R}^{d_r}$ denotes a generic routing vector, which may be either a query routing vector $q_i=(Q_X)_{i,:}$ or a key routing vector $k_i=(K_X)_{i,:}$ after the projection Eq.\eqref{eq:gma_qkv_projection}.

\subsection{Gradient Flow and Optimization Stability}
\label{sec:gradient_flow}

A primary concern when replacing dot-product attention with Gaussian-mixture routing is the stability of the responsibility gradients, since Gaussian log-densities contain Mahalanobis-distance terms\footnote{A related pitfall is the ``Mahalanobis Trace Trap'' \cite{huang2026GJE}: if the same mini-batch is used both to estimate an empirical covariance matrix and to evaluate Mahalanobis distances against its inverse, the quadratic term can algebraically collapse to a trace constant and lose its gradient signal. The same issue can arise component-wise in a GMM if each $\Sigma_k$ is recomputed as the responsibility-weighted empirical covariance of the same batch being scored. 
GMA avoids this failure mode because $\mu_k$, $\Sigma_k$, and $\pi_k$ are learned layer parameters, reparameterized as in Eqs.~\eqref{eq:gma_covariance_reparameterization} and \eqref{eq:gma_prior_reparameterization}, rather than empirical covariance estimates recomputed inside the forward pass.} and exponentials. We show that the softmax-normalized responsibility map yields a bounded and responsibility-modulated gradient structure.

For a routing vector $x_i\in\mathbb{R}^{d_r}$ and component $k$, define the pre-normalized log-density score
\begin{equation}
\label{eq:gma_log_density_score}
    s_{i,k}
    =
    \log \pi_k
    -
    \frac{d_r}{2}\log(2\pi)
    -
    \frac{1}{2}\log|\Sigma_k|
    -
    \frac{1}{2}
    (x_i-\mu_k)^\top\Sigma_k^{-1}(x_i-\mu_k).
\end{equation}
Then the responsibility in Eq.~\eqref{eq:gma_responsibility} can equivalently be written as a softmax over these scores:
\begin{equation}
\label{eq:gma_responsibility_score_softmax} \tag{\ref{eq:gma_responsibility}b}
    \gamma_{i,k}
    =
    \frac{\exp(s_{i,k})}
    {\sum_{\ell=1}^K \exp(s_{i,\ell})}.
\end{equation}

Let $\mathcal{L}$ be the downstream loss and let $g_{i,j}=\partial \mathcal{L}/\partial \gamma_{i,j}$ denote the upstream gradient arriving at the responsibility vector for token $i$. By the softmax Jacobian\footnote{For the binary sigmoid, the derivative is $\sigma(x)(1-\sigma(x))$. The softmax generalizes this: its Jacobian has diagonal terms $\gamma_k(1-\gamma_k)$ and off-diagonal terms $-\gamma_j\gamma_k$, which account for competition between components.},
\begin{equation}
\label{eq:gma_softmax_jacobian}
    \frac{\partial \gamma_{i,j}}{\partial s_{i,k}}
    =
    \gamma_{i,j}
    \left(
    \mathbf{1}\{j=k\}
    -
    \gamma_{i,k}
    \right).
\end{equation}
Therefore, the gradient entering the score $s_{i,k}$ is\footnote{Using
$\partial\gamma_{i,j}/\partial s_{i,k}
=\gamma_{i,j}(\mathbf{1}\{j=k\}-\gamma_{i,k})$, we have
$\sum_j g_{i,j}\gamma_{i,j}(\mathbf{1}\{j=k\}-\gamma_{i,k})
= g_{i,k}\gamma_{i,k}-\gamma_{i,k}\sum_j\gamma_{i,j}g_{i,j}
=\gamma_{i,k}\bigl(g_{i,k}-\sum_j\gamma_{i,j}g_{i,j}\bigr)$.}
\begin{equation}
\label{eq:gma_score_gradient_general}
    \frac{\partial \mathcal{L}}{\partial s_{i,k}}
    =
    \sum_{j=1}^K
    \frac{\partial \mathcal{L}}{\partial \gamma_{i,j}}
    \frac{\partial \gamma_{i,j}}{\partial s_{i,k}}
    =
    \gamma_{i,k}
    \left(
    g_{i,k}
    -
    \sum_{j=1}^K
    \gamma_{i,j}g_{i,j}
    \right).
\end{equation}
Thus, the gradient passed into each component score is a responsibility-weighted and mean-centered upstream gradient. For any learnable scalar parameter $\theta$ that enters the score, such as a component mean $\mu_{k,m}$, a covariance parameter $\omega_{k,m}$ (Eq.~\eqref{eq:gma_covariance_reparameterization}), a mixture-prior logit $\alpha_k$ (Eq.~\eqref{eq:gma_prior_reparameterization}), or an entry of the projection matrices $W_Q$ and $W_K$ through the routing vectors $q_i$ and $k_i$ (Eq.~\eqref{eq:gma_qkv_projection}), the chain rule gives
\begin{equation}
\label{eq:gma_parameter_gradient_general_a}
    \frac{\partial \mathcal{L}}{\partial \theta}
    =
    \sum_i
    \sum_{k=1}^K
    \frac{\partial \mathcal{L}}{\partial s_{i,k}}
    \frac{\partial s_{i,k}}{\partial \theta},
\end{equation}
where $\partial \mathcal{L}/\partial s_{i,k}$ is given by Eq.~\eqref{eq:gma_score_gradient_general}. Equivalently, expanding the softmax Jacobian explicitly gives
\begin{equation} \label{eq:gma_parameter_gradient_general_b} \tag{\ref{eq:gma_parameter_gradient_general_a}b}
    \frac{\partial \mathcal{L}}{\partial \theta}
    =
    \sum_i
    \sum_{j=1}^K
    \sum_{k=1}^K
    \frac{\partial \mathcal{L}}{\partial \gamma_{i,j}}
    \frac{\partial \gamma_{i,j}}{\partial s_{i,k}}
    \frac{\partial s_{i,k}}{\partial \theta}.
\end{equation}
This expression is the appropriate backpropagation form for the responsibility-based GMA layer.

The familiar factor $\gamma_{i,k}(1-\gamma_{i,k})$ appears, from Eq.\eqref{eq:gma_softmax_jacobian} with $j=k$, as the direct sensitivity of a component's own responsibility to its own score:
\begin{equation}
\label{eq:gma_self_responsibility_derivative} \tag{\ref{eq:gma_softmax_jacobian}b}
    \frac{\partial \gamma_{i,k}}{\partial s_{i,k}}
    =
    \gamma_{i,k}(1-\gamma_{i,k}).
\end{equation}
For example, since
\begin{equation}
\label{eq:gma_score_mean_derivative}
    \nabla_{\mu_k}s_{i,k}
    =
    \Sigma_k^{-1}(x_i-\mu_k),
\end{equation}
the direct derivative of $\gamma_{i,k}$ with respect to the component mean is
\begin{equation}
\label{eq:gma_mean_responsibility_derivative}
    \nabla_{\mu_k}\gamma_{i,k}
    =
    \gamma_{i,k}(1-\gamma_{i,k})
    \Sigma_k^{-1}(x_i-\mu_k).
\end{equation}
For $j\neq k$, the cross-component derivative is
\begin{equation}
\label{eq:gma_cross_mean_responsibility_derivative}
    \nabla_{\mu_k}\gamma_{i,j}
    =
    -
    \gamma_{i,j}\gamma_{i,k}
    \Sigma_k^{-1}(x_i-\mu_k).
\end{equation}

These identities show that Gaussian routing gradients are naturally modulated by the uncertainty of the responsibility assignment. The direct self-sensitivity in Eq.~\eqref{eq:gma_self_responsibility_derivative} is largest when the component assignment is uncertain and vanishes as $\gamma_{i,k}$ approaches $0$ or $1$. In the full loss gradient, Eq.~\eqref{eq:gma_score_gradient_general} shows that the update is further centered by the responsibility-weighted average upstream gradient. This gives a structured gradient-modulation mechanism, although it does not by itself rule out component under-use, component collapse, or poor conditioning; these properties must still be monitored empirically through component-usage diagnostics.

\subsection{Non-Negative Low-Rank Affinity Interpretation}
\label{sec:nonnegative_low_rank_affinity}

We have the implicit unnormalized GMA affinity matrix (Eq.\ref{eq:gma_induced_attention_unnormalised})
\[
\label{eq:gma_unnormalized_affinity} 
    \widetilde A^{\mathrm{GMA}}
    =
    \Gamma^Q(\Gamma^K)^\top,
\]
where $\Gamma^Q,\Gamma^K\in\mathbb{R}^{N\times K}$ are non-negative responsibility matrices whose rows lie on the probability simplex. Therefore, $\widetilde A^{\mathrm{GMA}}$ is non-negative and satisfies
\begin{equation}
\label{eq:gma_affinity_rank_bound}
    \operatorname{rank}(\widetilde A^{\mathrm{GMA}})
    \leq
    K.
\end{equation}

The normalized affinity used by GMA, i.e. Eq.~\eqref{eq:gma_induced_attention}, is obtained by row-wise normalization:
\begin{equation}
\label{eq:gma_normalized_affinity_matrix} \tag{\ref{eq:gma_induced_attention}b}
    A^{\mathrm{GMA}}_{ij}
    =
    \frac{
    \widetilde A^{\mathrm{GMA}}_{ij}
    }{
    \sum_{\ell=1}^N \widetilde A^{\mathrm{GMA}}_{i\ell}
    +\epsilon
    }.
\end{equation}
Equivalently, if $d_i=\sum_{\ell=1}^N\widetilde A^{\mathrm{GMA}}_{i\ell}
+\epsilon$, then
\[
    A^{\mathrm{GMA}}
    =
    \operatorname{diag}(d)^{-1}\widetilde A^{\mathrm{GMA}}.
\]
Since $\operatorname{diag}(d)^{-1}$ is a diagonal left multiplication with
positive entries, the normalized affinity also has rank at most $K$.

Thus, GMA induces a constrained non-negative low-rank affinity structure. Here ``affinity'' refers to the induced token-to-token relatedness $\widetilde A^{\mathrm{GMA}}_{ij} =\sum_{k=1}^K\gamma^Q_{i,k}\gamma^K_{j,k}$, while ``constrained'' refers to the fact that the factor matrices $\Gamma^Q$ and $\Gamma^K$ are not arbitrary non-negative matrices: each row is a posterior probability distribution over latent Gaussian components. This is related to the intuition behind non-negative matrix factorization (NMF), in which non-negativity encourages additive rather than subtractive combinations \cite{lee1999learning}. However, GMA is not an NMF algorithm in the classical optimization sense: it does not solve a reconstruction or approximation problem for a given matrix. Instead, it learns two probabilistic responsibility factors as part of a task-trained neural sequence model.

This distinction is important. The product $\Gamma^Q(\Gamma^K)^\top$ gives a rank-at-most-$K$ non-negative affinity, but the effective rank may be smaller if some components are unused or if different responsibility columns become dependent. Consequently, the interpretation is a \emph{constrained non-negative low-rank affinity} interpretation rather than a claim that GMA always produces an exact rank-$K$ decomposition. Similarly, component specialization and parts-based structure are possible consequences of the learned responsibility factors, but they are not guaranteed by the factorization alone. They should therefore be evaluated empirically using the responsibility matrices, for example through component-usage statistics, responsibility heatmaps, top-token analysis, or alignment with token-category labels.

\subsection{Local Lipschitz Continuity and Routing Stability}
\label{sec:lipschitz_stability}

Standard dot-product attention can be sensitive to large embedding norms, which motivates the scaling factor $1/\sqrt{d_k}$ in Eq.~\eqref{eq:dot_product_attention}. In GMA, routing is instead determined by Gaussian responsibilities. We now show that, for fixed learned GMM parameters, under variance lower bounds and bounded routing vectors, the responsibility map is locally Lipschitz.

\begin{theorem}[Local Lipschitz Continuity of GMA Responsibilities]
\label{thm:lipschitz}
Assume that the diagonal variances satisfy $\sigma^2_{k,m}\geq \epsilon_\sigma>0$ for all components $k$ and coordinates $m=1,\ldots,d_r$, and that routing vectors and component means lie in a bounded set such that
\[
    \|x-\mu_k\|_2\leq R
    \qquad
    \text{for all } k.
\]
Then, for fixed GMM parameters, the responsibility map $x\mapsto \gamma(x)\in\Delta^{K-1}$ is locally Lipschitz continuous. In particular, it has a finite Lipschitz constant depending on $R$, $\epsilon_\sigma$, and $K$.
\end{theorem}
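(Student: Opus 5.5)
The plan is to write the responsibility map as a composition $\gamma(x)=\operatorname{softmax}(s(x))$, where $s(x)=(s_1(x),\ldots,s_K(x))$ collects the log-density scores of Eq.~\eqref{eq:gma_log_density_score}. We then bound the Jacobian $J_\gamma(x)=J_{\mathrm{sm}}(s(x))\,J_s(x)$ uniformly on the region $\mathcal{B}=\{x:\|x-\mu_k\|_2\le R\ \forall k\}$. We can take this region to be convex; if it is not, we replace it by a convex set on which the same bounds hold, for instance a ball of radius $R$ around a fixed mean, enlarged if necessary. On a convex set the mean value inequality $\|\gamma(x)-\gamma(y)\|_2\le \sup_{\xi}\|J_\gamma(\xi)\|_2\,\|x-y\|_2$ then gives the Lipschitz constant directly. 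Since $\gamma$ is smooth ($C^\infty$) on all of $\mathbb{R}^{d_r}$ for fixed parameters with $\sigma^2_{k,m}>0$, local Lipschitz continuity is immediate. The content of the theorem is the explicit dependence of the constant on $R$, $\epsilon_\sigma$ and $K$.

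The first step is the score Jacobian. Differentiating Eq.~\eqref{eq:gma_log_density_score} in $x$ gives $\nabla_x s_k(x)=-\Sigma_k^{-1}(x-\mu_k)$, which is the negative of Eq.~\eqref{eq:gma_score_mean_derivative}. The priors, normalizing constants and $\log|\Sigma_k|$ do not depend on $x$, so they drop out. Because $\Sigma_k$ is diagonal with entries at least $\epsilon_\sigma$, we have $\|\Sigma_k^{-1}\|_2\le 1/\epsilon_\sigma$. Hence $\|\nabla_x s_k(x)\|_2\le R/\epsilon_\sigma$ on $\mathcal{B}$, and stacking the $K$ rows gives $\|J_s(x)\|_2\le\|J_s(x)\|_F\le \sqrt{K}\,R/\epsilon_\sigma$.

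The second step is the softmax Jacobian. From Eq.~\eqref{eq:gma_softmax_jacobian} we get $J_{\mathrm{sm}}=\operatorname{diag}(\gamma)-\gamma\gamma^\top$. This matrix is symmetric positive semidefinite and its diagonal entries are $\gamma_k(1-\gamma_k)\le 1/4$. Its spectral norm is at most $\max_k\gamma_k\le 1$. A Gershgorin estimate gives a row sum of $2\gamma_k(1-\gamma_k)\le 1/2$, so in fact $\|J_{\mathrm{sm}}\|_2\le 1/2$, uniformly in $x$ and $K$. This is where the normalization helps: no exponential of the Mahalanobis distance survives in the bound.

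Combining the two steps gives $\|J_\gamma(x)\|_2\le \tfrac12\sqrt{K}\,R/\epsilon_\sigma$ on $\mathcal{B}$. So the map is $L$-Lipschitz there with $L=\sqrt{K}R/(2\epsilon_\sigma)$. A sharper alternative writes $J_\gamma=-\sum_k\gamma_k(\nabla s_k-\bar g)e_k^\top$-type terms, with the centering from Eq.~\eqref{eq:gma_score_gradient_general}. That form gives a $K$-independent constant like $2R/\epsilon_\sigma$, but the cruder $\sqrt{K}$ bound already matches the statement.

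The main obstacle I expect is geometric rather than analytic. The hypothesis bounds $\|x-\mu_k\|$ only at the points considered, so the segment between two admissible points must stay in a set where the bound $R$ (or, say, $2R$) still holds. I would handle this by noting that $\mathcal{B}$ is an intersection of balls and is therefore convex. The mean value inequality then applies along segments inside $\mathcal{B}$ with no loss.
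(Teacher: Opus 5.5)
Your proposal is correct and follows essentially the same route as the paper: write $\gamma=\operatorname{softmax}\circ s$, bound the score Jacobian by $\|J_s\|_2\le\|J_s\|_F\le\sqrt{K}R/\epsilon_\sigma$, and use the uniformly bounded softmax Jacobian. You go slightly further than the paper in two places: you make the softmax constant explicit ($\|\operatorname{diag}(\gamma)-\gamma\gamma^\top\|_2\le 1/2$), and you observe that $\mathcal{B}$ is an intersection of balls and hence convex, which justifies the mean-value step that the paper's proof uses only implicitly.
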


\begin{proof}
For each component, the score from Eq.~\eqref{eq:gma_log_density_score} can be written as
\[
    s_k(x)
    =
    \log \pi_k
    -
    \frac{d_r}{2}\log(2\pi)
    -
    \frac{1}{2}\log|\Sigma_k|
    -
    \frac{1}{2}
    (x-\mu_k)^\top\Sigma_k^{-1}(x-\mu_k).
\]
Its gradient with respect to $x$ is\footnote{This quantity is the score of the component log-density with respect to the input variable, i.e. $\nabla_x \log p_k(x)$. In classical statistics, the term ``score'' often refers to the gradient of the log-likelihood with respect to model parameters; in score-based modelling convention, the gradient is taken with respect to the data or representation variable \cite{huang2022score}.}
\[
    \nabla_x s_k(x)
    =
    -
    \Sigma_k^{-1}(x-\mu_k).
\]
Since $\Sigma_k$ is diagonal\footnote{Here we use the diagonal covariance parameterization from Eq.~\eqref{eq:gma_covariance_reparameterization}, namely $\Sigma_k=\operatorname{diag} \sigma^2_{k,1},\ldots,\sigma^2_{k,d_r})$. The notation $\|\Sigma_k^{-1}\|_2$ denotes the spectral/operator norm of the inverse covariance, not the determinant. For a diagonal positive-definite matrix, this norm is the largest diagonal entry of $\Sigma_k^{-1}$.} and $\sigma^2_{k,m}\geq \epsilon_\sigma$ for all $m=1,\ldots,d_r$, we have
\[
    \|\Sigma_k^{-1}\|_2
    =
    \max_{1\leq m\leq d_r}
    \frac{1}{\sigma^2_{k,m}}
    \leq
    \frac{1}{\epsilon_\sigma}.
\]
Using the boundedness assumption $\|x-\mu_k\|_2\leq R$ and the operator-norm inequality, we obtain
\[
    \|\nabla_x s_k(x)\|_2
    =
    \|-\Sigma_k^{-1}(x-\mu_k)\|_2
    \leq
    \|\Sigma_k^{-1}\|_2\|x-\mu_k\|_2
    \leq
    \frac{1}{\epsilon_\sigma}R
    =
    \frac{R}{\epsilon_\sigma}.
\]
Recall that a differentiable map $f$ is Lipschitz on a region $\mathcal{X}$ if there exists a finite constant $L<\infty$ such that
\[
    \|f(x)-f(x')\|_2
    \leq
    L\|x-x'\|_2,
    \qquad
    x,x'\in\mathcal{X}.
\]
For differentiable functions, a sufficient condition is that the Jacobian norm is uniformly bounded on $\mathcal{X}$:
\[
    \sup_{x\in\mathcal{X}}\|J_f(x)\|_2 < \infty.
\]
Applying this to the score map $s(x)=(s_1(x),\ldots,s_K(x))$, its Jacobian has rows $\nabla_x s_k(x)^\top$. Since each row satisfies $\|\nabla_x s_k(x)\|_2\leq R/\epsilon_\sigma$, the Frobenius norm gives
\[
    \|J_s(x)\|_2
    \leq
    \|J_s(x)\|_F
    =
    \left(
    \sum_{k=1}^K
    \|\nabla_x s_k(x)\|_2^2
    \right)^{1/2}
    \leq
    \frac{\sqrt{K}R}{\epsilon_\sigma}.
\]
Thus, the score map has a finite local Lipschitz constant, with one simple upper bound proportional to $\sqrt{K}R/\epsilon_\sigma$.

The softmax map from scores to responsibilities has a uniformly bounded Jacobian on $\mathbb{R}^K$, and is therefore globally Lipschitz. Hence the composition
\[
    x
    \mapsto
    s(x)
    \mapsto
    \operatorname{softmax}(s(x))
    =
    \gamma(x)
\]
is locally Lipschitz. The resulting Lipschitz constant is finite and depends on $R$, $\epsilon_\sigma$, and $K$.
\end{proof}

This result shows that, under the stated boundedness and variance-lower-bound assumptions, small perturbations of a routing vector cannot cause unbounded changes in its responsibility vector. The statement should be interpreted as a local routing-stability guarantee, not as a global robustness guarantee for the entire Transformer architecture.

For fixed model parameters and fixed sequence length $N$, the normalized GMA output is also locally stable on bounded domains. The latent aggregation $\tilde V=(\Gamma^K)^\top V_X$ and the normalizer $Z=(\Gamma^K)^\top\mathbf{1}_N$ are finite sums of products of locally Lipschitz responsibility functions and bounded value vectors, while the broadcasting step divides by $\Gamma^QZ+\epsilon$. Since $\epsilon>0$ and all responsibilities are non-negative, the denominator is bounded below by $\epsilon$. Therefore, on bounded sets of routing vectors and values, the full GMA mapping defined by Eq.~\eqref{eq:gma_normalized_broadcasting} is a composition of locally Lipschitz operations and is itself locally Lipschitz. The corresponding Lipschitz constant may depend on $N$, $K$, the parameter bounds, the variance lower bound, and the value-vector bounds.

\section[Experiments]{Experiments\protect\footnote{Part of the experimental code was developed with kind assistance from ChatGPT-5.5 \citep{openai2026chatgpt55}, which the authors gratefully acknowledge.}} \label{sec:experiments}

To empirically validate GMA, we evaluate four aspects: (1) controlled systems profiling of memory and throughput, (2) long-context classification accuracy on LRA, (3) autoregressive language modelling on WikiText-103, and (4) the interpretable latent structures induced by learned responsibility matrices. All experiments in this paper evaluate GMA in \textit{self-attention}-style settings rather than cross-attention settings. Specifically, the systems profiling and LRA experiments use bidirectional sequence mixing, the WikiText-103 experiment uses causal self-attention-style sequence mixing, and the interpretability analysis examines the learned query responsibilities of the trained causal GMA model.

\subsection*{\textit{Experimental Setup and Baselines}}
We compare GMA against standard dot-product Multi-Head Attention (MHA) \citep{vaswani2017attention} and a suite of efficient sequence-modelling baselines chosen to represent distinct architectural trade-offs:

\begin{boxlabel}
\item \textbf{Low-Rank \& Static Projections:} \textbf{\textit{Linformer}} \citep{wang2020linformer} projects the sequence dimension $N$ into a smaller fixed dimension $k$. This baseline tests a static low-rank compression approach against GMA's input-dependent responsibility routing.

\item \textbf{Kernel Approximations:} \textbf{\textit{Performer}} \citep{choromanski2021rethinking} uses random orthogonal features, and \textbf{\textit{Linear Transformer}} \citep{katharopoulos2020transformers} uses an ELU-based (exponential linear unit) feature map. These baselines compare GMA against fixed or random feature-map approaches to linear attention.

\item \textbf{Modern State-Space Models:} \textbf{\textit{Mamba}} \citep{gu2023mamba} is a strong selective state-space model with linear scaling in sequence length. We include it to situate GMA against contemporary sequence-modelling alternatives that do not use attention-style token-to-token routing.
\end{boxlabel}

Across all experiments, GMA uses the multi-head normalized formulation described in Section~\ref{sec:GMA_method}, with mixture priors $\pi_k$ initialized uniformly. The number of Gaussian components $K$ is chosen according to the purpose of each experiment. In the controlled systems profiling experiment, we report a component ablation over $K\in\{64,128,256,512\}$ to characterize the efficiency--capacity trade-off. In the LRA accuracy experiments, we use $K=128$ as a moderate setting that keeps responsibility-memory cost manageable while retaining sufficient routing capacity. In the WikiText-103 language-modelling experiments, we evaluate both $K=128$ and $K=256$ causal GMA variants to measure how increasing the number of components affects perplexity and throughput. All models are implemented in \textit{PyTorch} \citep{pytorch2019} and trained using the \textit{AdamW} optimizer \citep{loshchilov2019AdamW}.

\subsection{Controlled Systems Profiling}
\label{sec:controlled_systems_profiling}

To isolate the computational behavior of each sequence-mixing mechanism, we first conduct\footnote{Unless otherwise stated, all mechanisms are evaluated on a single NVIDIA A100-SXM4-40GB GPU using PyTorch 2.10.0 with CUDA 12.8, bfloat16 mixed precision, batch size 8, 12 heads, and model hidden dimension $d_{\mathrm{model}}=768$.} controlled systems profiling using synthetic sequences. This experiment does not measure task accuracy; instead, it empirically tests the memory and throughput behavior predicted by the GMA cost analysis in Section~\ref{subsec:memory_and_compute_costs}. Specifically, we measure incremental peak GPU memory allocation and forward--backward throughput for a \textit{single attention/mixing block} under a shared input shape. This separates the computational scaling claims from the downstream representation-learning claims.

\begin{table}[H]
    \centering
    \caption{Single-block systems profiling of computational efficiency. Memory and throughput are empirical measurements for a forward-backward pass. Parameter counts, shown in parentheses, are reported in millions for each sequence-mixing layer. \underline{Underlined} values denote the best value in each metric, i.e. lowest memory or highest throughput.}
    \label{tab:efficiency}
    \resizebox{\textwidth}{!}{
    \scriptsize
    \begin{tabular}{lcccccc}
        \toprule
        \textbf{Model} &
        \textbf{$N=1$K Mem. (Params)} &
        \textbf{$N=1$K Speed} &
        \textbf{$N=2$K Mem. (Params)} &
        \textbf{$N=2$K Speed} &
        \textbf{$N=4$K Mem. (Params)} &
        \textbf{$N=4$K Speed} \\
        \midrule
        Standard MHA (Eager)
        & 1.47 GB (2.4M) & 1.02M tok/s
        & 5.81 GB (2.4M) & 610K tok/s
        & 23.07 GB (2.4M) & 186K tok/s \\

        Standard MHA (SDPA)
        & \underline{0.15 GB} (2.4M) & \underline{3.33M tok/s}
        & \underline{0.25 GB} (2.4M) & 3.34M tok/s
        & \underline{0.50 GB} (2.4M) & 2.38M tok/s \\
        \midrule

        Linear Transformer
        & 0.18 GB (2.4M) & 2.88M tok/s
        & 0.36 GB (2.4M) & \underline{3.76M tok/s}
        & 0.73 GB (2.4M) & \underline{3.96M tok/s} \\

        Linformer ($k=256$)
        & 0.41 GB (2.9M) & 2.17M tok/s
        & 0.82 GB (3.4M) & 2.56M tok/s
        & 1.64 GB (4.4M) & 2.68M tok/s \\

        Mamba
        & 0.27 GB (3.8M) & 2.06M tok/s
        & 0.55 GB (3.8M) & 1.99M tok/s
        & 1.11 GB (3.8M) & 2.07M tok/s \\

        Performer ($M=256$)
        & 0.58 GB (2.4M) & 1.30M tok/s
        & 1.15 GB (2.4M) & 1.37M tok/s
        & 2.30 GB (2.4M) & 1.39M tok/s \\
        \midrule

        \textbf{GMA (Ours, $K=64$)}
        & 0.35 GB (2.5M) & 688K tok/s
        & 0.71 GB (2.5M) & 722K tok/s
        & 1.37 GB (2.5M) & 742K tok/s \\

        \textbf{GMA (Ours, $K=128$)}
        & 0.50 GB (2.6M) & 566K tok/s
        & 1.04 GB (2.6M) & 592K tok/s
        & 2.02 GB (2.6M) & 607K tok/s \\

        \textbf{GMA (Ours, $K=256$)}
        & 0.85 GB (2.8M) & 416K tok/s
        & 1.70 GB (2.8M) & 430K tok/s
        & 3.39 GB (2.8M) & 437K tok/s \\

        \textbf{GMA (Ours, $K=512$)}
        & 1.59 GB (3.2M) & 270K tok/s
        & 3.17 GB (3.2M) & 277K tok/s
        & 6.32 GB (3.2M) & 280K tok/s \\
        \bottomrule
    \end{tabular}
    }
\vspace{2mm}
\begin{minipage}{0.98\textwidth}
\scriptsize
\textit{Note:} ``Standard MHA (Eager)'' denotes a conventional explicit scaled dot-product MHA implementation that materializes the attention score/probability tensors in the standard eager PyTorch execution path. ``Standard MHA (SDPA)'' denotes PyTorch's optimized scaled-dot-product-attention backend, which uses fused memory-efficient kernels when available. Both compute standard dot-product attention; they differ in implementation backend rather than in the mathematical attention definition.
\end{minipage}
\end{table}

As shown in Table~\ref{tab:efficiency}, eager standard MHA exhibits the expected quadratic memory growth, reaching 23.07 GB at $N=4000$, together with a marked throughput drop. In contrast, GMA shows approximately linear empirical memory scaling in $N$ for fixed $K$. For example, with $K=256$, memory increases from 0.85 GB at $N=1000$ to 1.70 GB at $N=2000$ and 3.39 GB at $N=4000$, while throughput remains nearly constant. This behavior is consistent with the fixed-$K$ activation scaling analyzed in Section~\ref{subsec:memory_and_compute_costs}.

The $K$-ablation exposes the expected efficiency--capacity trade-off. GMA's parameter count grows with $K$ but is independent of $N$, whereas its activation memory scales through the responsibility tensors $\Gamma^Q,\Gamma^K\in\mathbb{R}^{B\times H\times N\times K}$, where $B$, $H$, $N$, and $K$ denote batch size, number of heads, sequence length, and number of Gaussian mixture components, respectively. Thus, for fixed $K$, memory scales linearly with sequence length, while for fixed $N$, both memory and arithmetic cost scale linearly with the number of components. At $N=4000$, memory rises from 1.37 GB for $K=64$ to 6.32 GB for $K=512$, while the parameter count increases from 2.5M to 3.2M. By comparison, Linformer's static projection matrices introduce an explicit sequence-length-dependent parameter cost, increasing from 2.9M parameters at $N=1000$ to 4.4M at $N=4000$.

Overall, the profiling results empirically support the intended linear-in-$N$ scaling of GMA, but also show that this high-level PyTorch implementation,  currently without any optimisation, has larger constant factors than optimized SDPA, Linear Transformer, Mamba, and Performer baselines. This overhead comes from computing Gaussian log densities, Mahalanobis terms, and responsibility normalizers. The dominant responsibility activation storage is $\mathcal{O}(BHNK)$. Summing over heads, the arithmetic cost is approximately $\mathcal{O}(BNKD)$ when the per-head routing and value dimensions sum to the model hidden dimension $D$. Therefore, these results should be read as evidence for GMA's empirical scaling behavior, not as evidence that the present implementation is the fastest raw sequence mixer. The downstream experiments test whether this probabilistic routing cost yields competitive accuracy and interpretable responsibility structure.

\subsection{Long-Context Task Accuracy: LRA}
\label{sec:lra_accuracy_experiment}

Having isolated the single-block computational behavior, we next evaluate\footnote{All models are evaluated under the same PyTorch pipeline, using 4 sequence-mixing layers, model hidden dimension $d_{\mathrm{model}}=256$, eight heads, the AdamW optimizer \citep{loshchilov2019AdamW}, bfloat16 mixed precision, and a warmup-cosine learning-rate schedule.} representational capability on the Long Range Arena (LRA) benchmark \citep{tay2020LRA}. We focus on ListOps, a hierarchical reasoning task with sequence length $N=2\text{K}$, and byte-level IMDb/Text classification with sequence length $N=4\text{K}$.  We train each model for 20 epochs over 3 random seeds and report test accuracy at the epoch with the highest validation accuracy. For GMA, we use $K=128$ components in this experiment, matching the efficiency--capacity trade-off identified in the controlled systems profiling experiment.

\begin{table}[H]
    \centering
    \scriptsize
    \setlength{\tabcolsep}{4pt}
    \renewcommand{\arraystretch}{0.92}
    \caption{Task accuracy on the LRA ListOps and byte-level IMDb/Text tasks. Results are \textit{test accuracies} at the best validation epoch, reported as mean $\pm$ standard deviation over 3 random seeds. The parameter column reports millions of parameters for ListOps/Text, respectively; differences arise from task-specific components and, for some baselines, sequence-length dependent modules. \underline{Underlined} values denote the best overall result in each metric, while boldface highlights the GMA row.}
    \label{tab:lra_accuracy}
    \resizebox{0.8\textwidth}{!}{
    \begin{tabular}{lcccc}
        \toprule
        \textbf{Model} &
        \textbf{Params (L/T)} &
        \textbf{ListOps (2K)} &
        \textbf{Text (4K)} &
        \textbf{Avg. Acc.} \\
        \midrule
        Standard MHA (SDPA)
        & 3.74M / 4.25M
        & $41.33 \pm 0.49$
        & $64.03 \pm 0.89$
        & $52.68 \pm 0.48$ \\

        Linformer ($k=256$)
        & 7.84M / 12.44M
        & $39.15 \pm 0.13$
        & $58.82 \pm 1.60$
        & $48.99 \pm 0.86$ \\

        Linear Transformer
        & 3.74M / 4.25M
        & $40.03 \pm 0.70$
        & $64.88 \pm 0.13$
        & $52.46 \pm 0.29$ \\

        Performer ($M=256$)
        & 3.74M / 4.25M
        & $40.65 \pm 0.85$
        & $63.38 \pm 0.73$
        & $52.01 \pm 0.79$ \\

        Mamba
        & 4.44M / 4.95M
        & $\underline{42.28 \pm 1.75}$
        & $\underline{82.12 \pm 0.40}$
        & $\underline{62.20 \pm 0.68}$ \\
        \midrule
        \textbf{GMA (Ours, $K=128$)}
        & \textbf{4.01M / 4.52M}
        & $\mathbf{41.17 \pm 0.23}$
        & $\mathbf{64.97 \pm 0.21}$
        & $\mathbf{53.07 \pm 0.12}$ \\
        \bottomrule
    \end{tabular}
    }
\end{table}

Table~\ref{tab:lra_accuracy} shows that GMA is competitive with standard and efficient attention baselines under the same PyTorch LRA pipeline. On ListOps, GMA reaches $41.17\%$, closely matching SDPA attention ($41.33\%$) and outperforming Linformer, Linear Transformer, and Performer. On Text, GMA obtains $64.97\%$, the strongest result among the attention-style baselines, slightly above Linear Transformer ($64.88\%$) and SDPA attention ($64.03\%$). Averaged over the two tasks, GMA achieves $53.07\%$, giving the best average performance among the attention-based baselines evaluated here.

Mamba performs best overall, driven mainly by its much stronger Text result ($82.12\%$), indicating that selective state-space models provide a different and highly effective inductive bias for byte-level classification. Nevertheless, GMA outperforms Linformer, Linear Transformer, and Performer on both ListOps and Text in this experiment, and gives the strongest average accuracy among the attention-style baselines. These results support the hypothesis that learned probabilistic routing can be competitive with static or randomized efficient-attention mechanisms under a shared training pipeline, while retaining the fixed-$K$, linear-in-$N$ scaling and analyzable responsibility structure introduced in Sections~\ref{sec:linear_gma} and \ref{subsec:memory_and_compute_costs}.

\subsection{Language Modelling on WikiText-103}
\label{sec:wikitext_experiment}

Efficient attention mechanisms improve the scalability of Transformers on long sequences, but many do so by introducing sparsity patterns, low-rank projections, kernel feature maps, or other approximations to full softmax attention \citep{tay2022efficient,katharopoulos2020transformers,choromanski2021rethinking}. Prior work has observed that such approximations can involve a model-quality trade-off relative to exact softmax attention, especially in settings requiring dense token-level prediction \citep{dao2022flashattention,yang2024gated}. We therefore evaluate autoregressive language modelling on WikiText-103 \citep{merity2016pointer}.

Unlike the bidirectional LRA setting, this experiment requires strictly causal sequence mixing: the representation at position $i$ may depend only on tokens at positions $j\leq i$, and is then used for next-token prediction. For GMA, we use the causal prefix-sum formulation described in Section~\ref{sec:causal_gma}, which accumulates key-weighted latent memories and normalizers only over prefix positions. This prevents future-token leakage while preserving linear-in-$N$ complexity for fixed $K$.

All models are trained\footnote{We use a fixed-budget decoder-only setup with context length 1024, model hidden dimension $d_{\mathrm{model}}=512$, 6 layers, 8 heads, bfloat16 mixed precision, AdamW optimization, and an effective batch size of 16,384 tokens. Each model is trained for 20,000 optimization steps over 3 random seeds on an NVIDIA A100-SXM4-40GB GPU.} from scratch using a GPT-2 BPE tokenizer, with no pretrained model weights. We report validation perplexity at the best validation checkpoint\footnote{Since the best validation checkpoint occurred at the final training step for all runs, these results should be interpreted as a fixed-budget accuracy-efficiency comparison rather than as full-convergence performance.} and training throughput in tokens per second. All models passed a prefix-causality test, with maximum prefix error equal to zero.

\begin{table}[H]
    \centering
    \caption{Autoregressive language modelling on WikiText-103 under a fixed training budget. We report validation perplexity at the best validation checkpoint; lower is better. Throughput is measured in training tokens per second; higher is better. \underline{Underlined} values denote the best overall result in each metric, while boldface highlights the stronger GMA variant.}
    \label{tab:wikitext_results}
    \footnotesize
    \setlength{\tabcolsep}{5pt}
    \renewcommand{\arraystretch}{0.95}
    \begin{tabular}{lccc}
        \toprule
        \textbf{Model Architecture} &
        \textbf{Params} &
        \textbf{Val. PPL $\downarrow$} &
        \textbf{Tok/s $\uparrow$} \\
        \midrule
        Standard MHA (causal SDPA)
        & 45.17M
        & $\underline{35.40 \pm 0.06}$
        & $\underline{110{,}778 \pm 809}$ \\

        Linear Transformer
        & 45.17M
        & $44.81 \pm 0.11$
        & $41{,}693 \pm 47$ \\

        Performer
        & 45.17M
        & $47.49 \pm 0.23$
        & $14{,}559 \pm 6$ \\

        Mamba
        & 49.04M
        & $35.44 \pm 0.21$
        & $74{,}532 \pm 564$ \\
        \midrule

        \textbf{GMA (Ours, $K=128$)}
        & 45.96M
        & $42.62 \pm 0.38$
        & $22{,}912 \pm 23$ \\

        \textbf{GMA (Ours, $K=256$)}
        & 46.76M
        & $\mathbf{41.72 \pm 0.21}$
        & $13{,}056 \pm 21$ \\
        \bottomrule
    \end{tabular}
\end{table}

Table~\ref{tab:wikitext_results} shows that causal GMA improves perplexity over the tested linear-attention baselines on dense autoregressive prediction. GMA with $K=256$ obtains a validation perplexity of $41.72$, outperforming Linear Transformer ($44.81$) and Performer ($47.49$) under the same fixed training budget. This suggests that learned Gaussian responsibilities can preserve token-level predictive information more effectively than the fixed ELU feature map of Linear Transformer or the random feature approximation used by Performer in this setup.

However, causal GMA does not close the gap to optimized causal softmax attention or Mamba in this fixed-budget implementation. Standard MHA with causal SDPA achieves the best perplexity ($35.40$) and the highest throughput, while Mamba reaches a similar perplexity ($35.44$) with lower but still strong throughput. This does not contradict the linear-in-$N$ complexity analysis of GMA. The cost estimate in Eq.~\eqref{eq:gma_total_cost} describes asymptotic scaling for fixed $K$, whereas wall-clock throughput also depends on constant factors, kernel fusion, memory access patterns, and implementation maturity. Moreover, at the context length used here ($N=1024$), the difference between $NK$ and $N^2$ is still modest for $K=128$ or $K=256$ before constant factors are taken into account. The scaling advantage of GMA is expected to become more relevant at longer sequence lengths when $K \ll N$, although realizing this advantage in wall-clock time would require optimized kernels. In the present high-level, non-optimized PyTorch implementation, GMA incurs larger constant factors from Gaussian log-density computation, Mahalanobis terms, causal prefix accumulation, and responsibility normalization. Increasing the number of components from $K=128$ to $K=256$ improves perplexity from $42.62$ to $41.72$, but reduces throughput from $22.9$K to $13.1$K tokens/s---an expected capacity-efficiency trade-off.

Overall, the WikiText-103 experiment supports GMA as a probabilistic and interpretable linear-time attention-style alternative that improves over the tested linear and random-feature attention variants, while remaining less efficient and less accurate than highly optimized causal SDPA and Mamba under the current implementation.

\subsection{Interpretability: Latent Responsibility Structure}
\label{sec:gma_interpretability}

Motivated by the non-negative factorization structure of the GMA responsibility matrices, we analyze the learned query responsibilities on WikiText-103 validation sequences. We use the previously trained causal GMA model with $K=128$ and extract the last-layer query responsibility tensor $\Gamma^Q\in\mathbb{R}^{S\times H\times L\times K}$, where $S=64$ is the number of validation sequences, $H=8$ is the number of attention heads, $L=1024$ is the sequence length, and $K=128$ is the number of Gaussian components. For each token occurrence at sequence $s$ and position $\ell$, we average the responsibility vectors over heads,
\begin{equation}
    \bar{\gamma}^Q_{s,\ell,k}
    =
    \frac{1}{H}\sum_{h=1}^H \gamma^Q_{s,h,\ell,k},
    \qquad k=1,\ldots,K.
\end{equation}
This produces one head-averaged responsibility vector $\bar{\gamma}^Q_{s,\ell,:}\in\Delta^{K-1}$ for each token occurrence. We then flatten the sequence-position pair $(s,\ell)$ into a token index $t=1,\ldots,T$, where $T=SL=65{,}536$, and write the resulting vector as $\bar{\gamma}^Q_t$. 
Thus, the averaging is performed across heads for the same token occurrence, not across different sequence positions or across repeated token types. Because this diagnostic averages responsibilities across heads, the resulting component indices should be interpreted as aggregate, head-averaged routing channels rather than isolated per-head components.\footnote{If each head has its own independently learned Gaussian components, then component $k$ in one head is not necessarily aligned with component $k$ in another head. The head-averaged analysis is therefore a coarse diagnostic of aggregate routing structure; per-head component specialization is left for more detailed future analysis.}

We study both soft and hard responsibility structure. The marginal component usage is defined as
\begin{equation}
    p_k = \frac{1}{T}\sum_{t=1}^T \bar{\gamma}^Q_{t,k}.
\end{equation}
We use the normalized usage entropy
\begin{equation}
    H_{\mathrm{usage}}
    =
    \frac{-\sum_{k=1}^K p_k \log p_k}{\log K}
\end{equation}
to measure whether the latent routing channels are broadly used or collapse to a small subset. A value close to 1 indicates broad component usage, while a value close to 0 indicates concentration on a few components.

To measure routing sharpness, we compute the normalized mean token entropy
\begin{equation}
    H_{\mathrm{token}}
    =
    \frac{1}{T}
    \sum_{t=1}^T
    \frac{-\sum_{k=1}^K \bar{\gamma}^Q_{t,k}\log \bar{\gamma}^Q_{t,k}}{\log K}.
\end{equation}
Lower values indicate sharper, more confident routing. We also report the mean maximum responsibility
\begin{equation}
    R_{\max}
    =
    \frac{1}{T}\sum_{t=1}^T \max_k \bar{\gamma}^Q_{t,k},
\end{equation}
which gives a complementary measure of assignment sharpness.

For hard-assignment diagnostics, we define
\begin{equation}
    z_t = \arg\max_k \bar{\gamma}^Q_{t,k}.
\end{equation}
Let $c_t$ denote the surface-form category of token $t$, such as punctuation, numeric, capitalized, lower-case alphabetic, subword alphabetic, newline/space, or function word. For component $k$, let
\begin{equation}
    n_{k,c}=\sum_{t=1}^T \mathbf{1}\{z_t=k,c_t=c\},
    \qquad
    n_k=\sum_c n_{k,c}.
\end{equation}
For components with $n_k>0$, the component purity is
\begin{equation}
    \operatorname{Purity}(k)
    =
    \max_c \frac{n_{k,c}}{n_k},
\end{equation}
and the weighted category purity is
\begin{equation}
    \operatorname{Purity}_{\mathrm{w}}
    =
    \sum_{k:n_k>0}
    \frac{n_k}{T}\operatorname{Purity}(k).
\end{equation}
This measures how strongly hard GMA assignments align with surface-form token categories.

Finally, we compute the mutual information between hard component assignments $Z$ and token categories $C$:
\begin{equation}
    I(Z;C)
    =
    \sum_{k,c} p(k,c)
    \log
    \frac{p(k,c)}{p(k)p(c)},
\end{equation}
where $p(k,c)=n_{k,c}/T$, $p(k)=\sum_c p(k,c)$, and $p(c)=\sum_k p(k,c)$. We report normalized mutual information as
\begin{equation}
    \operatorname{NMI}(Z,C)
    =
    \frac{I(Z;C)}{\min\{H(Z),H(C)\}},
\end{equation}
where $H(Z)$ and $H(C)$ are the marginal entropies of component assignments and token categories, respectively. Higher NMI indicates stronger alignment between latent routing and token surface-form categories.

\begin{table}[H]
\centering
\caption{Interpretability statistics for last-layer, head-averaged GMA query responsibilities on WikiText-103 validation sequences. Usage entropy and token entropy are normalized by $\log K$. Purity, mutual information (MI), and normalized mutual information (NMI) are computed from hard assignments $z_t=\arg\max_k \bar{\gamma}^Q_{t,k}$ and surface-form token categories.}
\label{tab:gma_interpretability}
\scriptsize
\setlength{\tabcolsep}{2.5pt}
\renewcommand{\arraystretch}{0.95}
\begin{tabular}{lcccccccc}
\toprule
Model &
$K$ &
Active &
Usage Ent. &
Token Ent. &
Mean Max Resp. &
Purity &
MI &
NMI \\
\midrule
GMA &
128 &
104/128 &
0.933 &
0.790 &
0.103 &
0.483 &
0.427 &
0.244 \\
\bottomrule
\end{tabular}
\end{table}

Table~\ref{tab:gma_interpretability} shows that the learned responsibilities use most of the available latent routing channels. In total, 104 out of 128 components receive at least one hard assignment, and the normalized usage entropy is 0.933, indicating that the soft usage distribution is broad rather than collapsed. At the same time, the normalized per-token entropy is 0.790 and the mean maximum responsibility is 0.103, showing that head-averaged routing remains soft rather than nearly one-hot. This softness is expected because responsibilities are averaged across heads.

The hard component assignments nevertheless carry measurable surface-form information. The weighted category purity is $0.483$, substantially above a permutation baseline of $0.293\pm 0.001$ that preserves the component and category marginals while randomizing their association\footnote{The weighted category purity is computed as $\operatorname{Purity}_{\mathrm{w}} =\sum_{k:n_k>0}(n_k/T)\max_c n_{k,c}/n_k$, giving $0.483$ for the observed hard assignments. The permutation baseline is obtained by randomly permuting the surface-form category labels $\{c_t\}_{t=1}^T$ relative to the hard component assignments $\{z_t\}_{t=1}^T$, recomputing $\operatorname{Purity}_{\mathrm{w}}$ for each permutation, and reporting the mean and standard deviation across permutations. This preserves the empirical component counts and category counts while destroying their association. The global majority-category baseline is $\max_c n_c/T=18{,}757/65{,}536\approx 0.286$, corresponding to lower-case alphabetic tokens.}. For reference, the largest global token category, lower-case alphabetic tokens, accounts for $0.286$ of the analyzed token set. The mutual information between component assignments and token categories is 0.427 nats, corresponding to a normalized mutual information of 0.244. Several high-usage components show preferences for recognizable token roles, including function words, punctuation, lower-case alphabetic tokens, numeric tokens, capitalized tokens, and subword pieces.

\begin{figure}[H]
    \centering
    \includegraphics[width=0.98\textwidth]{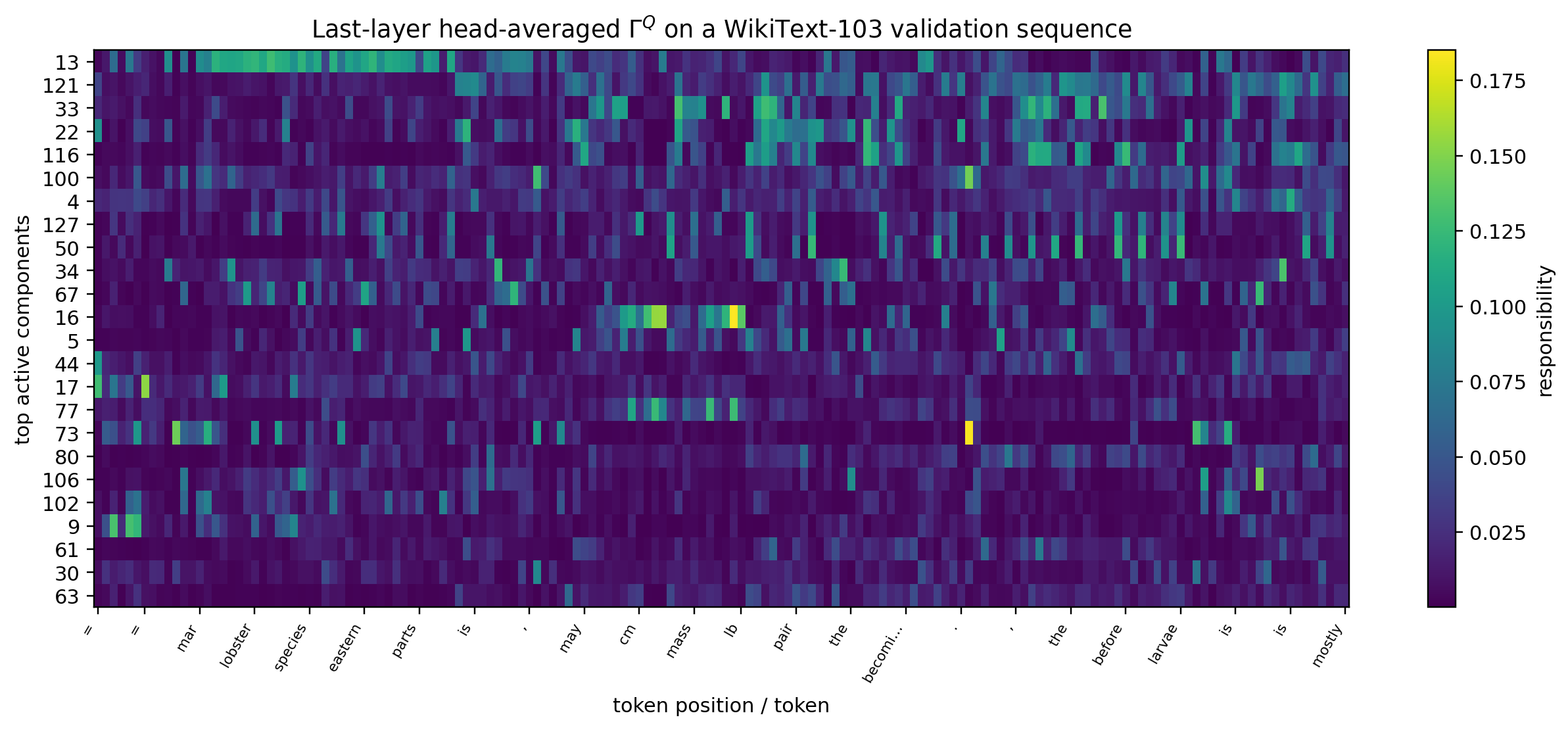}
    \caption{Last-layer, head-averaged GMA query responsibilities on a WikiText-103 validation sequence. Rows show the most active latent routing channels in the sequence and columns show token positions. Concentrated bands indicate that different token positions route through distinct latent components.}
    \label{fig:gma_gammaq_heatmap}
\end{figure}

Figure~\ref{fig:gma_gammaq_heatmap} visualizes the learned responsibility structure at the token level. The heatmap shows that different token positions activate different subsets of latent routing channels, rather than routing uniformly through all components. This provides qualitative evidence that the GMA responsibility matrix is an analyzable latent routing object.

\begin{figure}[H]
    \centering
    \includegraphics[width=0.82\textwidth]{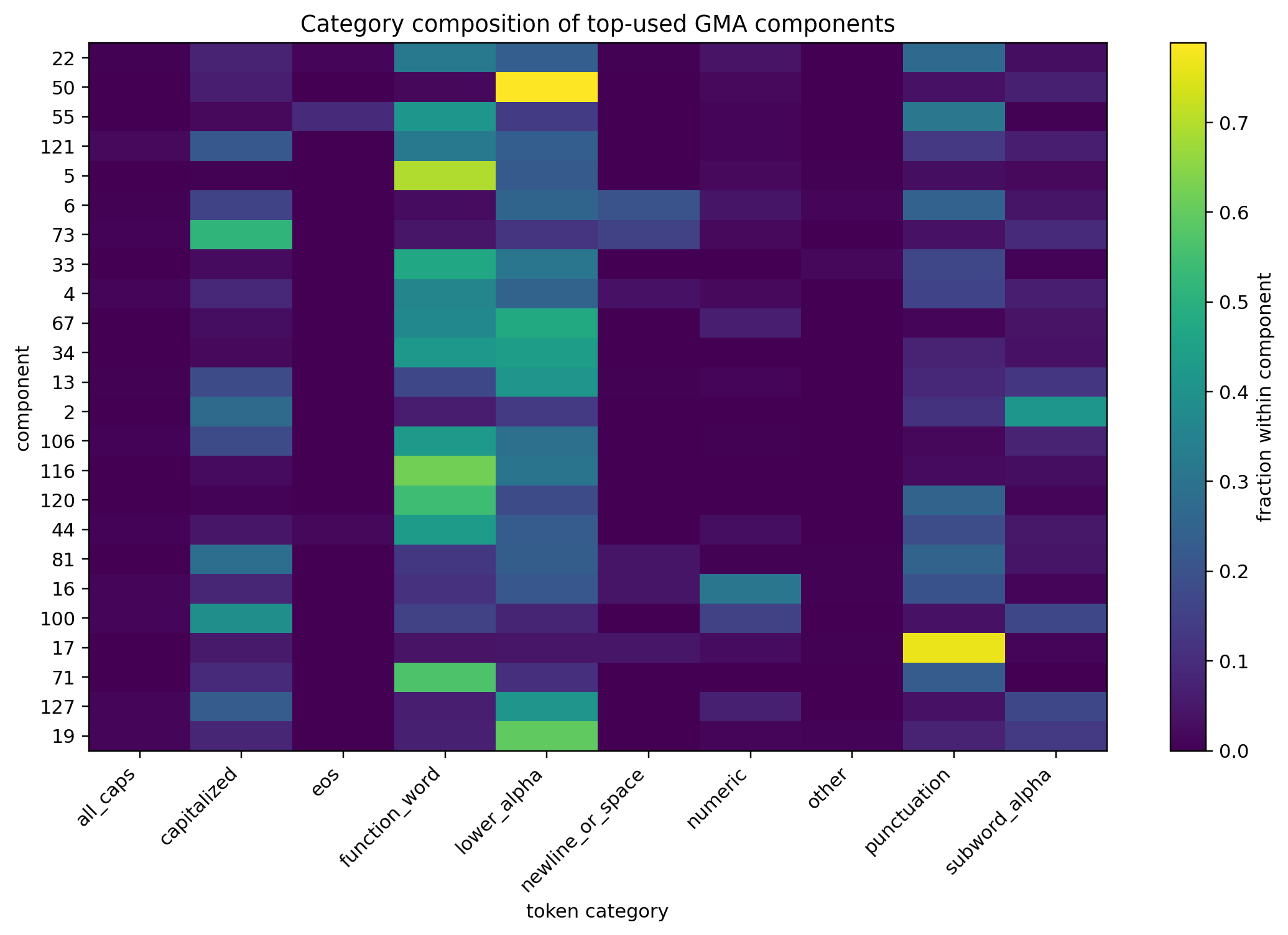}
    \caption{Category composition of the most-used GMA components under hard assignments $z_t=\arg\max_k \bar{\gamma}^Q_{t,k}$. Rows correspond to latent routing channels and columns correspond to surface-form token categories. Several components specialize toward categories such as function words, punctuation, capitalized tokens, lower-case alphabetic tokens, and subword fragments.}
    \label{fig:gma_category_heatmap}
\end{figure}

Figure~\ref{fig:gma_category_heatmap} further shows that high-usage components are not purely random with respect to surface-form categories: several components have visible preferences for categories such as function words, punctuation, capitalized tokens, lower-case alphabetic tokens, and subword fragments. This provides evidence that the learned responsibility matrices are analyzable latent routing objects. Additional diagnostics, including the token-category distribution, representative component examples, permutation baselines, component-usage histograms, token-level assignment strips, and PCA projections, are provided in Appendix~\ref{app:gma_interpretability}.

However, the components should not be interpreted as clean semantic classes. Many components are mixed, and the categories used here are surface-form categories rather than full syntactic or semantic annotations\footnote{Additional diagnostic plots in Appendix~\ref{app:gma_interpretability}, including component-usage histograms, purity bar plots, token-level assignment strips, and PCA projections of responsibility vectors, show partial structure with substantial overlap rather than fully separated clusters.}. We therefore conclude that GMA learns a non-collapsed and analyzable latent responsibility structure with moderate alignment to token surface roles, rather than a set of fully disentangled semantic concepts.

\section{Discussion}
\label{sec:discussion}

We position GMA as a new attention-style mechanism, not as a universally superior replacement for existing attention or state-space models. Its central contribution is to change both the \emph{geometry} and the \emph{computational ordering} of attention. At its most abstract level, an attention score measures the \textit{affinity} or \textit{compatibility} between two representations. This affinity need not be a dot product: it may be defined by cosine similarity, learned additive scores, kernels, distances, divergences, or other task-adapted similarity functions. From this perspective, GMA can be seen as a responsibility-space attention mechanism. 
Rather than comparing projected queries and keys directly in the original routing representation space, GMA first maps them into posterior responsibility vectors over a shared set of learned Gaussian mixture components, and then defines their unnormalized affinity by the overlap
(Eq.~\eqref{eq:gma_induced_attention_unnormalised})
\[
    \widetilde A^{\mathrm{GMA}}_{ij}
    =
    \langle \gamma^Q_i,\gamma^K_j\rangle
    =
    \sum_{k=1}^K \gamma^Q_{i,k}\gamma^K_{j,k}.
\]
The expressiveness of this affinity depends not only on the number of mixture components $K$, but also on the routing dimension, covariance parameterization, projection matrices, number of heads, depth, and training objective. We discuss this broader view of attention design in Appendix~\ref{app:attention_similarity_distance}.

This responsibility-space affinity alone would not necessarily reduce the computational cost: explicitly materializing $\widetilde A^{\mathrm{GMA}}=\Gamma^Q(\Gamma^K)^\top$ would still produce an $N\times N$ matrix. The efficiency of GMA comes from using the associativity of matrix multiplication to change the order of computation. Instead of computing
\[
    \bigl(\Gamma^Q(\Gamma^K)^\top\bigr)V_X,
\]
GMA first writes values into a $K$-slot latent memory,
\[
    \tilde V=(\Gamma^K)^\top V_X,
\]
and then reads from this memory using query responsibilities,
\[
    O
    =
    \frac{\Gamma^Q\tilde V}{\Gamma^QZ+\epsilon}.
\]
Thus, GMA is best understood as \emph{responsibility-space affinity plus latent-memory routing}: the induced token-to-token affinity exists algebraically, but it is not materialized in the efficient implementation. This is the mechanism that gives GMA its fixed-$K$, linear-in-$N$ activation scaling while retaining probabilistic responsibilities that can be inspected and analyzed.

\paragraph{Scaling behavior and implementation constants.}
The controlled systems profiling experiment confirms this intended scaling behavior. For fixed $K$, the routing-specific responsibility tensors scale as $\mathcal{O}(BHNK)$ rather than $\mathcal{O}(BHN^2)$, and the observed memory growth is approximately linear in sequence length. This supports the cost analysis in Section~\ref{subsec:memory_and_compute_costs}. At the same time, the profiling results also show that asymptotic scaling does not directly imply superior wall-clock efficiency at the sequence lengths tested here. The current high-level PyTorch implementation has larger constant factors than optimized SDPA, Linear Transformer, Performer, and Mamba implementations. This is not surprising: GMA explicitly evaluates Gaussian log densities, Mahalanobis terms, posterior responsibilities, and normalizers, whereas several baselines benefit from simpler arithmetic, fused kernels, or highly optimized recurrent or state-space operators. Thus, the empirical message is twofold: GMA has the intended fixed-$K$ linear memory structure, but realizing its full speed potential will likely require specialized kernels and implementation-level optimization.

\paragraph{Task accuracy and inductive bias.}
The LRA results suggest that the additional cost of probabilistic routing can come with useful representational capacity. On ListOps and byte-level Text classification, GMA is competitive with standard SDPA attention and gives the strongest average performance among the attention-style baselines evaluated in our PyTorch pipeline. This indicates that the $N\times K$ latent bottleneck does not merely compress away useful information; learned responsibilities can preserve task-relevant sequence structure while avoiding an explicit $N\times N$ attention matrix. However, the comparison with Mamba is also important. Mamba is substantially stronger on the byte-level Text task, suggesting that selective state-space dynamics provide a different and highly effective inductive bias for this setting. Therefore, GMA offers a competitive probabilistic attention-style alternative with a distinct routing mechanism and diagnostic structure.

\paragraph{Autoregressive modelling.}
The WikiText-103 experiment gives a more demanding test of causal GMA. Causal GMA improves over the Linear Transformer and Performer baselines in validation perplexity under the same training budget, which suggests that learned Gaussian responsibilities can preserve more predictive information than the tested fixed or randomized feature-map approximations in this setup. Nevertheless, GMA remains behind optimized causal SDPA and Mamba in both perplexity and throughput. Increasing the number of mixture components from $K=128$ to $K=256$ improves perplexity but reduces throughput, exposing the expected capacity--efficiency trade-off. These results clarify the present role of GMA in language modelling: its immediate strength is not raw speed or state-of-the-art perplexity, but the combination of linear scaling, probabilistic routing, and analyzable latent responsibilities. Longer contexts, optimized kernels, and hybrid local--global designs are natural settings in which this trade-off should be tested more thoroughly.

\paragraph{Interpretability and responsibility diagnostics.}
The interpretability experiment directly examines one of the main advantages of a responsibility-based mechanism. The last-layer, head-averaged query responsibilities on WikiText-103 use most of the available latent routing channels: 104 out of 128 components receive at least one hard assignment, and the normalized usage entropy is 0.933. This indicates broad component usage rather than collapse to a small subset of mixture components. The hard assignments also carry measurable surface-form information, with weighted category purity 0.483 and normalized mutual information 0.244 relative to the token categories used in the analysis. These numbers should be interpreted carefully. The categories are surface-form categories derived from GPT-2 BPE tokens, not full syntactic or semantic annotations, and many components remain mixed. Moreover, because the diagnostic averages responsibilities over heads, the resulting components should be read as head-averaged routing channels rather than isolated per-head experts. Even with these qualifications, the analysis shows that GMA exposes a latent routing structure that can be quantified, visualized, and compared against external token annotations. This kind of diagnostic access is less direct in many random-feature, low-rank, or state-space alternatives.

\paragraph{Limitations.}
Several limitations remain. First, the present implementation is not optimized at the kernel level, so its wall-clock throughput underestimates what may be possible with fused Gaussian-responsibility computation and more efficient prefix accumulation. Second, all experiments in this paper use finite-$K$ GMA; the number of components is selected manually rather than adapted automatically to the data or task. Third, the experiments focus on self-attention-style sequence mixing. Cross-attention follows naturally from the same formulation, as discussed in Appendix~\ref{app:self_cross_gma}, but it is not empirically evaluated here. Fourth, the interpretability analysis is intentionally limited: it uses surface-form token categories and head-averaged responsibilities, so it does not establish that GMA components correspond to clean semantic, syntactic, or discourse-level concepts. Finally, although GMA induces a constrained non-negative low-rank affinity structure, this does not guarantee component specialization or disentanglement; these properties must be measured
empirically.

\subsection*{\textit{GMA Extensions: Adaptive Mixtures and Probabilistic Expert Routing}}
\label{sec:future_extensions}

The experiments in this paper evaluate GMA as a finite-$K$ linear-time routing mechanism with learned mixture priors, means, and diagonal covariances. The results suggest several directions for future work: implementation-level optimization, adaptive mixture capacity, hybrid local--global attention, cross-attention and multimodal applications, and probabilistic expert routing.

\paragraph{Optimized and hybrid GMA.}
A first practical direction is to develop fused kernels for Gaussian log-density evaluation, responsibility normalization, and causal prefix accumulation. The current implementation is useful for validating the method and analyzing its behavior, but it does not exploit the same level of hardware specialization as SDPA or state-space implementations. A second direction is to combine exact local attention with long-range GMA routing. Such a hybrid architecture could use standard attention to capture short-range syntax, local byte-level patterns, or fine-grained visual details, while using mixture routing to summarize and retrieve long-range information through a fixed number of latent memory channels. This may be particularly relevant for long-context language modelling, document understanding, genomics, and multimodal settings where local detail and global context are both important.

\paragraph{Bayesian and nonparametric GMA.}
In the finite GMA formulation used in this paper, the number of latent Gaussian components $K$ is fixed, and the mixture prior $\pi=(\pi_1,\ldots,\pi_K)$ is learned through unconstrained logits followed by a softmax, as in Eq.~\eqref{eq:gma_prior_reparameterization}. A Bayesian extension would place a \textit{Dirichlet} prior over the mixture weights,
\begin{equation}
\label{eq:bayesian_gma_dirichlet_prior}
    \pi
    \sim
    \operatorname{Dirichlet}
    \left(
    \frac{\alpha_0}{K},\ldots,\frac{\alpha_0}{K}
    \right),
\end{equation}
where $\alpha_0>0$ controls how concentrated or diffuse the component usage is. Small values of $\alpha_0$ encourage sparse component usage, while larger values encourage more uniform use of the available mixture components.

For a generic routing vector $x_i$, such as a query vector $q_i=(Q_X)_{i,:}\in\mathbb{R}^{d_r}$ or a key vector $k_i=(K_X)_{i,:}\in\mathbb{R}^{d_r}$, the responsibility retains the same form (Eq.\eqref{eq:gma_responsibility}),
\begin{equation}
\label{eq:bayesian_gma_responsibility}
    \gamma_{i,k}
    =
    \frac{
    \pi_k \mathcal{N}(x_i\mid \mu_k,\Sigma_k)
    }{
    \sum_{\ell=1}^{K}
    \pi_\ell \mathcal{N}(x_i\mid \mu_\ell,\Sigma_\ell)
    },
\end{equation}
but $\pi$ is treated as a random variable or regularized latent parameter rather than only as a point estimate. 
A finite Bayesian variant could be implemented by learning a variational Dirichlet posterior $q(\pi)=\operatorname{Dirichlet}(a)$ and adding the KL penalty
\begin{equation}
\label{eq:bayesian_gma_dirichlet_kl}
    \operatorname{KL}
    \left[
    q(\pi)
    \,\|\,
    \operatorname{Dirichlet}
    \left(
    \frac{\alpha_0}{K},\ldots,\frac{\alpha_0}{K}
    \right)
    \right]
\end{equation}
to the task loss $\mathcal{L}_{\mathrm{task}}$. The forward pass could use samples from $q(\pi)$ to propagate uncertainty over mixture weights, or use the posterior mean $\mathbb{E}_{q}[\pi_k]=a_k/\sum_{\ell}a_\ell$ as a deterministic approximation. This would provide a route toward representing uncertainty over mixture weights, rather than only learning point-valued priors.

A further extension is to replace the finite Dirichlet prior with a \textit{Dirichlet process} prior, yielding a nonparametric or truncated infinite-mixture variant of GMA \citep{ferguson1973bayesian,sethuraman1994constructive, rasmussen2000infinite,blei2006variational}. In stick-breaking form,
\begin{equation}
\label{eq:dp_gma_stick_breaking}
    v_k \sim \operatorname{Beta}(1,\alpha),
    \qquad
    \pi_k = v_k\prod_{\ell<k}(1-v_\ell),
\end{equation}
where $\alpha$ is the concentration parameter. 
In practice, such a model would be implemented using a finite truncation at a maximum size $K_{\max}$. The nominal routing cost would then scale as $\mathcal{O}(NK_{\max})$. The effective number of active components could be smaller if many learned stick weights become negligible, although actual wall-clock savings would require pruning, sparse evaluation, or another mechanism that skips negligible components. This would allow GMA to adapt its effective routing capacity to the data and task, rather than relying only on manual selection of $K$.

\paragraph{Cross-attention and multimodal routing.}
Although the experiments in this work focus on self-attention-style sequence mixing, the same responsibility-space mechanism extends naturally to cross-attention. Queries can be formed from a query-side sequence $X$, while keys and values are formed from a separate context sequence $Y$, producing $\Gamma_X^Q$ and $\Gamma_Y^K$ as described in Appendix~\ref{app:self_cross_gma}. This suggests possible applications in encoder--decoder architectures, retrieval-augmented models, vision--language models, and multimodal systems in which the query and context streams have different statistical structure. In such settings, the mixture components may serve as shared latent routing channels between modalities or between source and target representations. A systematic study of GMA cross-attention is therefore a natural next step.

\paragraph{Probabilistic expert routing.}
A second direction is to use Gaussian responsibilities as probabilistic gates over attention heads, sequence-mixing modules, or expert networks. Let $r_i\in\mathbb{R}^{d_g}$ denote a gating representation of token $i$, and suppose that each expert or head $h=1,\ldots,H$ is associated with a Gaussian routing component with parameters $(\rho_h,\nu_h,\Lambda_h)$, where $\rho_h\geq 0$, $\sum_{h=1}^H\rho_h=1$, $\nu_h\in\mathbb{R}^{d_g}$, and $\Lambda_h\in\mathbb{S}^{d_g}_{++}$. A GMA-style gate could define
\begin{equation}
\label{eq:gma_moe_responsibility}
    \gamma_{i,h}^{\mathrm{MoE}}
    =
    \frac{
    \rho_h\mathcal{N}(r_i\mid\nu_h,\Lambda_h)
    }{
    \sum_{\ell=1}^{H}
    \rho_\ell\mathcal{N}(r_i\mid\nu_\ell,\Lambda_\ell)
    }.
\end{equation}
Here $\gamma_{i,h}^{\mathrm{MoE}}$ is the posterior responsibility of expert $h$ for token $i$ under the learned Gaussian gating model.

If all experts are evaluated, the output can be written as the dense mixture
\begin{equation}
\label{eq:gma_moe_dense_output}
    y_i
    =
    \sum_{h=1}^{H}
    \gamma_{i,h}^{\mathrm{MoE}}
    \operatorname{Expert}_h(X)_i,
\end{equation}
where $\operatorname{Expert}_h(X)_i$ denotes the output of expert, head, or sequence-mixing module $h$ at token position $i$. This dense form is fully differentiable, but it is not computationally sparse as all experts are executed. To obtain conditional-computation savings, one would need an additional sparse selection rule, such as top-$r$ routing, stochastic sampling, or a differentiable relaxation.

For example, let $\mathcal{S}_i\subset\{1,\ldots,H\}$ denote the selected expert set for token $i$, obtained by top-$r$ selection or sampling from $\gamma_i^{\mathrm{MoE}}$. A sparse GMA-MoE output can then be written as
\begin{equation}
\label{eq:gma_moe_sparse_output}
    y_i
    =
    \sum_{h\in\mathcal{S}_i}
    \bar{\gamma}_{i,h}^{\mathrm{MoE}}
    \operatorname{Expert}_h(X)_i,
    \qquad
    \bar{\gamma}_{i,h}^{\mathrm{MoE}}
    =
    \frac{
    \gamma_{i,h}^{\mathrm{MoE}}
    }{
    \sum_{\ell\in\mathcal{S}_i}
    \gamma_{i,\ell}^{\mathrm{MoE}}
    }.
\end{equation}
The discrete expert-selection step could be implemented using an $\operatorname{argmax}$ or top-$r$ operator, stochastic sampling, the Gumbel--Softmax or Concrete relaxation \citep{jang2017categorical,maddison2017concrete}, or a straight-through variant \citep{bengio2013estimating,jang2017categorical,huang2026NBSR}. This formulation is related to conventional MoE routing \citep{shazeer2017outrageously,lepikhin2020gshard,fedus2022switch}, but differs in its probabilistic interpretation. Standard MoE gates are often implemented as learned softmax or noisy top-$k$ classifiers over experts. A GMA-style gate instead defines expert probabilities as posterior responsibilities under a learned latent-density model. This could provide a principled way to combine conditional computation with uncertainty-aware and interpretable expert selection, while preserving the possibility of sparse execution through top-$r$ or sampled routing.

Taken together, these directions suggest that GMA is best viewed not only as a single efficient-attention module, but also as an instance of a broader design principle: attention mechanisms can be designed by choosing the space in which compatibility is measured, the normalization used to turn compatibility into routing weights, and the computational ordering used to retrieve values. The present paper studies one concrete realization of that principle using finite Gaussian mixtures and responsibility-space routing. In its current form, GMA is therefore \emph{\textit{not} all you need}: it is a probabilistic, interpretable, linear-time attention-style alternative whose value is clearest when latent routing structure, analyzability, and fixed-$K$ scaling matter alongside task accuracy and implementation efficiency.

\section{Conclusion}
\label{sec:conclusion}

We introduced \textit{Gaussian Mixture Attention} (GMA), a probabilistic attention-style sequence mixer that rethinks attention as \textit{routing through a learned latent responsibility space}. Instead of comparing every query directly with every key, GMA maps queries and keys to posterior responsibility vectors over $K$ learned Gaussian mixture components. Their overlap defines an implicit responsibility-space \textit{affinity}, and an efficient implementation avoids materializing the corresponding $N\times N$ matrix by first writing values into a $K$-slot latent memory and then reading from that memory with query responsibilities. This combination of responsibility-space affinity and associative latent-memory routing gives GMA fixed-$K$ linear-in-$N$ activation scaling while preserving a normalized attention-style interpretation.

Methodologically, GMA opens a complementary route for attention design: rather than modifying only sparsity patterns, low-rank projections, kernel feature maps, or implementation kernels, it changes the space in which \textit{compatibility} is computed. The resulting responsibility matrices are not merely computational intermediates; they are probabilistic routing objects that can be analyzed, visualized, and compared with token-level annotations. We developed bidirectional and causal variants of GMA, gave an end-to-end differentiable parameterization of the Gaussian mixture components, and analyzed its gradient structure, constrained non-negative low-rank affinity interpretation, and local routing stability.

Empirically, the results support a balanced view of GMA. Controlled systems profiling confirms the intended approximately \textit{linear memory growth with sequence length} for fixed $K$, but also shows that the present high-level PyTorch implementation has larger constant factors than optimized SDPA, Linear Transformer, Performer, and Mamba implementations. On LRA ListOps and byte-level Text classification tasks, GMA is competitive with standard attention and achieves the strongest average performance among the attention-style baselines evaluated in our pipeline. On WikiText-103, causal GMA improves over Linear Transformer and Performer in validation perplexity, but remains behind optimized causal SDPA and Mamba in both perplexity and throughput. The interpretability analysis further shows broad, non-collapsed component usage and moderate alignment between hard responsibility assignments and surface-form token categories.

These findings position GMA \textit{not} as a universal replacement for optimized softmax attention or state-space sequence models, but as a probabilistic, interpretable, fixed-$K$ linear-time attention-style alternative. Its advantages are clearest when fixed-$K$ scaling, normalized latent routing, and access to analyzable responsibility matrices are valuable. Its present limitations are also clear: the implementation is not yet kernel-optimized, the number of components is manually selected, the experiments focus on self-attention-style settings, and the current interpretability analysis does not establish clean semantic or syntactic disentanglement. Future work may pursue fused and hardware-aware GMA kernels, hybrid local--global architectures, adaptive Bayesian or nonparametric component selection, cross-attention and multimodal extensions, probabilistic expert-routing variants, and richer linguistic or task-specific analyses of learned mixture components. In this sense, GMA is \textit{not} ``all you need'', but it opens a complementary route for designing attention mechanisms through latent probabilistic routing spaces.

\section*{Author Contributions}

\textit{Dr. Yongchao Huang} initiated the idea of Gaussian Mixture Attention, formulated the methodological and theoretical framework, developed the mathematical derivations, co-designed and ran the experimental codebase, and drafted the manuscript. \textit{Hassan Raza} contributed to the empirical evaluation by co-designing and running the Long Range Arena (LRA) and WikiText-103 experiments. Both authors reviewed and approved the final manuscript. Despite careful review, some errors or inaccuracies may remain; readers are therefore advised to exercise due caution when consulting the material.

Code availability: \href{https://github.com/YongchaoHuang/Gaussian_mixture_attention}{GitHub}

\bibliographystyle{plain}
\bibliography{reference}

\appendix

\section{Notation for Gaussian Mixture Attention} \label{app:gma_notation} 

Table~\ref{tab:gma_notation} summarizes the notation used in the GMA methodology, theoretical analysis, and experiments.

\begin{table}[p]
\centering
\caption{Notation used in the GMA methodology and analysis.}
\label{tab:gma_notation}
\tiny
\setlength{\tabcolsep}{4pt}
\renewcommand{\arraystretch}{1.08}
\begin{tabular}{@{}lp{0.72\linewidth}@{}}
\toprule
\textbf{Symbol} & \textbf{Meaning} \\
\midrule

\multicolumn{2}{@{}l}{\textit{Sequence, projection, and standard-attention notation}} \\
\addlinespace[2pt]

$B$ & Batch size, used in systems profiling and activation-scaling discussions. \\

$H$ & Number of attention heads, used when discussing multi-head activation scaling. \\

$N$ & Sequence length, i.e., the number of token positions. \\

$d_{\mathrm{model}}$ & Model hidden dimension of each input token representation before query, key, and value projection. \\

$d_k$ & Query/key channel dimension in standard scaled dot-product attention. \\

$d_r$ & GMA routing dimension used to compute Gaussian-mixture responsibilities. \\

$d_v$ & Value dimension of the projected value vectors. \\

$X\in\mathbb{R}^{N\times d_{\mathrm{model}}}$ 
& Input sequence representations to a GMA layer in the self-attention setting. \\

$Q\in\mathbb{R}^{N\times d_k}$ 
& Query matrix in standard scaled dot-product attention. \\

$K_{\mathrm{att}}\in\mathbb{R}^{N\times d_k}$ 
& Key matrix in standard scaled dot-product attention; the subscript distinguishes it from the scalar $K$ used for the number of GMA mixture components. \\

$V\in\mathbb{R}^{N\times d_v}$ 
& Value matrix in standard scaled dot-product attention. \\

$W_Q,W_K\in\mathbb{R}^{d_{\mathrm{model}}\times d_r}$ 
& GMA query and key projection matrices. \\

$W_V\in\mathbb{R}^{d_{\mathrm{model}}\times d_v}$ 
& GMA value projection matrix. \\

$Q_X=XW_Q\in\mathbb{R}^{N\times d_r}$ 
& Query projection of the input sequence in GMA. \\

$K_X=XW_K\in\mathbb{R}^{N\times d_r}$ 
& Key projection of the input sequence in GMA. \\

$V_X=XW_V\in\mathbb{R}^{N\times d_v}$ 
& Value projection of the input sequence in GMA. \\

$q_i=(Q_X)_{i,:}$ 
& Query routing vector at token position $i$. \\

$k_i=(K_X)_{i,:}$ 
& Key routing vector at token position $i$. \\

$x_i\in\mathbb{R}^{d_r}$ 
& Generic routing vector, used when a formula applies to either $q_i$ or $k_i$. \\

\midrule
\multicolumn{2}{@{}l}{\textit{Gaussian-mixture routing notation}} \\
\addlinespace[2pt]

$K$ & Number of latent Gaussian mixture components in GMA. \\

$\mu_k\in\mathbb{R}^{d_r}$ 
& Mean vector of the $k$-th Gaussian routing component. \\

$\Sigma_k\in\mathbb{S}_{++}^{d_r}$ 
& Covariance matrix of the $k$-th Gaussian component. In the implemented GMA layer, $\Sigma_k$ is diagonal. \\

$\Sigma_k=\operatorname{diag}(\sigma^2_{k,1},\ldots,\sigma^2_{k,d_r})$ 
& Diagonal covariance parameterization used for computational tractability. \\

$\sigma^2_{k,m}$ 
& Diagonal variance of component $k$ in routing coordinate $m$. \\

$\pi_k$ 
& Mixture prior probability of the $k$-th Gaussian component, with $\sum_{k=1}^K\pi_k=1$. \\

$z_i\in\{1,\ldots,K\}$ 
& Latent component-assignment variable for routing vector $x_i$. \\

$\gamma_{i,k}$ 
& Responsibility of component $k$ for generic routing vector $x_i$, i.e. $p(z_i=k\mid x_i)$. \\

$\gamma^Q_{i,k}$ 
& Responsibility of component $k$ for query routing vector $q_i$. \\

$\gamma^K_{i,k}$ 
& Responsibility of component $k$ for key routing vector $k_i$. \\

$\Gamma\in\mathbb{R}^{N\times K}$ 
& Generic responsibility matrix for a sequence of routing vectors. \\

$\Gamma^Q\in\mathbb{R}^{N\times K}$ 
& Query responsibility matrix computed from $Q_X$. \\

$\Gamma^K\in\mathbb{R}^{N\times K}$ 
& Key responsibility matrix computed from $K_X$; the superscript $K$ denotes ``key'', not the number of mixture components. \\

$\omega\in\mathbb{R}^{K\times d_r}$ 
& Unconstrained covariance parameters used in the softplus reparameterization of diagonal variances. \\

$\alpha\in\mathbb{R}^{K}$ 
& Unconstrained logits used to parameterize the mixture priors $\pi_k$ by a softmax. \\

$\Theta_{\mathrm{GMA}}$ 
& Trainable parameters of a GMA layer, e.g. $\{W_Q,W_K,W_V,\mu,\omega,\alpha\}$. \\

$\epsilon_\sigma>0$ 
& Numerical stabilizer used to lower-bound diagonal covariance entries. \\

\midrule
\multicolumn{2}{@{}l}{\textit{Latent-memory routing and induced attention}} \\
\addlinespace[2pt]

$\mathbf{1}_N$ 
& All-ones vector in $\mathbb{R}^{N}$. \\

$\tilde V\in\mathbb{R}^{K\times d_v}$ 
& Latent memory obtained by aggregating $V_X$ with key responsibilities: $\tilde V=(\Gamma^K)^\top V_X$. \\

$\tilde V_k\in\mathbb{R}^{d_v}$ 
& $k$-th latent memory slot, i.e. the $k$-th row of $\tilde V$. \\

$Z\in\mathbb{R}^{K}$ 
& Component-wise key-responsibility normalizer: $Z=(\Gamma^K)^\top\mathbf{1}_N$. \\

$Z_k$ 
& Total key-responsibility mass assigned to component $k$. \\

$O\in\mathbb{R}^{N\times d_v}$ 
& Output sequence representation produced by GMA. \\

$\widetilde A^{\mathrm{GMA}}\in\mathbb{R}^{N\times N}$ 
& Implicit unnormalized GMA affinity matrix, $\widetilde A^{\mathrm{GMA}}=\Gamma^Q(\Gamma^K)^\top$; used for analysis but not materialized in the efficient implementation. \\

$A^{\mathrm{GMA}}\in\mathbb{R}^{N\times N}$ 
& Implicit normalized GMA affinity matrix induced by row-wise normalization of $\widetilde A^{\mathrm{GMA}}$; not materialized in the efficient implementation. \\

$\epsilon>0$ 
& Numerical stabilizer used in normalized latent-memory routing. \\

\midrule
\multicolumn{2}{@{}l}{\textit{Causal GMA notation}} \\
\addlinespace[2pt]

$\tilde V^{(i)}_k\in\mathbb{R}^{d_v}$ 
& Prefix-restricted latent memory slot for component $k$ at autoregressive position $i$, using only positions $j\leq i$. \\

$Z^{(i)}_k$ 
& Prefix-restricted key-responsibility normalizer for component $k$ at autoregressive position $i$. \\

$A^{\mathrm{cGMA}}\in\mathbb{R}^{N\times N}$ 
& Causal GMA affinity matrix supported only on prefix positions $j\leq i$. \\

\midrule
\multicolumn{2}{@{}l}{\textit{Gradient and stability analysis}} \\
\addlinespace[2pt]

$\mathcal{L}$ 
& Downstream task loss used in gradient-flow analysis. \\

$s_{i,k}$ 
& Pre-normalized log-density score of routing vector $x_i$ under Gaussian component $k$. \\

$g_{i,j}=\partial\mathcal{L}/\partial\gamma_{i,j}$ 
& Upstream gradient arriving at responsibility $\gamma_{i,j}$. \\

$\theta$ 
& Generic learnable scalar parameter entering the score $s_{i,k}$, e.g. a component of $\mu$, $\omega$, $\alpha$, $W_Q$, or $W_K$. \\

$\mathbb{S}_{++}^{d_r}$ 
& Cone of $d_r\times d_r$ symmetric positive-definite matrices. \\

$R$ 
& Radius used in the local Lipschitz analysis to bound $\|x-\mu_k\|_2$. \\

$L_\gamma$ 
& A local Lipschitz constant for the responsibility map $x\mapsto\gamma(x)$. \\

\midrule
\multicolumn{2}{@{}l}{\textit{Computational-cost notation}} \\
\addlinespace[2pt]

$\mathcal{C}_{\mathrm{GMA}}$ 
& Approximate routing cost of GMA, excluding the standard linear projections. \\

$\mathcal{C}_{\mathrm{proj}}$ 
& Cost of forming the query, key, and value projections $Q_X$, $K_X$, and $V_X$. \\

\bottomrule
\end{tabular}
\end{table}

\section{Motivation: the Matrix Multiplication Association Rule}
\label{app:matrix_association_rule}

The linear-time implementation of GMA relies on a standard principle from matrix algebra and matrix-chain multiplication. Matrix multiplication is \textit{associative}: whenever the dimensions are compatible,
\begin{equation}
    (AB)C = A(BC).
\end{equation}
Thus, different parenthesizations give the same final matrix. However, they can have very different computational costs. This is the classical matrix-chain multiplication observation: although all valid parenthesizations of a matrix product are algebraically equivalent, the choice of parenthesization can have a dramatic impact on the number of scalar multiplications required \cite{cormen2009introduction}. The associative law itself is a basic linear algebra identity \cite{strang2020}.

To see this concretely, let
\[
    A\in\mathbb{R}^{m\times n},
    \qquad
    B\in\mathbb{R}^{n\times p},
    \qquad
    C\in\mathbb{R}^{p\times q}.
\]
Computing $(AB)C$ first forms $AB\in\mathbb{R}^{m\times p}$, so the approximate multiplication cost is \cite{stephen2018VMLS}
\begin{equation}
    \mathcal{C}_{(AB)C}
    =
    mnp + mpq.
\end{equation}
In contrast, computing $A(BC)$ first forms $BC\in\mathbb{R}^{n\times q}$, giving cost
\begin{equation}
    \mathcal{C}_{A(BC)}
    =
    npq + mnq.
\end{equation}
These two costs can differ substantially, even though the final result is identical.
For example, take
\[
    A\in\mathbb{R}^{1000\times 10},
    \qquad
    B\in\mathbb{R}^{10\times 1000},
    \qquad
    C\in\mathbb{R}^{1000\times 10}.
\]
Then computing $(AB)C$ first forms the large intermediate matrix $AB\in\mathbb{R}^{1000\times 1000}$. The cost is
\[
    1000\cdot 10\cdot 1000
    +
    1000\cdot 1000\cdot 10
    =
    2\times 10^7.
\]
By contrast, computing $A(BC)$ first forms the much smaller intermediate matrix $BC\in\mathbb{R}^{10\times 10}$. The cost is
\[
    10\cdot 1000\cdot 10
    +
    1000\cdot 10\cdot 10
    =
    2\times 10^5.
\]
Thus, the two parenthesizations produce the same output, but the second is cheaper by a factor of $100$.

This principle directly explains the efficient implementation of GMA. Ignoring the row-wise normalizer for the moment, the numerator of the GMA output in Eq.~\eqref{eq:gma_normalized_broadcasting} has the algebraic form
\[
    \Gamma^Q(\Gamma^K)^\top V_X.
\]
If we identify
\[
    A=\Gamma^Q\in\mathbb{R}^{N\times K},
    \qquad
    B=(\Gamma^K)^\top\in\mathbb{R}^{K\times N},
    \qquad
    C=V_X\in\mathbb{R}^{N\times d_v},
\]
then the explicit-affinity computation corresponds to
\begin{equation}
    \bigl(\Gamma^Q(\Gamma^K)^\top\bigr)V_X.
\end{equation}
This first forms the implicit token-to-token affinity matrix
\[
    \Gamma^Q(\Gamma^K)^\top\in\mathbb{R}^{N\times N},
\]
which costs $\mathcal{O}(N^2K)$ arithmetic and requires $\mathcal{O}(N^2)$ intermediate storage before multiplying by $V_X$.

GMA instead uses the associative parenthesization
\begin{equation}
    \Gamma^Q\bigl((\Gamma^K)^\top V_X\bigr).
\end{equation}
The inner product
\begin{equation}
    \tilde V=(\Gamma^K)^\top V_X
    \in\mathbb{R}^{K\times d_v}
\end{equation}
constructs a compact latent memory with only $K$ slots. The final multiplication $\Gamma^Q\tilde V$ then reads from this memory. The cost of these two matrix multiplications is
\begin{equation}
    NKd_v + NKd_v
    =
    2NKd_v,
\end{equation}
and the intermediate memory has size only $K\times d_v$. Therefore, GMA obtains the same implicit affinity interpretation without materializing the $N\times N$ affinity matrix.

The full normalized GMA computation also includes the denominator
\begin{equation}
    Z=(\Gamma^K)^\top\mathbf{1}_N,
    \qquad
    \Gamma^Q Z+\epsilon,
\end{equation}
which adds only $\mathcal{O}(NK)$ arithmetic. In addition, computing the two responsibility matrices $\Gamma^Q$ and $\Gamma^K$ under diagonal Gaussian components costs $\mathcal{O}(2NKd_r)$, up to lower-order normalization terms. Thus, the overall routing computation remains linear in $N$ for fixed $K$, $d_r$, and $d_v$, while retaining an implicit normalized attention matrix.

\section[Understanding GMA in Detail]{Understanding GMA in Detail\protect\footnote{In this section, the author would like to express sincere gratitude to those whose textbooks, lectures, and scholarly discussions have shaped his understanding of numerical linear algebra, including Gilbert Strang, Stephen Boyd, Steven Roman, Nick Higham, and many others.}}
\label{app:understand_GMA}

We have briefly touched on the intuition behind GMA as responsibility-space affinity followed by latent-memory routing at the end of the core methodology in Section~\ref{sec:linear_gma}. This appendix gives a more dimension-explicit walkthrough of the same computation. The goal is to clarify how key responsibilities write values into a latent memory, how query responsibilities read from this memory, how the implicit token-to-token affinity arises, and why the computation avoids materializing an explicit $N\times N$ attention matrix.

\subsection{Objects and Dimensions}

Consider a sequence of length $N$. Let $V_X\in\mathbb{R}^{N\times d_v}$ denote the value matrix, where $V_{X,j}\in\mathbb{R}^{d_v}$ is the value vector at position $j$. GMA computes two responsibility matrices:
\[
    \Gamma^K \in \mathbb{R}^{N\times K},
    \qquad
    \Gamma^Q \in \mathbb{R}^{N\times K}.
\]
The row $\gamma^K_j=(\gamma^K_{j,1},\ldots,\gamma^K_{j,K})$ gives the posterior responsibilities of the $K$ latent Gaussian components for key position $j$. Similarly, $\gamma^Q_i=(\gamma^Q_{i,1},\ldots,\gamma^Q_{i,K})$ gives the responsibilities of the same latent components for query position $i$. Thus, $\gamma^K_{j,k}$ measures how much key position $j$ belongs to latent component $k$, while $\gamma^Q_{i,k}$ measures how much query position $i$ reads from component $k$.

Each row of a responsibility matrix lies on the probability simplex:
\[
    \sum_{k=1}^K \gamma^K_{j,k}=1,
    \qquad
    \sum_{k=1}^K \gamma^Q_{i,k}=1.
\]
Therefore, GMA routes tokens through normalized probability vectors over latent components rather than through unconstrained similarity scores.

\subsection{The Write Step: Keys Write Values into Latent Memory}

The key responsibilities first aggregate the value vectors into $K$ latent memory slots:
\[
    \tilde V = (\Gamma^K)^\top V_X \in \mathbb{R}^{K\times d_v},
    \qquad
    Z = (\Gamma^K)^\top \mathbf{1}_N \in \mathbb{R}^{K},
\]
where $\mathbf{1}_N$ is the all-ones vector. Component-wise, this means
\[
    \tilde V_k
    =
    \sum_{j=1}^N \gamma^K_{j,k} V_{X,j}
    \in \mathbb{R}^{d_v},
    \qquad
    Z_k
    =
    \sum_{j=1}^N \gamma^K_{j,k}.
\]
Thus, $\tilde V_k$ is the responsibility-weighted sum of all value vectors assigned to latent component $k$, and $Z_k$ records the total key-responsibility mass written into that component.

An intuitive normalized memory slot can be written as
\[
    M_k
    =
    \frac{\tilde V_k}{Z_k+\epsilon}
    =
    \frac{
    \sum_{j=1}^N \gamma^K_{j,k} V_{X,j}
    }{
    \sum_{j=1}^N \gamma^K_{j,k}+\epsilon
    }
    \in \mathbb{R}^{d_v}.
\]
This shows that $M_k$ is the average value stored in latent component $k$, up to the numerical stabilizer $\epsilon$. In practice, the implementation does not need to explicitly materialize $M_k$; it is enough to keep the unnormalized memory $\tilde V_k$ and the normalizer $Z_k$.

\subsection{The Read Step: Queries Read from Latent Memory}

After the write step, query responsibilities read from the latent memory. In matrix form, the GMA output is
\[
    O =
    \frac{\Gamma^Q \tilde V}{\Gamma^Q Z+\epsilon}
    \in \mathbb{R}^{N\times d_v},
\]
where the denominator is broadcast across the value dimension and the division is applied row-wise. For query position $i$, this is
\[
    O_i
    =
    \frac{
    \sum_{k=1}^K \gamma^Q_{i,k}\tilde V_k
    }{
    \sum_{k=1}^K \gamma^Q_{i,k}Z_k+\epsilon
    }.
\]
Substituting the write step into this expression gives
\[
    O_i
    =
    \frac{
    \sum_{k=1}^K
    \gamma^Q_{i,k}
    \sum_{j=1}^N
    \gamma^K_{j,k}V_{X,j}
    }{
    \sum_{k=1}^K
    \gamma^Q_{i,k}
    \sum_{j=1}^N
    \gamma^K_{j,k}
    +\epsilon
    }.
\]
This equation makes the full workflow explicit: key responsibilities distribute values into latent components, and query responsibilities select a normalized mixture of those latent components.

Ignoring the numerical stabilizer for clarity, we can also express the read operation as attention over normalized memory slots. If $Z_k>0$, define $M_k=\tilde V_k/Z_k$ and
\[
    \alpha_{i,k}
    =
    \frac{
    \gamma^Q_{i,k}Z_k
    }{
    \sum_{\ell=1}^K \gamma^Q_{i,\ell}Z_\ell
    }.
\]
Then $\sum_k \alpha_{i,k}=1$, and
\[
    O_i
    =
    \sum_{k=1}^K \alpha_{i,k} M_k.
\]
This form is useful for intuition. The query does not simply average memory slots using $\gamma^Q_{i,k}$ alone. Instead, the effective read weight $\alpha_{i,k}$ also depends on $Z_k$, the amount of key mass stored in component $k$. For fixed query responsibility $\gamma^Q_{i,k}$, a component with larger stored key mass receives a larger effective read weight than an almost-empty component.

\subsection{Implicit Token-to-Token Attention}

Although GMA does not explicitly construct an $N\times N$ attention matrix, it still induces an attention-style weighting over value tokens. Combining the write and read equations gives
\[
    O_i
    =
    \sum_{j=1}^N A^{\mathrm{GMA}}_{ij} V_{X,j},
\]
where
\[
    A^{\mathrm{GMA}}_{ij}
    =
    \frac{
    \sum_{k=1}^K \gamma^Q_{i,k}\gamma^K_{j,k}
    }{
    \sum_{\ell=1}^N
    \sum_{k=1}^K
    \gamma^Q_{i,k}\gamma^K_{\ell,k}
    +\epsilon
    }.
\]
Thus, two tokens interact strongly when their query and key responsibilities overlap in the latent mixture space. In standard attention, token $i$ attends to token $j$ through a direct dot product $q_i^\top k_j$. In GMA, token $i$ attends to token $j$ through their shared responsibility mass $\sum_k \gamma^Q_{i,k}\gamma^K_{j,k}$.

When $\epsilon=0$, the induced weights are row-normalized:
\[
    \sum_{j=1}^N A^{\mathrm{GMA}}_{ij}=1.
\]
With $\epsilon>0$, the row sum is slightly below one, since the stabilizer adds extra positive mass to the denominator. The stabilizer is included only for numerical safety.

\subsection{Workflow Intuition}

The full GMA computation can be understood as a three-stage routing process:

\begin{itemize}
    \item First, the Gaussian mixture maps each token representation to a soft assignment over $K$ latent components. This produces $\Gamma^K$ for keys and $\Gamma^Q$ for queries. These matrices are probabilistic: each row is a distribution over latent components.
    \item Second, key responsibilities write the sequence values into a compact latent memory. Instead of storing $N$ separate value vectors for direct pairwise comparison, GMA stores $K$ responsibility-weighted memory slots. Each slot summarizes the values assigned to one latent component, together with a normalizer recording how much total key mass was assigned there.
    \item Third, query responsibilities read from this latent memory. A query token selects a mixture of memory slots according to its own responsibilities and the amount of key mass stored in those slots. The output is therefore a normalized mixture of component-level summaries.
\end{itemize}

In short, the workflow can be summarised as:
\[
    \text{keys write values into } K \text{ latent memory slots,}
\]
and
\[
    \text{queries read mixtures of those slots to produce token outputs.}
\]

\subsection{Why This Avoids the Quadratic Bottleneck}

Standard dot-product attention forms or implicitly represents all pairwise token-to-token scores, producing an intermediate $N\times N$ attention matrix. GMA instead uses the two $N\times K$ responsibility matrices $\Gamma^K$ and $\Gamma^Q$, together with the latent memory $\tilde V\in\mathbb{R}^{K\times d_v}$ and normalizer $Z\in\mathbb{R}^K$. For fixed $K$, the dominant activation storage is therefore linear in $N$.

The main costs are:
\[
    \text{responsibility computation: } \mathcal{O}(NKd_r),
\]
where $d_r$ is the routing representation dimension, and
\[
    \text{latent write/read operations: } \mathcal{O}(NKd_v).
\]
Thus, for fixed $K$, GMA scales linearly with sequence length while retaining a
normalized attention-style interpretation.

\subsection{Causal GMA}

For autoregressive language modelling, position $i$ must not read information from future positions $j>i$. Causal GMA replaces the global write statistics by prefix-restricted statistics:
\[
    \tilde V^{(i)}_k
    =
    \sum_{j\leq i} \gamma^K_{j,k}V_{X,j},
    \qquad
    Z^{(i)}_k
    =
    \sum_{j\leq i} \gamma^K_{j,k}.
\]
The output at position $i$ is then
\[
    O_i
    =
    \frac{
    \sum_{k=1}^K \gamma^Q_{i,k}\tilde V^{(i)}_k
    }{
    \sum_{k=1}^K \gamma^Q_{i,k}Z^{(i)}_k
    +\epsilon
    }.
\]
This is the same write-read idea, but the memory available to position $i$ is restricted to its prefix. In implementation, the prefix memories are computed by cumulative sums along the sequence dimension. Therefore, causal GMA preserves the same fixed-$K$ linear scaling while enforcing autoregressive causality.

\section{Detailed Gradient Derivations for Reparameterized GMM}
\label{app:gradient_derivations}

In Section~\ref{sec:end_to_end_parameter_learning}, we defined the trainable parameters of a GMA layer as $\Theta_{\mathrm{GMA}}=\{W_Q,W_K,W_V,\mu,\omega,\alpha\}$ in Eq.~\eqref{eq:GMA_trainable_params}. Among these, $W_Q,W_K,W_V$ are the standard query, key, and value projection matrices (Eq.~\eqref{eq:gma_qkv_projection}), while $\mu,\omega,\alpha$ parameterize the Gaussian mixture routing mechanism: $\mu$ contains the component means, $\omega$ parameterizes the diagonal covariance entries (Eq.~\eqref{eq:gma_covariance_reparameterization}), and $\alpha$ parameterizes the mixture-prior logits (Eq.~\eqref{eq:gma_prior_reparameterization}). This appendix derives the gradients for the reparameterized GMM routing parameters $\mu,\omega,\alpha$. The projection matrices $W_Q$ and $W_K$ receive gradients through the routing vectors $q_i=(Q_X)_{i,:}$ and $k_i=(K_X)_{i,:}$, while $W_V$ receives gradients through the latent aggregation and output operations.

Throughout, $x_i\in\mathbb{R}^{d_r}$ denotes a generic routing vector, which may be either a query routing vector $q_i$ or a key routing vector $k_i$. We use $\theta$ to denote a generic scalar GMM routing parameter, i.e. $\theta\in\{\mu_{k,m},\omega_{k,m},\alpha_m\}$ for appropriate component and coordinate indices. For component $k$, the pre-normalized log-density score under the \textit{diagonal} covariance assumption is a coordinate-wise version of Eq.~\eqref{eq:gma_log_density_score}:
\begin{equation}
\label{eq:app_gma_log_density_diag}
    s_{i,k}
    =
    \log \pi_k
    -
    \frac{d_r}{2}\log(2\pi)
    -
    \frac{1}{2}
    \sum_{m=1}^{d_r}
    \log(\sigma_{k,m}^2)
    -
    \frac{1}{2}
    \sum_{m=1}^{d_r}
    \frac{(x_{i,m}-\mu_{k,m})^2}{\sigma_{k,m}^2}.
\end{equation}
The responsibility is the softmax-normalized score (Eq.\eqref{eq:gma_responsibility_score_softmax})
\begin{equation}
\label{eq:app_gma_responsibility_softmax}
    \gamma_{i,k}
    =
    \frac{\exp(s_{i,k})}
    {\sum_{\ell=1}^{K}\exp(s_{i,\ell})}.
\end{equation}
Let
\[
    g_{i,j}
    =
    \frac{\partial \mathcal{L}}{\partial \gamma_{i,j}}
\]
denote the upstream gradient arriving at the responsibility vector for token $i$. By the softmax Jacobian (Eq.\eqref{eq:gma_softmax_jacobian}),
\[
    \frac{\partial \gamma_{i,j}}{\partial s_{i,k}}
    =
    \gamma_{i,j}
    \left(
    \mathbf{1}\{j=k\}
    -
    \gamma_{i,k}
    \right).
\]
Therefore, the gradient entering the score $s_{i,k}$ is (Eq.\eqref{eq:gma_score_gradient_general})
\[
    \delta_{i,k}
    \equiv
    \frac{\partial \mathcal{L}}{\partial s_{i,k}}
    =
    \sum_{j=1}^{K}
    \frac{\partial \mathcal{L}}{\partial \gamma_{i,j}}
    \frac{\partial \gamma_{i,j}}{\partial s_{i,k}}
    =
    \gamma_{i,k}
    \left(
    g_{i,k}
    -
    \sum_{j=1}^{K}
    \gamma_{i,j}g_{i,j}
    \right).
\]
This is the quantity that multiplies each parameter derivative $\partial s_{i,k}/\partial\theta$ in backpropagation. The familiar factor $\gamma_{i,k}(1-\gamma_{i,k})$ appears as the self-sensitivity $\partial\gamma_{i,k}/\partial s_{i,k}$ (Eq.\eqref{eq:gma_self_responsibility_derivative}), while the full gradient also includes cross-component softmax terms (Eq.\eqref{eq:gma_cross_mean_responsibility_derivative}).

For any learnable scalar parameter $\theta$ that enters the scores, such as a component of $\mu$, $\omega$, or $\alpha$, or a projection parameter entering the routing vectors through $W_Q$ or $W_K$, the chain rule gives
\begin{equation}
\label{eq:gma_parameter_gradient_general_a_app} \tag{cc.Eq.\ref{eq:gma_parameter_gradient_general_a}}
    \frac{\partial \mathcal{L}}{\partial \theta}
    =
    \sum_i
    \sum_{k=1}^K
    \frac{\partial \mathcal{L}}{\partial s_{i,k}}
    \frac{\partial s_{i,k}}{\partial \theta},
\end{equation}
where $\partial \mathcal{L}/\partial s_{i,k}$ is given by Eq.~\eqref{eq:gma_score_gradient_general}. Equivalently, expanding the softmax Jacobian explicitly gives
\begin{equation} \label{eq:gma_parameter_gradient_general_b_app} \tag{cc.Eq.\ref{eq:gma_parameter_gradient_general_b}}
    \frac{\partial \mathcal{L}}{\partial \theta}
    =
    \sum_i
    \sum_{j=1}^K
    \sum_{k=1}^K
    \frac{\partial \mathcal{L}}{\partial \gamma_{i,j}}
    \frac{\partial \gamma_{i,j}}{\partial s_{i,k}}
    \frac{\partial s_{i,k}}{\partial \theta}.
\end{equation}
This expression is the appropriate backpropagation form for the responsibility-based GMA layer.
We now derive the score derivatives for the reparameterized GMM parameters.

\subsection{Derivation for Latent Means}
\label{app:gradient_means}

The means are unconstrained parameters $\mu_k\in\mathbb{R}^{d_r}$. Differentiating Eq.~\eqref{eq:app_gma_log_density_diag} with respect to the $m$-th coordinate of $\mu_k$ gives
\begin{equation}
\label{eq:app_mean_score_derivative_coord}
    \frac{\partial s_{i,k}}{\partial \mu_{k,m}}
    =
    \frac{x_{i,m}-\mu_{k,m}}{\sigma_{k,m}^2}.
\end{equation}
In vector form,
\begin{equation}
\label{eq:app_mean_score_derivative_vec}
    \nabla_{\mu_k}s_{i,k}
    =
    \Sigma_k^{-1}(x_i-\mu_k).
\end{equation}
Thus, the full loss gradient with respect to the mean of component $k$ is
\begin{equation}
\label{eq:app_mean_loss_gradient}
    \frac{\partial \mathcal{L}}{\partial \mu_k}
    =
    \sum_i
    \delta_{i,k}
    \Sigma_k^{-1}(x_i-\mu_k).
\end{equation}
Hence, the mean update is driven by precision-scaled residuals, weighted by the score-level gradient $\delta_{i,k}$ induced by the responsibility softmax.

\subsection{Derivation for Reparameterized Diagonal Covariances}
\label{app:gradient_covariances}

To ensure positive diagonal covariance entries, GMA maintains unconstrained parameters $\omega\in\mathbb{R}^{K\times d_r}$ and sets (Eq.\eqref{eq:gma_covariance_reparameterization})
\begin{equation}
\label{eq:app_covariance_reparameterization}
    \sigma_{k,m}^2
    =
    \operatorname{softplus}(\omega_{k,m})
    +
    \epsilon_\sigma
    =
    \log(1+\exp(\omega_{k,m}))
    +
    \epsilon_\sigma ,
    \qquad
    m=1,\ldots,d_r.
\end{equation}
By the chain rule,
\begin{equation}
\label{eq:app_cov_chain_rule}
    \frac{\partial s_{i,k}}{\partial \omega_{k,m}}
    =
    \frac{\partial s_{i,k}}{\partial \sigma_{k,m}^2}
    \frac{\partial \sigma_{k,m}^2}{\partial \omega_{k,m}}.
\end{equation}
First, differentiating Eq.~\eqref{eq:app_gma_log_density_diag} with respect to the variance $\sigma_{k,m}^2$ gives
\begin{equation}
\label{eq:app_variance_score_derivative}
    \frac{\partial s_{i,k}}{\partial \sigma_{k,m}^2}
    =
    -\frac{1}{2\sigma_{k,m}^2}
    +
    \frac{(x_{i,m}-\mu_{k,m})^2}{2(\sigma_{k,m}^2)^2}
    =
    \frac{1}{2\sigma_{k,m}^2}
    \left(
    \frac{(x_{i,m}-\mu_{k,m})^2}{\sigma_{k,m}^2}
    -
    1
    \right).
\end{equation}
Second, the derivative of the softplus reparameterization is
\begin{equation}
\label{eq:app_softplus_derivative}
    \frac{\partial \sigma_{k,m}^2}{\partial \omega_{k,m}}
    =
    \frac{\exp(\omega_{k,m})}{1+\exp(\omega_{k,m})}
    =
    \sigma_{\mathrm{sig}}(\omega_{k,m}),
\end{equation}
where $\sigma_{\mathrm{sig}}(\cdot)$ denotes the logistic \textit{sigmoid function}. Combining Eqs.~\eqref{eq:app_variance_score_derivative} and \eqref{eq:app_softplus_derivative}, we obtain
\begin{equation}
\label{eq:app_omega_score_derivative}
    \frac{\partial s_{i,k}}{\partial \omega_{k,m}}
    =
    \frac{1}{2\sigma_{k,m}^2}
    \left(
    \frac{(x_{i,m}-\mu_{k,m})^2}{\sigma_{k,m}^2}
    -
    1
    \right)
    \sigma_{\mathrm{sig}}(\omega_{k,m}).
\end{equation}
Therefore, the full loss gradient for the covariance parameter is
\begin{equation}
\label{eq:app_omega_loss_gradient}
    \frac{\partial \mathcal{L}}{\partial \omega_{k,m}}
    =
    \sum_i
    \delta_{i,k}
    \frac{1}{2\sigma_{k,m}^2}
    \left(
    \frac{(x_{i,m}-\mu_{k,m})^2}{\sigma_{k,m}^2}
    -
    1
    \right)
    \sigma_{\mathrm{sig}}(\omega_{k,m}).
\end{equation}
The sigmoid factor attenuates the gradient passed through the softplus reparameterization, while the additive lower bound $\epsilon_\sigma$ prevents the diagonal variances from becoming singular. This improves numerical stability, although it does not by itself eliminate the need for appropriate initialization, learning-rate control, or other optimization safeguards.

\subsection{Derivation for Reparameterized Mixture Priors}
\label{app:gradient_priors}

To satisfy the simplex constraint, the mixture priors are parameterized by unconstrained logits $\alpha\in\mathbb{R}^K$ (Eq.\eqref{eq:gma_prior_reparameterization}):
\begin{equation}
\label{eq:app_prior_softmax} \tag{cc.Eq.\ref{eq:gma_prior_reparameterization}}
    \pi_k
    =
    \frac{\exp(\alpha_k)}
    {\sum_{\ell=1}^{K}\exp(\alpha_\ell)}.
\end{equation}
Because $s_{i,k}$ contains the term $\log\pi_k$ (Eq.\eqref{eq:app_gma_log_density_diag}), we need the derivative of $\log\pi_k$ with respect to an arbitrary logit $\alpha_m$. Using the softmax Jacobian,
\begin{equation}
\label{eq:app_prior_softmax_jacobian}
    \frac{\partial \pi_k}{\partial \alpha_m}
    =
    \pi_k(\mathbf{1}\{k=m\}-\pi_m).
\end{equation}
Therefore,
\begin{equation}
\label{eq:app_log_prior_derivative}
    \frac{\partial s_{i,k}}{\partial \alpha_m}
    =
    \frac{\partial \log\pi_k}{\partial \alpha_m}
    =
    \frac{1}{\pi_k}
    \frac{\partial \pi_k}{\partial \alpha_m}
    =
    \mathbf{1}\{k=m\}
    -
    \pi_m.
\end{equation}
The full loss gradient with respect to prior logit $\alpha_m$ is then
\begin{equation}
\label{eq:app_alpha_loss_gradient}
    \frac{\partial \mathcal{L}}{\partial \alpha_m}
    =
    \sum_i
    \sum_{k=1}^{K}
    \delta_{i,k}
    \left(
    \mathbf{1}\{k=m\}
    -
    \pi_m
    \right).
\end{equation}
Since the score-level gradients $\delta_{i,k}$ arise from a softmax Jacobian, they satisfy $\sum_{k=1}^K\delta_{i,k}=0$ for each token $i$. Hence Eq.~\eqref{eq:app_alpha_loss_gradient} can also be written as
\begin{equation}
\label{eq:app_alpha_loss_gradient_simplified}
    \frac{\partial \mathcal{L}}{\partial \alpha_m}
    =
    \sum_i
    \delta_{i,m}.
\end{equation}
At the score level, increasing $\alpha_m$ increases the log-prior contribution of component $m$ by $1-\pi_m$ and decreases the log-prior contribution of every other component by $\pi_m$. The actual parameter update is then determined by the downstream loss through the score-level gradients $\delta_{i,k}$.

\section{Self-Attention, Cross-Attention, and GMA}
\label{app:self_cross_gma}

This appendix clarifies the distinction between \textit{self-attention} and \textit{cross-attention}, and shows how the GMA routing mechanism extends naturally to both settings. To avoid notational ambiguity, we write $K_{\mathrm{att}}$ for the standard-attention key matrix, while $K$ denotes the number of Gaussian mixture components in GMA.

\subsection{Standard Self-Attention}

In self-attention, the queries, keys, and values are all projected from the same input sequence. Let
\[
    X\in\mathbb{R}^{N\times d_{\mathrm{model}}},
\]
where $N$ is the sequence length. Standard self-attention forms
\[
    Q=XW_Q,
    \qquad
    K_{\mathrm{att}}=XW_K,
    \qquad
    V=XW_V,
\]
with
\[
    Q,K_{\mathrm{att}}\in\mathbb{R}^{N\times d_k},
    \qquad
    V\in\mathbb{R}^{N\times d_v}.
\]
The attention matrix and output are
\[
    A
    =
    \operatorname{softmax}
    \left(
    \frac{QK_{\mathrm{att}}^\top}{\sqrt{d_k}}
    \right),
    \qquad
    O=AV.
\]
Elementwise,
\[
    A_{ij}
    =
    \frac{
    \exp(q_i^\top k_j/\sqrt{d_k})
    }{
    \sum_{\ell=1}^{N}
    \exp(q_i^\top k_\ell/\sqrt{d_k})
    },
    \qquad
    O_i=\sum_{j=1}^{N} A_{ij}v_j.
\]
Thus, token $i$ attends to tokens $j$ within the same sequence.

For causal self-attention, as used in autoregressive language modelling, future positions are masked:
\[
    A_{ij}=0
    \qquad
    \text{for } j>i.
\]
Equivalently,
\[
    A_{ij}
    =
    \frac{
    \mathbf{1}\{j\leq i\}
    \exp(q_i^\top k_j/\sqrt{d_k})
    }{
    \sum_{\ell\leq i}
    \exp(q_i^\top k_\ell/\sqrt{d_k})
    }.
\]
The representation at position $i$ may depend only on tokens at positions $j\leq i$, and is then used for next-token prediction.

\subsection{Standard Cross-Attention}

In cross-attention, the queries come from one sequence, while keys and values come from another sequence. Let
\[
    X\in\mathbb{R}^{N_q\times d_{\mathrm{model}}}
\]
denote a query-side sequence, and let
\[
    Y\in\mathbb{R}^{N_k\times d_{\mathrm{model}}}
\]
denote a context or source sequence. Cross-attention forms
\[
    Q_X=XW_Q,
    \qquad
    K_Y=YW_K,
    \qquad
    V_Y=YW_V,
\]
where
\[
    Q_X\in\mathbb{R}^{N_q\times d_k},
    \qquad
    K_Y\in\mathbb{R}^{N_k\times d_k},
    \qquad
    V_Y\in\mathbb{R}^{N_k\times d_v}.
\]
The attention matrix is rectangular:
\[
    A
    =
    \operatorname{softmax}
    \left(
    \frac{Q_XK_Y^\top}{\sqrt{d_k}}
    \right)
    \in\mathbb{R}^{N_q\times N_k},
\]
and the output is
\[
    O_X=AV_Y\in\mathbb{R}^{N_q\times d_v}.
\]
Elementwise,
\[
    A_{ij}
    =
    \frac{
    \exp(q_{X,i}^\top k_{Y,j}/\sqrt{d_k})
    }{
    \sum_{\ell=1}^{N_k}
    \exp(q_{X,i}^\top k_{Y,\ell}/\sqrt{d_k})
    },
    \qquad
    O_{X,i}=\sum_{j=1}^{N_k} A_{ij}v_{Y,j}.
\]
Thus, query token $i$ in sequence $X$ attends to context token $j$ in sequence $Y$. Self-attention is the special case in which the query-side and context-side sequences coincide.

\subsection{GMA Self-Attention}

The GMA mechanism developed in Section~\ref{sec:GMA_method} is presented primarily in the \textit{self-attention} setting. Given
\[
    X\in\mathbb{R}^{N\times d_{\mathrm{model}}},
\]
we form
\[
    Q_X=XW_Q,
    \qquad
    K_X=XW_K,
    \qquad
    V_X=XW_V.
\]
Instead of computing all pairwise dot products $q_i^\top k_j$, GMA maps query and key projections to responsibility vectors over $K$ learned Gaussian components:
\[
    \Gamma^Q=\Gamma(Q_X),
    \qquad
    \Gamma^K=\Gamma(K_X),
\]
where
\[
    \Gamma^Q,\Gamma^K\in\mathbb{R}^{N\times K}.
\]
For a generic routing vector $x$, the responsibility of component $k$ is
\[
    \gamma_k(x)
    =
    \frac{
    \pi_k\mathcal{N}(x\mid\mu_k,\Sigma_k)
    }{
    \sum_{\ell=1}^{K}
    \pi_\ell\mathcal{N}(x\mid\mu_\ell,\Sigma_\ell)
    }.
\]
The key-side responsibilities write values into a latent memory:
\[
    \tilde V=(\Gamma^K)^\top V_X,
    \qquad
    Z=(\Gamma^K)^\top \mathbf{1}_N,
\]
where
\[
    \tilde V\in\mathbb{R}^{K\times d_v},
    \qquad
    Z\in\mathbb{R}^{K}.
\]
The query-side responsibilities then read from this latent memory:
\[
    O
    =
    \frac{\Gamma^Q\tilde V}{\Gamma^Q Z+\epsilon},
\]
where the denominator is broadcast row-wise across the value dimension.
Equivalently, for token $i$,
\[
    O_i
    =
    \frac{
    \sum_{k=1}^{K}\gamma^Q_{i,k}\tilde V_k
    }{
    \sum_{k=1}^{K}\gamma^Q_{i,k}Z_k+\epsilon
    }.
\]
The induced GMA self-attention weight from token $i$ to token $j$ is
\[
    A^{\mathrm{GMA}}_{ij}
    =
    \frac{
    \sum_{k=1}^{K}
    \gamma^Q_{i,k}\gamma^K_{j,k}
    }{
    \sum_{\ell=1}^{N}
    \sum_{k=1}^{K}
    \gamma^Q_{i,k}\gamma^K_{\ell,k}
    +
    \epsilon
    }.
\]
Thus, GMA replaces direct pairwise dot-product comparison with responsibility-space affinity followed by latent-memory routing.

For causal GMA self-attention, the key-side latent memory is accumulated only over prefix positions:
\[
    \tilde V^{(i)}_k
    =
    \sum_{j\leq i}\gamma^K_{j,k}V_{X,j},
    \qquad
    Z^{(i)}_k
    =
    \sum_{j\leq i}\gamma^K_{j,k}.
\]
The causal output is
\[
    O_i
    =
    \frac{
    \sum_{k=1}^{K}\gamma^Q_{i,k}\tilde V^{(i)}_k
    }{
    \sum_{k=1}^{K}\gamma^Q_{i,k}Z^{(i)}_k+\epsilon
    }.
\]
The corresponding induced causal GMA weight is
\[
    A^{\mathrm{cGMA}}_{ij}
    =
    \mathbf{1}\{j\leq i\}
    \frac{
    \sum_{k=1}^{K}
    \gamma^Q_{i,k}\gamma^K_{j,k}
    }{
    \sum_{\ell\leq i}
    \sum_{k=1}^{K}
    \gamma^Q_{i,k}\gamma^K_{\ell,k}
    +
    \epsilon
    }.
\]
This is the form used in the WikiText-103 language-modelling experiment.

\subsection{GMA Cross-Attention}

GMA can also be applied to \textit{cross-attention}. Let
\[
    X\in\mathbb{R}^{N_q\times d_{\mathrm{model}}}
\]
be the query-side sequence and
\[
    Y\in\mathbb{R}^{N_k\times d_{\mathrm{model}}}
\]
be the context-side sequence. We form
\[
    Q_X=XW_Q,
    \qquad
    K_Y=YW_K,
    \qquad
    V_Y=YW_V.
\]
The query-side and context-side responsibilities are
\[
    \Gamma_X^Q=\Gamma(Q_X)\in\mathbb{R}^{N_q\times K},
    \qquad
    \Gamma_Y^K=\Gamma(K_Y)\in\mathbb{R}^{N_k\times K}.
\]
The context sequence writes values into a latent memory:
\[
    \tilde V_Y=(\Gamma_Y^K)^\top V_Y,
    \qquad
    Z_Y=(\Gamma_Y^K)^\top \mathbf{1}_{N_k}.
\]
The query sequence reads from this context-side memory:
\[
    O_X
    =
    \frac{
    \Gamma_X^Q\tilde V_Y
    }{
    \Gamma_X^Q Z_Y+\epsilon
    }
    \in\mathbb{R}^{N_q\times d_v}.
\]
Elementwise,
\[
    O_{X,i}
    =
    \frac{
    \sum_{k=1}^{K}
    \gamma^Q_{X,i,k}\tilde V_{Y,k}
    }{
    \sum_{k=1}^{K}
    \gamma^Q_{X,i,k}Z_{Y,k}
    +
    \epsilon
    }.
\]
The induced GMA cross-attention weight from query token $i$ in $X$ to context token $j$ in $Y$ is
\[
    A^{\mathrm{GMA\text{-}cross}}_{ij}
    =
    \frac{
    \sum_{k=1}^{K}
    \gamma^Q_{X,i,k}\gamma^K_{Y,j,k}
    }{
    \sum_{\ell=1}^{N_k}
    \sum_{k=1}^{K}
    \gamma^Q_{X,i,k}\gamma^K_{Y,\ell,k}
    +
    \epsilon
    }.
\]
Thus, GMA cross-attention has the same structure as GMA self-attention, except that query responsibilities are computed from the query-side sequence and key responsibilities/value memories are computed from the context-side sequence.

The experiments in this paper focus on self-attention-style sequence mixing: bidirectional self-attention-style mixing for LRA and causal self-attention-style mixing for WikiText-103. A systematic empirical study of GMA cross-attention, for example in encoder--decoder or multimodal architectures, is left to future work.

\section{Additional Interpretability Diagnostics}
\label{app:gma_interpretability}

This appendix reports additional diagnostics for the interpretability analysis in Section~\ref{sec:gma_interpretability}. All results use the trained WikiText-103 causal GMA model with $K=128$. We extract the final-layer query responsibility tensor, average it over heads for each token occurrence, and analyze $S=64$ validation sequences of length $L=1024$, giving $T=SL=65{,}536$ token-level responsibility vectors. Following Section~\ref{sec:gma_interpretability}, we write the resulting head-averaged responsibility vector as $\bar{\gamma}^Q_t\in\Delta^{K-1}$ for $t=1,\ldots,T$, and define the hard assignment $z_t=\arg\max_k \bar{\gamma}^Q_{t,k}$.

\subsection{Token Category Distribution}

The token categories used in the purity and mutual-information analysis are surface-form categories derived from GPT-2 BPE tokens. Table~\ref{tab:category_distribution} shows the category frequencies in the analyzed validation subset.

\begin{table}[H]
\centering
\caption{Surface-form token category distribution in the analyzed WikiText-103 validation subset.}
\label{tab:category_distribution}
\footnotesize
\setlength{\tabcolsep}{5pt}
\renewcommand{\arraystretch}{0.95}
\begin{tabular}{lcc}
\toprule
Category & Count & Fraction \\
\midrule
lower\_alpha       & 18,757 & 0.286 \\
function\_word     & 18,359 & 0.280 \\
punctuation        & 10,306 & 0.157 \\
capitalized        & 8,714  & 0.133 \\
subword\_alpha     & 4,208  & 0.064 \\
numeric            & 2,451  & 0.037 \\
newline\_or\_space & 1,481  & 0.023 \\
eos                & 738    & 0.011 \\
all\_caps          & 334    & 0.005 \\
other              & 188    & 0.003 \\
\bottomrule
\end{tabular}
\end{table}

The largest single category is lower-case alphabetic tokens, with fraction $18{,}757/65{,}536\approx 0.286$. This gives a simple majority-category baseline for interpreting the weighted category purity. The observed weighted purity of $0.483$ is substantially above this baseline.

\subsection{Representative Component Specialization}

Table~\ref{tab:component_examples} shows representative routing channels with their hard-assignment frequency, dominant category, purity, and representative tokens. These examples illustrate that some channels are broad mixed routers, whereas others show clearer surface-form specialization.

\begin{table}[H]
\centering
\caption{Representative GMA routing channels under hard assignments $z_t=\arg\max_k \bar{\gamma}^Q_{t,k}$. Purity is the fraction of tokens assigned to the channel that belong to its dominant surface-form category.}
\label{tab:component_examples}
\scriptsize
\setlength{\tabcolsep}{3pt}
\renewcommand{\arraystretch}{0.95}
\begin{tabular}{ccclp{0.42\textwidth}}
\toprule
Comp. & Hard frac. & Purity & Dominant category & Representative high-count tokens \\
\midrule
22  & 0.104 & 0.320 & function\_word
& \texttt{.}, \texttt{,}, \texttt{and}, \texttt{the}, \texttt{a}, \texttt{The}, \texttt{his}, \texttt{He} \\

50  & 0.089 & 0.788 & lower\_alpha
& \texttt{city}, \texttt{yards}, \texttt{well}, \texttt{built}, \texttt{song}, \texttt{role}, \texttt{season} \\

55  & 0.077 & 0.414 & function\_word
& \texttt{.}, \texttt{,}, \texttt{of}, \texttt{in}, \texttt{to}, \texttt{the}, \texttt{by}, \texttt{from}, \texttt{on} \\

5   & 0.035 & 0.696 & function\_word
& \texttt{the}, \texttt{a}, \texttt{The}, \texttt{it}, \texttt{he}, \texttt{his}, \texttt{who}, \texttt{which} \\

17  & 0.014 & 0.764 & punctuation
& \texttt{=}, \texttt{@}, \texttt{(}, \texttt{-}, with occasional lexical tokens \\

73  & 0.032 & 0.516 & capitalized
& \texttt{M}, \texttt{Sh}, \texttt{McC}, \texttt{H}, \texttt{L}, \texttt{B}, \texttt{U}, \texttt{R} \\

116 & 0.023 & 0.621 & function\_word
& \texttt{to}, \texttt{not}, \texttt{he}, \texttt{were}, \texttt{are}, \texttt{I}, \texttt{be}, \texttt{was} \\

2   & 0.024 & 0.413 & subword\_alpha
& \texttt{ik}, \texttt{gam}, \texttt{mar}, \texttt{our}, \texttt{rom}, \texttt{in}, \texttt{ed} \\
\bottomrule
\end{tabular}
\end{table}

The component examples show a mixture of broad and specialized behavior. High-usage channels such as component 22 have low purity but absorb common mixed tokens. Other channels are more interpretable: component 50 is strongly associated with lower-case alphabetic words, components 5 and 116 are function-word-heavy, component 17 is punctuation-like, component 73 captures capitalized fragments, and component 2 captures subword alphabetic fragments. These examples should be interpreted as surface-form tendencies of head-averaged routing channels, not as clean syntactic or semantic categories.

\subsection{Permutation Baseline}

To check whether the observed component-category association could arise simply from imbalanced category frequencies, we compare against a permutation baseline. The category labels $\{c_t\}_{t=1}^T$ are randomly shuffled while keeping the learned hard component assignments $\{z_t\}_{t=1}^T$ fixed. This preserves both the marginal component distribution and the marginal category distribution, but destroys any systematic alignment between them.

\begin{table}[H]
\centering
\caption{Permutation sanity check for component-category alignment. The permutation baseline shuffles token categories while preserving hard component assignments and category frequencies.}
\label{tab:interpretability_permutation}
\footnotesize
\setlength{\tabcolsep}{5pt}
\renewcommand{\arraystretch}{0.95}
\begin{tabular}{lccc}
\toprule
Metric & Observed & Permutation mean $\pm$ std. & Interpretation \\
\midrule
Weighted purity
& 0.483 & $0.293 \pm 0.001$
& above shuffled baseline \\

$I(Z;C)$, nats
& 0.427 & $0.0067 \pm 0.0003$
& above shuffled baseline \\

Normalized MI
& 0.244 & $0.0038 \pm 0.0002$
& above shuffled baseline \\
\bottomrule
\end{tabular}
\end{table}

The permutation baseline confirms that the observed purity and mutual information are not explained by the marginal token-category frequencies alone. The learned component assignments contain substantially more surface-form category information than randomized assignments with the same marginals.

\subsection{Additional Visualizations}

Figure~\ref{fig:appendix_usage_purity} shows the marginal component usage and per-component category purity. The component usage distribution has normalized entropy $0.933$, indicating broad usage of the latent routing channels. The purity plot shows that several high-usage channels have a clear dominant surface-form category, although many remain mixed.

\begin{figure}[H]
    \centering
    \includegraphics[width=0.8\textwidth]{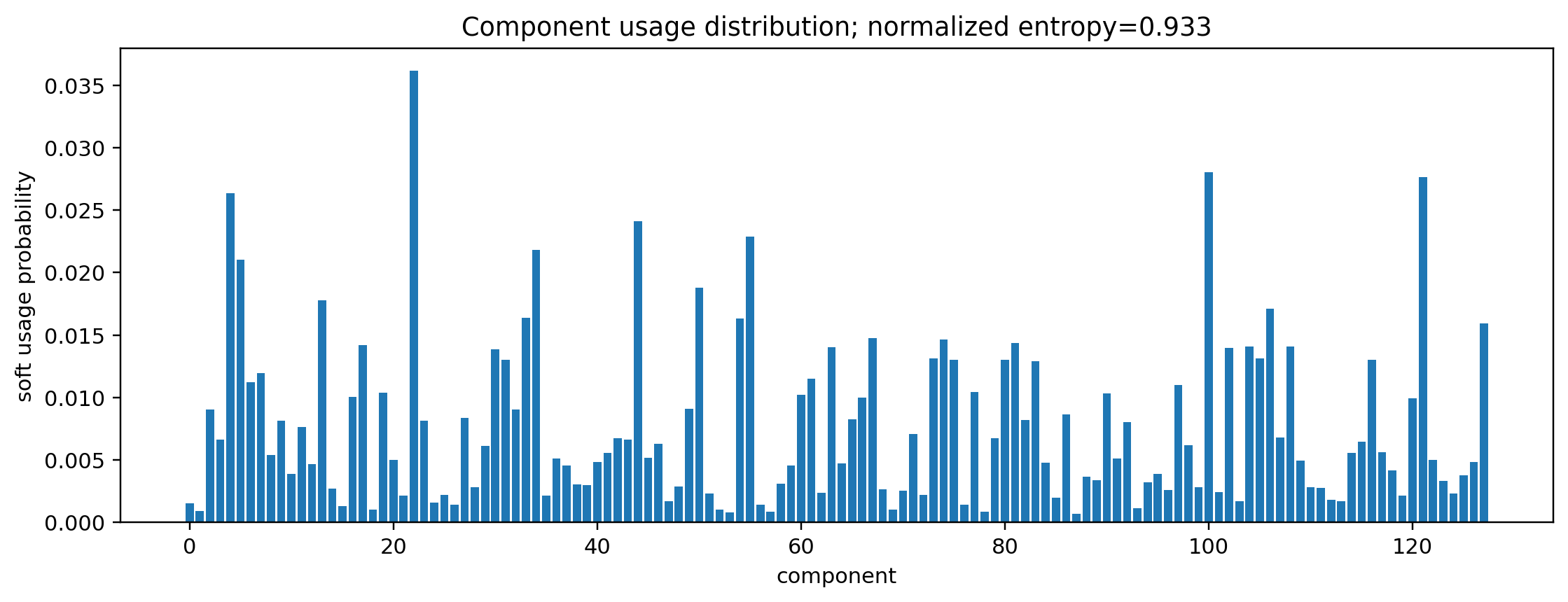}

    \vspace{0.5em}

    \includegraphics[width=0.8\textwidth]{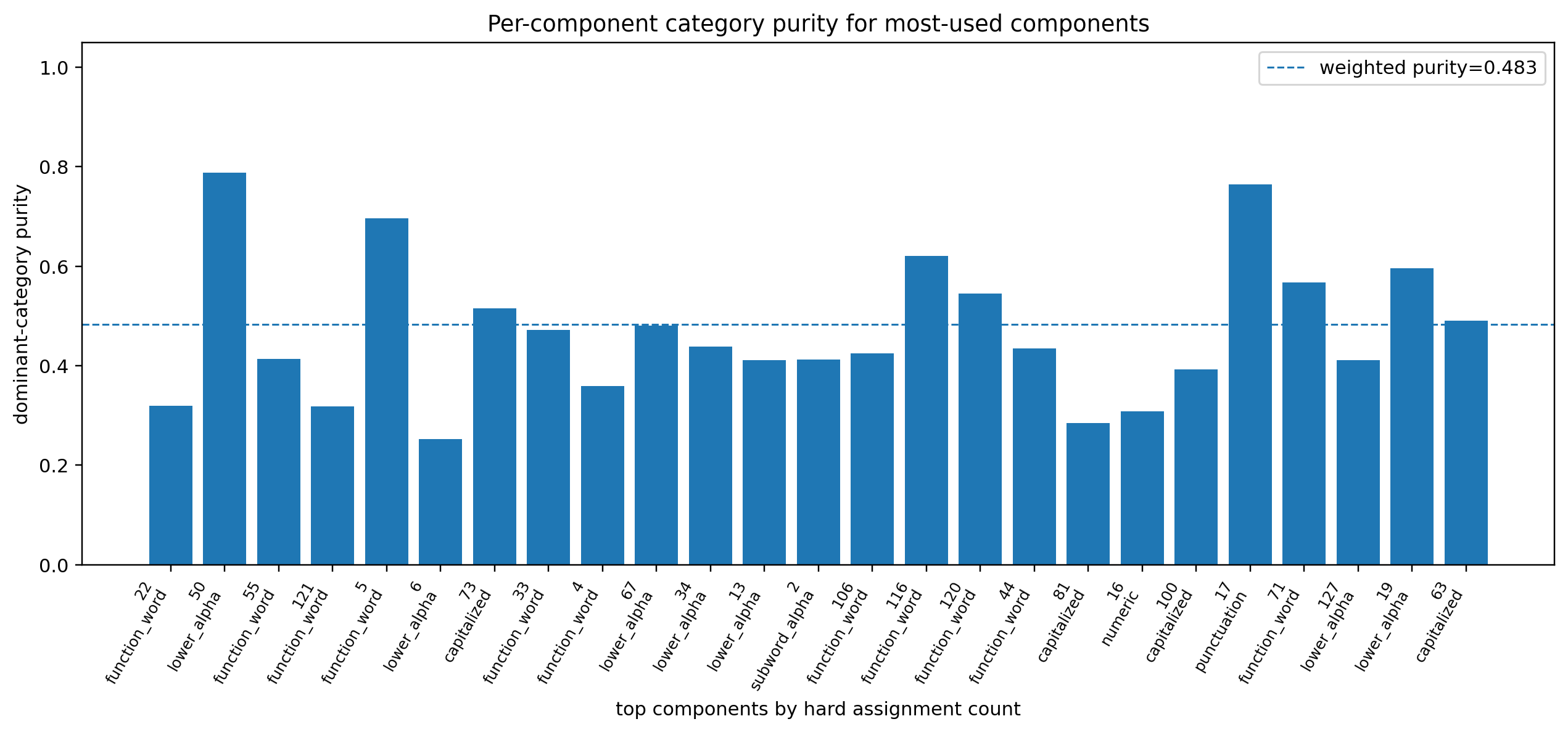}
    \caption{Additional component-level diagnostics. Top: marginal component usage distribution $p_k=T^{-1}\sum_{t=1}^T\bar{\gamma}^Q_{t,k}$, showing broad component usage and no severe collapse. Bottom: dominant-category purity for the most-used components.}
    \label{fig:appendix_usage_purity}
\end{figure}

Figure~\ref{fig:appendix_token_strip} gives a qualitative visualization of hard component assignments on one WikiText-103 validation segment. Tokens are colored by their dominant component. Repeated colors over related local token roles provide a visual indication of latent routing specialization.

\begin{figure}[H]
    \centering
    \includegraphics[width=1.0\textwidth]{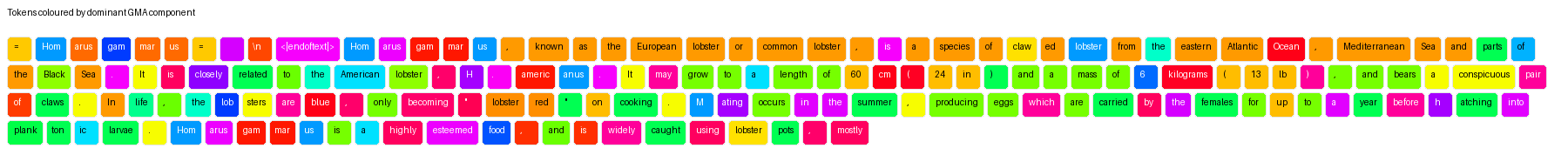}
    \caption{Qualitative token strip in which each token is colored by its dominant GMA component. This visualization illustrates local routing patterns across a validation sequence.}
    \label{fig:appendix_token_strip}
\end{figure}

Figure~\ref{fig:appendix_pca} projects token responsibility vectors to two dimensions using PCA. The first two principal components explain approximately $12.0\%$ and $7.9\%$ of the variance, respectively. The category-colored projection shows partial but overlapping separation between surface-form categories, while the component-colored projection shows that hard component assignments occupy structured regions of the responsibility space.

\begin{figure}[H]
    \centering
    \includegraphics[width=0.65\textwidth]{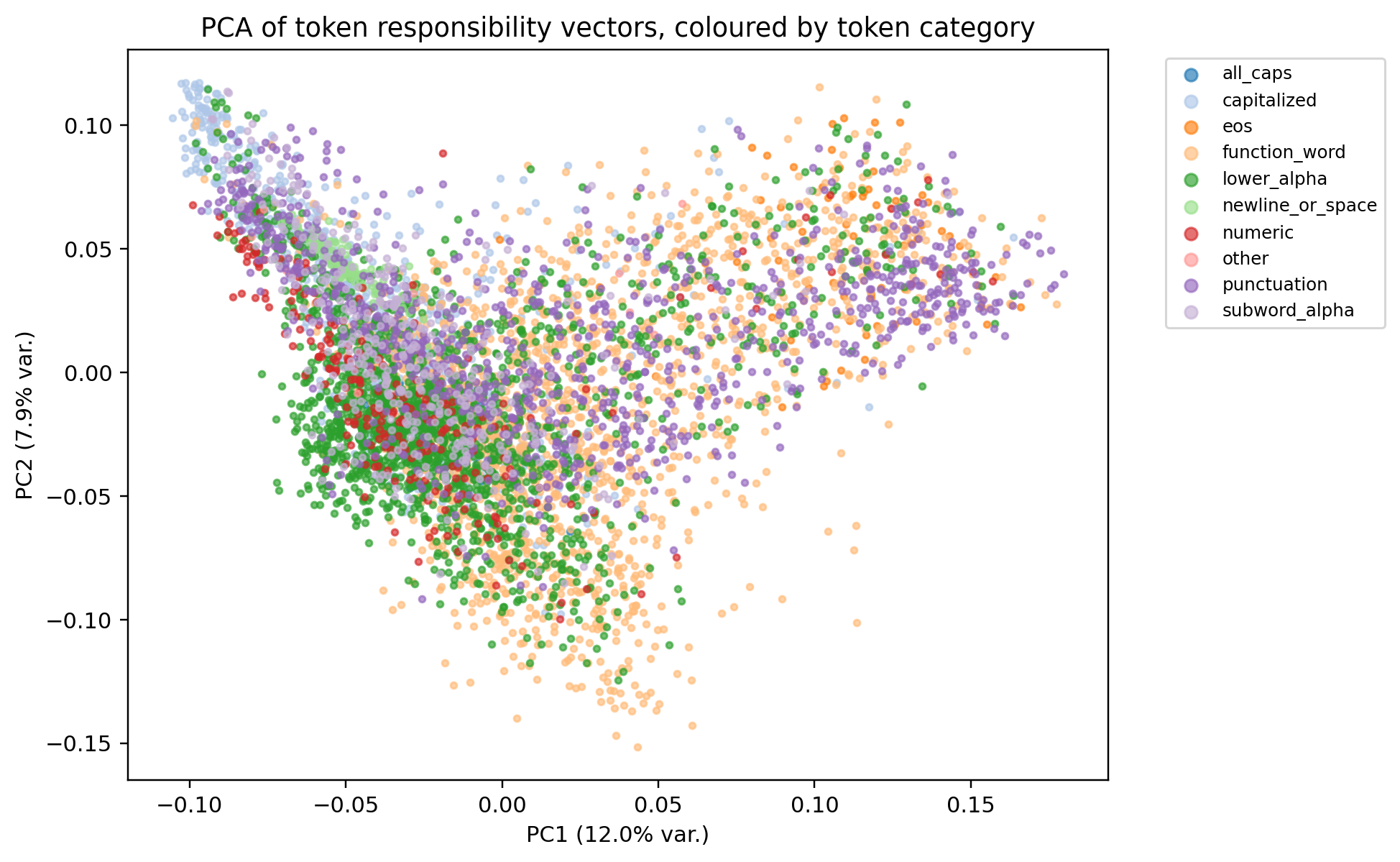}

    \vspace{0.5em}

    \includegraphics[width=0.55\textwidth]{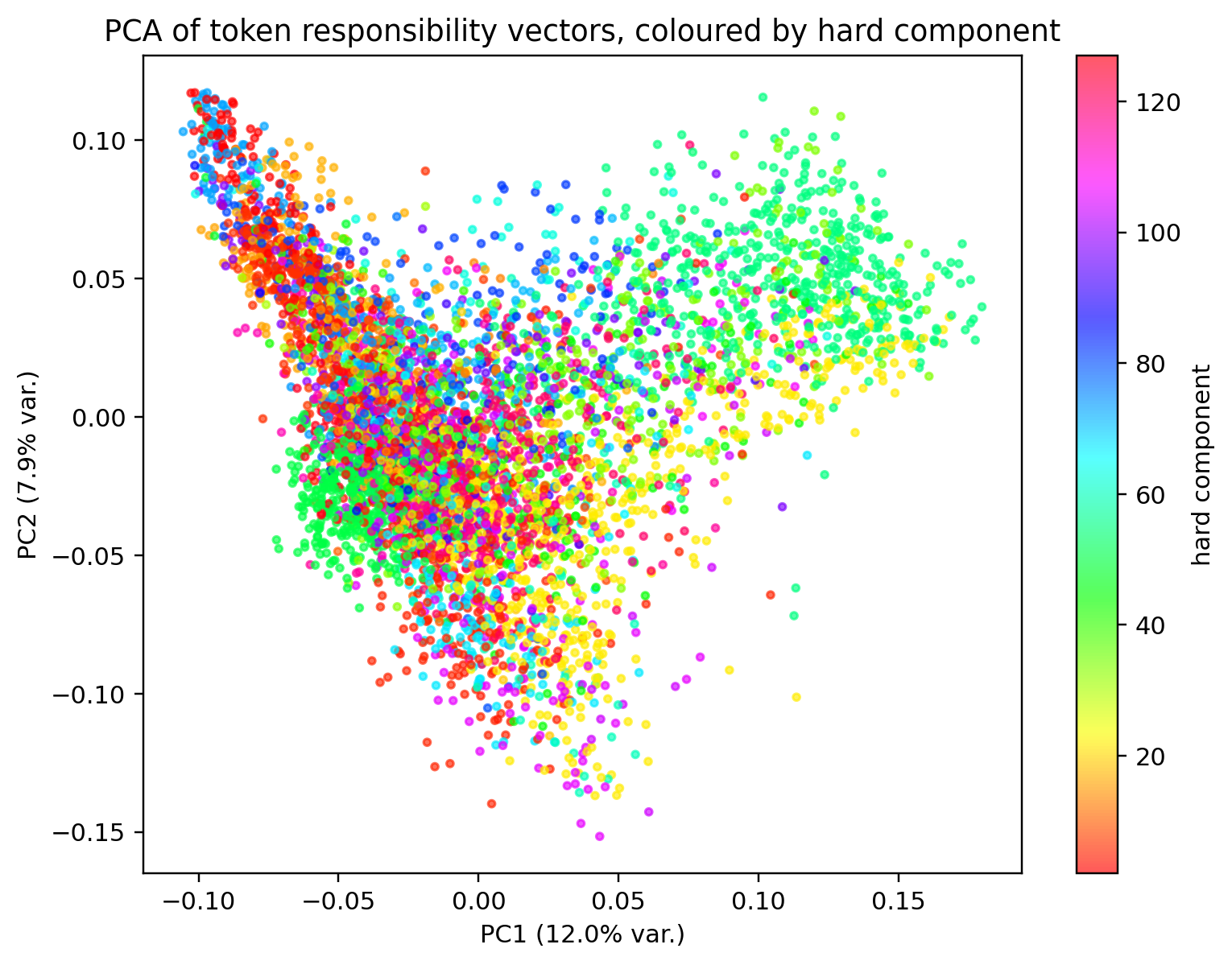}
    \caption{PCA projections of token responsibility vectors. Top: points colored by surface-form token category. Bottom: points colored by hard GMA component assignment. The projections show structured but overlapping responsibility geometry.}
    \label{fig:appendix_pca}
\end{figure}

\section{Attention, Similarity, Distance, Divergence, and Correlation}
\label{app:attention_similarity_distance}

This appendix provides a broader \textit{conceptual} perspective on attention mechanisms. At an abstract level, an attention layer assigns weights to value vectors by first computing a compatibility score between a query representation and a set of key representations. Different attention mechanisms can therefore be viewed as different choices of compatibility function. Dot-product attention uses an inner product \citep{vaswani2017attention}; additive attention uses a learned neural alignment/scoring function \citep{bahdanau2015neural}; kernelized attention uses feature-map similarities and associative matrix products \citep{katharopoulos2020transformers, choromanski2021rethinking}; sparse attention modifies which query-key pairs are compared \citep{child2019generating,beltagy2020longformer,zaheer2020bigbird}; and GMA replaces direct pairwise comparison with routing through latent Gaussian mixture responsibilities, as introduced in this work.

\paragraph{Similarity and distance.}
A similarity function assigns large values to objects that are considered compatible or related. A distance function assigns small values to objects that are close. These two notions are related, but not identical. A distance can be converted into a similarity through a monotone decreasing transformation, for example
\[
    \kappa(x,y)=\exp(-\beta d(x,y)),
    \qquad
    \kappa(x,y)=\frac{1}{1+d(x,y)},
    \qquad
    s(x,y)=-d(x,y),
\]
where $\beta>0$. However, a similarity need not be the reciprocal of a distance, and not every similarity corresponds to a valid metric.

A metric distance $d(x,y)$ satisfies non-negativity, identity of indiscernibles, symmetry, and the triangle inequality:
\[
    d(x,y)\geq 0,
    \qquad
    d(x,y)=0 \Leftrightarrow x=y,
    \qquad
    d(x,y)=d(y,x),
    \qquad
    d(x,z)\leq d(x,y)+d(y,z).
\]
Common pointwise distances include the $\ell_1$ distance, Euclidean distance, squared Euclidean distance, Mahalanobis distance, cosine distance, Hamming distance, and edit distance. In representation learning, these distances can be used directly as negative attention scores or indirectly through kernels such as the radial basis function (RBF) kernel
\[
    \kappa(x,y)
    =
    \exp\left(
    -\frac{\|x-y\|_2^2}{2\sigma^2}
    \right).
\]
The RBF kernel is a similarity derived from Euclidean distance: nearby points receive high similarity, while distant points receive exponentially small similarity.

\paragraph{Dot products, cosine similarity, and bilinear scores.}
Standard scaled dot-product attention uses
\[
    s(q_i,k_j)
    =
    \frac{q_i^\top k_j}{\sqrt{d_k}},
\]
where $q_i$ is a query vector and $k_j$ is a key vector \citep{vaswani2017attention}. This is a bilinear compatibility score: it is linear in $q_i$ when $k_j$ is fixed, and linear in $k_j$ when $q_i$ is fixed.
Cosine similarity normalizes the dot product by the vector norms,
\[
    \operatorname{cos}(q_i,k_j)
    =
    \frac{q_i^\top k_j}{\|q_i\|_2\|k_j\|_2},
\]
so that compatibility depends on direction rather than magnitude. More general bilinear attention scores take the form
\[
    s(q_i,k_j)
    =
    q_i^\top W k_j,
\]
where $W$ is a learned matrix. Dot-product attention is the special case $W=I$, up to projection matrices and scaling.

From this viewpoint, standard attention can be interpreted as a learned similarity-based retrieval mechanism. Given a query, the model asks which keys are most compatible with it, normalizes the resulting scores, and uses the normalized weights to aggregate values:
\[
    A_{ij}
    =
    \frac{\exp(s(q_i,k_j))}
    {\sum_{\ell=1}^N \exp(s(q_i,k_\ell))},
    \qquad
    O_i
    =
    \sum_{j=1}^N A_{ij}V_j.
\]
Thus, attention is not merely a similarity function; it is a normalized similarity-weighted aggregation mechanism.

\paragraph{Correlation and kernel similarity.}
Correlation is another form of relatedness. The Pearson correlation coefficient between scalar random variables $X$ and $Y$ is
\[
    \rho(X,Y)
    =
    \frac{\operatorname{Cov}(X,Y)}
    {\sqrt{\operatorname{Var}(X)\operatorname{Var}(Y)}}.
\]
It measures normalized \textit{linear} dependence. Cosine similarity can be viewed as a closely related geometric quantity: it is the normalized dot product between two vectors. If vectors are centered, cosine similarity and Pearson correlation are closely aligned \footnote{See e.g. Eq.(3.7) in \cite{stephen2018VMLS}.}.

Kernel functions generalize inner-product similarity by implicitly mapping inputs into a feature space:
\[
    \kappa(x,y)
    =
    \langle \phi(x),\phi(y)\rangle_{\mathcal{H}}.
\]
Examples include the linear kernel, polynomial kernel, RBF kernel, Laplacian kernel, Matérn kernels and string kernels. Kernelized attention mechanisms build on this idea by replacing the softmax attention kernel with feature-map approximations or positive kernel decompositions. Linear Transformer \cite{katharopoulos2020transformers} and Performer \cite{choromanski2021rethinking} are examples: they approximate or reformulate attention through feature maps so that the attention computation can be reassociated and evaluated in linear time \citep{katharopoulos2020transformers,choromanski2021rethinking}. In this sense, designing efficient attention can be viewed as designing a similarity function whose algebraic structure permits efficient aggregation.

\paragraph{Divergences between distributions.}
When the objects being compared are probability distributions rather than vectors, one often uses divergences or distributional distances. The Kullback-Leibler divergence is
\[
    D_{\mathrm{KL}}(p\|q)
    =
    \int p(x)\log\frac{p(x)}{q(x)}\,dx.
\]
It is not a metric because it is generally asymmetric and does not satisfy the triangle inequality. The Jensen-Shannon divergence symmetrizes and smooths KL:
\[
    D_{\mathrm{JS}}(p\|q)
    =
    \frac{1}{2}D_{\mathrm{KL}}(p\|m)
    +
    \frac{1}{2}D_{\mathrm{KL}}(q\|m),
    \qquad
    m=\frac{1}{2}(p+q).
\]
Other distributional comparisons include total variation distance, Hellinger distance, Wasserstein distance, Fisher-Rao distance, maximum mean discrepancy (MMD), and Bregman divergences. These quantities suggest possible distribution-aware attention mechanisms in which queries and keys are not treated as points, but as distributions with uncertainty. In such a setting, compatibility could be based on negative divergence, transport cost, kernel mean embedding distance, or overlap between probabilistic representations.

\paragraph{Attention as normalized compatibility.}
The common structure behind many attention mechanisms is
\[
    A_{ij}
    =
    \operatorname{Normalize}_j
    \left[
    \operatorname{Compat}(q_i,k_j)
    \right],
    \qquad
    O_i
    =
    \sum_j A_{ij}V_j.
\]
The compatibility function may be a dot product, a bilinear score, a neural network, a negative distance, a kernel value, a negative divergence, or a probabilistic responsibility. The normalization may be a softmax, sparsemax \cite{martins2016sparsemax}, entmax \cite{entmax}, top-$k$ normalization, kernel normalization, or another stochastic normalizer. This perspective separates two design choices: first, how compatibility is measured; second, how compatibility is converted into aggregation weights.

This also clarifies why the relationship between similarity and attention is subtle. A similarity score is usually symmetric if the same representation space and symmetric function are used. Attention, however, is generally asymmetric: queries and keys are produced by different projections, and the normalization is row-wise. Even if $s(q_i,k_j)=s(k_j,q_i)$ were symmetric, the normalized attention matrix need not be symmetric because
\[
    A_{ij}
    =
    \frac{\exp(s(q_i,k_j))}
    {\sum_{\ell}\exp(s(q_i,k_\ell))}
\]
normalizes separately for each query position $i$.

\paragraph{GMA as responsibility-based compatibility.}
GMA follows a different route from direct pairwise similarity. Instead of first forming all $N\times N$ query-key scores, it maps queries and keys into posterior responsibility vectors over $K$ learned Gaussian mixture components.
For a query vector $q_i$ and key vector $k_j$, the induced unnormalized affinity is (Eq.\ref{eq:gma_induced_attention_unnormalised})
\[
    \widetilde A^{\mathrm{GMA}}_{ij}
    =
    \sum_{k=1}^K
    \gamma^Q_{i,k}\gamma^K_{j,k}.
\]
Thus, \textit{GMA compares tokens through their distributions over latent routing components, rather than through a dot product in the projected routing representation space}. Equivalently, the affinity is an inner product in the probability simplex of component responsibilities. This makes GMA a responsibility-space similarity mechanism.

The computational philosophy is also different. Standard attention first forms pairwise query-key compatibilities and then aggregates values. GMA first uses key responsibilities to write values into a compact latent memory (Eq.\ref{eq:gma_latent_aggregation}),
\[
    \tilde V
    =
    (\Gamma^K)^\top V_X,
\]
and then uses query responsibilities to read from this memory (Eq.\ref{eq:gma_normalized_broadcasting}):
\[
    O
    =
    \frac{\Gamma^Q\tilde V}{\Gamma^QZ+\epsilon}.
\]
In this sense, GMA replaces direct pairwise similarity with a latent mixture routing mechanism. The learned Gaussian components define a multi-modal routing space, the key responsibilities construct component-indexed memory slots, and the query responsibilities determine how each token reads from those slots.

\paragraph{Implications for attention design.}
This perspective suggests that new attention mechanisms can be designed by choosing different notions of compatibility or routing geometry. Possible choices include:
\[
\begin{array}{lll}
\text{Dot-product attention} & q^\top k
& \text{inner-product similarity},\\[2mm]
\text{Cosine attention}
& \frac{q^\top k}{\|q\|\|k\|}
& \text{angular similarity},\\[3mm]
\text{Mahalanobis attention}
& -(q-k)^\top\Sigma^{-1}(q-k)
& \text{anisotropic distance},\\[2mm]
\text{RBF-kernel attention}
& \exp(-\|q-k\|^2/2\sigma^2)
& \text{distance-induced similarity},\\[2mm]
\text{Polynomial-kernel attention}
& (q^\top k+c)^p
& \text{higher-order feature similarity},\\[2mm]
\text{Divergence-based attention}
& -D(p_q\|p_k)
& \text{distributional compatibility},\\[2mm]
\text{Transport-based attention}
& -W_p(p_q,p_k)
& \text{geometry-aware distributional distance},\\[2mm]
\text{GMA}
& \sum_k\gamma^Q_{i,k}\gamma^K_{j,k}
& \text{responsibility-space affinity}.
\end{array}
\]
The main challenge is not only to define a meaningful compatibility score, but also to make it computationally tractable. Pairwise similarities usually require an $N \times N$ score matrix. Efficient attention mechanisms therefore need additional structure: sparsity, low rank, kernel feature maps, recurrence, state-space dynamics, or latent routing. GMA belongs to the latent-routing family: it replaces direct token-to-token comparison with interaction through a fixed number of learned probabilistic components.

Overall, attention can be understood as normalized compatibility-based aggregation. Dot-product attention is one important instance of this principle, but not the only one. Distances, similarities, correlations, kernels, divergences, and probabilistic responsibilities all provide possible foundations for attention design. GMA contributes to this broader view by showing how Gaussian-mixture responsibilities can define a linear-time, interpretable attention-style routing mechanism.

\end{document}